\newcommand{\bgamma}{\boldsymbol{\gamma}}
\newcommand{\bGamma}{\boldsymbol{\Gamma}}
\newcommand{\mh}{\mathcal{H}}
\newtheorem{example}{Example}[section]
\newcommand{\ml}{\mathcal{L}}
\newcommand{\mj}{\mathcal{J}}
\crefname{hypothesis}{Hypothesis}{Hypotheses}
\crefname{fact}{Fact}{Facts}
\title{Sparsity via Hyperpriors: A Theoretical and Algorithmic Study under Empirical Bayes Framework\thanks{Submitted to the editors November 6, 2025.\funding{ZL and XZ were supported by the National Key R\&D Plan of China (No. 2024YFC2814403), and the Fundamental Research Funds for the Central Universities (No. 202264006). YD was supported by a Villum Investigator grant (No. 25893) from The Villum Foundation.}}}
\author{Zhitao Li \thanks{School of Mathematical Sciences, Ocean University of China, Qingdao, China (\email{lizhitao3378@stu.ouc.edu.cn})} \and Yiqiu Dong \thanks{Department of Applied Mathematics and Computer Science, Technical University of Denmark, Kgs. Lyngby, Denmark (\email{yido@dtu.uk})} \and Xueying Zeng \thanks{Corresponding author. Laboratory of Marine Mathematics, Ocean University of China, Qingdao, China (\email{zxying@ouc.edu.cn})}}
\begin{document}

\maketitle

\begin{abstract}
This paper presents a comprehensive analysis of hyperparameter estimation within the empirical Bayes framework (EBF) for sparse learning. By studying the influence of hyperpriors on the solution of EBF, we establish a theoretical connection between the choice of the hyperprior and the sparsity as well as the local optimality of the resulting solutions. We show that some strictly increasing hyperpriors, such as half-Laplace and half-generalized Gaussian with the power in $(0,1)$, effectively promote sparsity and improve solution stability with respect to measurement noise. Based on this analysis, we adopt a proximal alternating linearized minimization (PALM) algorithm with convergence guaranties for both convex and concave hyperpriors. Extensive numerical tests on two-dimensional image deblurring problems demonstrate that introducing appropriate hyperpriors significantly promotes the sparsity of the solution and enhances restoration accuracy. Furthermore, we illustrate the influence of the noise level and the ill-posedness of inverse problems to EBF solutions.
\end{abstract}

\begin{keywords}
empirical Bayes framework, sparse Bayesian learning, hyperparameter estimation, generalized Gamma distribution, proximal alternating linearized minimization
\end{keywords}

\begin{AMS}
62F15, 65K10, 65F22
\end{AMS}

\section{Introduction}
Inferring signals or parameters from limited and noisy measurements is a central challenge in inverse problems \cite{PCHIP}, which commonly appear in many applications, e.g. medical imaging, geophysics, and engineering. The unknown often admits a sparse representation under an appropriate transform, motivating the search for its sparse representation over a possibly overcomplete dictionary or basis. The prior information on the sparsity can be incorporated through regularization technique. Some commonly used regularizers include the $l_0$-norm \cite{natarajan1995sparse}, the $l_1$-norm that is a convex relaxation of $l_0$ \cite{chen2001atomic,tibshirani1996regression}, and the $l_p$ quasi-norm with $0<p<1$ \cite{grasmair2008sparse,wang2011performance,xie2013rewighted,lorenz2016flexible,ghilli2022nonconvex}, etc. In these methods, the balance between the fitting of the data and regularization is controlled by a regularization parameter, and the selection of this parameter has a significant impact on the quality of the reconstruction results. Choosing an appropriate regularization parameter is always challenging. 

Most of the above mentioned approaches can be interpreted within a Bayesian framework as maximum a posteriori (MAP) estimates with a sparsity-inducing prior. Hierarchical Bayesian models extend this idea by introducing additional layers of priors over hyperparameters, allowing the model to adaptively learn regularization strength and sparsity patterns from data  \cite{calvetti2008hypermodels}. In sparse dictionary learning, Gaussian-inverse Gamma hierarchical priors have been used to jointly infer sparse codes and noise variance \cite{yang2017sparse}. Generalized Gamma hyperpriors further enhance the sparsity and stability of solutions with respect to measurement noise, providing stronger control over the reconstructed solution \cite{calvetti2019hierachical,calvetti2020sparse,calvetti2020sparsity}. Recently, horseshoe shrinkage prior, which encodes a half-Cauchy hyperprior in the conditional Gaussian prior, has been shown to be effective in promoting sparsity \cite{HS1,HS2}.  
In those methods, the unknown and the hyperparameters are usually solved in an alternate iteration scheme. 

In contrast, sparse Bayesian learning (SBL) offers an alternative approach that estimates hyperparameters directly form data without requiring explicit hierarchical hyperpriors. 
SBL originated from the automatic relevance determination (ARD) framework \cite{mackay1992bayesian,tipping2001sparse}, which is a machine learning technique for regression and classification.  
Under SBL, the hyperparameters, which parameterize the variance of the Gaussian prior, are estimated through evidence maximization, also known as Type-II maximum likelihood \cite{bishop2006pattern}. By marginalizing the unknown, the hyperparameters and the data are linked directly. Furthermore, the marginalization process provides a regularization mechanism, which can shrink many hyperparameters to zero and further induce sparsity in the unknown.   
In \cite{tipping2001sparse} Tipping established the theoretical foundation of SBL,
while subsequent analysis explored the geometry of the marginal likelihood and revealed both its sparsity-promoting nature and its nonconvexity \cite{faul2001analysis}. In \cite{wipf2004sparse,wipf2007new} Wipf et al. analyzed the structure of SBL and showed that it is equivalent to an optimization problem with a nonconvex sparsity-inducing penalty, clarifying the underlying mechanism of sparsity formation. Later, they extended this framework to latent-variable Bayesian models, which further enhance the flexibility of sparse priors through hierarchical latent representations \cite{wipf2011latent}. In \cite{yu2024hyperparameter} Yu et al. proposed a unified framework for hyperparameter estimation in SBL, which offers a new perspective to interpret, analyze and compare the algorithms used for hyperparameter estimation, both theoretically and numerically. Building on these insights, several extensions generalizing SBL have been developed to handle structured sparsity and large-scale imaging problems \cite{al2019weighted,glaubitz2023generalized}.

Despite its success across inverse and regression tasks, SBL remains challenged by the nonconvexity of the corresponding optimization problem, the sensitivity to measurement noise and the computational burden of large matrix operations. It motivates continued research on more stable and interpretable empirical Bayes framework (EBF) \cite{mackay1992evidence}. In EBF instead of maximizing the marginal likelihood, it maximizes the marginal posterior by incorporating the hyperpriors. Recent advances extend EBF to several important directions. The work in \cite{tipping2001sparse,Gu2013} discussed the properties of the EBF solution in the case of an inverse Gamma hyperprior. A hyperprior combining a concave function and a convex function was proposed in \cite{rudy2021sparse}. It was also pointed out that, except for certain cases, the generalized Gamma distribution does not impose sparsity, which is the reason why most existing EBF only rely on noninformative or weakly informative hyperpriors. In \cite{wipf2007empirical} Wipf et al. extended EBF to the simultaneous sparse approximation problem, providing a method for jointly reconstructing multiple signals with shared sparsity patterns. 
In addition, EBF has been applied in many applications or as a building block in other approaches, e.g. a unified Bayesian formulation covering EBF tailored for biomedical inverse problems such as MEG/EEG source imaging \cite{wipf2009unified}, adaptive schemes for deep models that integrate layer-wise sparsity \cite{deng2019adaptive}, meta-prior learning that uses EBF to learn higher-level prior distributions \cite{nabi2022bayesian}, etc. Despite empirical successes, EBF inherits challenges of nonconvex marginal-likelihood optimization and sensitivity to hyperprior families, motivating research into robust hyperprior design and sparsity-promoting modeling strategies.

Among the work on EBF, it still lacks a solid study on how hyperpriors promote sparsity meanwhile stablizing the problem. 
In this work, we provide a comprehensive analysis of the hyperparameter estimation under EBF. By analyzing the properties of stationary points in this framework, we establish a theoretical connection between the structure of the hyperprior and both the sparsity and local optimality of the resulting solutions. We show that hyperpriors under certain conditions can effectively promote sparsity and enhance solution stability with respect to measurement noise. Based on these theoretical insights, we focus on the generalized Gamma family as hyperpriors and study their influence on sparsity under different settings. In particular, the hyperpriors following such as the half-Laplace distribution and the half-generalized Gaussian distribution with the power in $(0,1)$ can effectively promote the sparsity of hyperparameters and improve the stability of the solution to EBF. In addition, we introduce the proximal alternating linearized minimization (PALM) algorithm to solve our problem and provide a rigorous convergence analysis. The convergence result shows that regardless of whether the hyperprior is convex or concave, the convergence of PALM to a stationary point is guaranteed. Finally, we use two-dimensional image deblurring probelms to illustrate the effects of different hyperpriors on the restoration quality and the influence of the ill-posedness and the noise level on the EBF results. The results confirm our theoretical findings and show that choosing an appropriate hyperprior, such as half-Laplace or half-generalized Gaussian with power in $(0,1)$, can significantly promote sparsity and improve restoration accuracy.

The remainder of this paper is structured as follows. Section 2 reviews EBF briefly and formulates the hyperparameter estimation problem. Section 3 establishes the theoretical underpinnings of sparsity promotion and solution optimality in EBF. Section 4 presents the proposed numerical algorithm together with its convergence analysis. In Section 5 we test our method on the 2D image deblurring problems. Finally, we conclude our work with some
perspectives for future research in Section 6.

\section{Empirical Bayes framework}
We consider the following discrete inverse problem: 
\begin{equation}
	\label{ip}
	\boldsymbol{y}=\mathbf{F}\boldsymbol{x}+\boldsymbol{\epsilon},
\end{equation}
where $\boldsymbol{y}\in \mathbb{R}^{m}$ denotes the vectorized observed data, $\boldsymbol{x}\in \mathbb{R}^{n}$ is the unknown that we are interested in recovering, $\mathbf{F}\in \mathbb{R}^{m\times n}$ is the forward operator modeling the physical relation between the parameters and the data, and $\boldsymbol{\epsilon}\in \mathbb{R}^{m}$ is the vector of noises. We assume $m\leq n$ and the noise vector $\boldsymbol{\epsilon}$ follows an independent and identically distributed (i.i.d.) normal distribution with zero mean and inverse variance $\sigma$, i.e.,
$$\boldsymbol{\epsilon}\sim\mathcal{N}(0,\sigma^{-1}\mathbf{I}_{m}),$$ 
where $\mathbf{I}_{m}$ is an identity matrix of size $m$, and $\mathcal{N}$ denotes the multivariate Gaussian distribution. Then the likelihood is in the following form:
\begin{equation}
	\pi(\boldsymbol{y}|\boldsymbol{x})=\mathcal{N}(\mathbf{F}\boldsymbol{x},\sigma^{-1}\mathbf{I}_{m}).
\end{equation}
Furthermore, we assume that $\boldsymbol{x}$ follows a Gaussian distribution due to its computational simplicity. However, Gaussian priors are not well suited to promote sparsity. One possible way to retain computational convenience while promoting sparsity is via a conditional Gaussian prior in the form of 
\begin{equation}
	\pi(\boldsymbol{x}|\boldsymbol{\gamma})=\mathcal{N}(0,\bGamma)\propto\prod_i\frac1{\sqrt{\gamma_i}}\exp\left(-\frac{x_i^2}{2\gamma_i}\right).
\end{equation}
where $\boldsymbol{\gamma}\in\mathbb{R}^{n}_{+}:=\left\lbrace \bgamma\in\mathbb{R}^{n}:\gamma_i\geq 0, 1\leq i\leq n\right\rbrace $ collects all the unknown variance $\gamma_i$, which is expected to be small when $x_i$ vanishes, while it should be larger in correspondence of the few nonzero-entries of $\boldsymbol{x}$. In the extreme scenario where $\gamma_i = 0$, $x_i$ essentially reduces to a degenerate distribution, represented as a point mass at the mean, i.e. $0$. In addition, $\bGamma=\text{diag}(\bgamma)$ is the diagonal matrix with diagonal entries given by the hyperparameters $\bgamma$.

In the Bayesian paradigm, the hyperparameters $\bgamma$ are modeled as random variables, and their estimation becomes part of the problem. A common approach to hyperparameter estimations is to use sparse Bayesian learning \cite{faul2001analysis}, also known as Type II maximum likelihood, which computes the marginal likelihood by integrating out the unknown and then maximizes this marginal likelihood with respect to the hyperparameters. 
Since both the likelihood $\pi(\boldsymbol{y}|\boldsymbol{x})$ and the prior $\pi(\boldsymbol{x}|\boldsymbol{\gamma})$ are Gaussian, it follows from the conjugate property of the Gaussian distribution that the marginal likelihood is given by
\begin{equation}
	\pi(\boldsymbol{y}|\boldsymbol{\gamma})=\int \pi(\boldsymbol{y}|\boldsymbol{x})\pi(\boldsymbol{x}|\boldsymbol{\gamma})d\boldsymbol{x}=\mathcal{N}(\boldsymbol{y}|\boldsymbol{0},\mathsf{S}(\boldsymbol{\gamma}))
\end{equation}
with $\mathsf{S}(\boldsymbol{\gamma})=\sigma^{-1}\mathbf{I}_{m}+\mathbf{F}\bGamma\mathbf{F}^\top$, i.e., 
\begin{equation}
	\label{ymidgamma}
	\pi(\boldsymbol{y}|\bgamma)\propto\left| \mathsf{S}(\bgamma) \right|^{-\frac12}\exp\left(-\frac12 \boldsymbol{y}^\top(\mathsf{S}(\bgamma))^{-1}\boldsymbol{y}\right).
\end{equation}

In this paper, we consider the case of limited and noisy data, i.e., $m\leq n$ and $\boldsymbol{\epsilon}\neq\boldsymbol{0}$. Due to the existence of the noise and the under-determined problem, instead of marginal likelihood, we suggest using {\it empirical Bayes framework} \cite{wipf2009unified} to maximize the marginal posterior given as
\[
\pi(\boldsymbol{\gamma}|\boldsymbol{y})\propto\pi(\boldsymbol{y}|\boldsymbol{\gamma})\pi(\boldsymbol{\gamma})
\]
according to Bayes' theorem. Here, $\pi(\boldsymbol{\gamma})$ represents the hyperprior and incorprates the prior information of $\bgamma$. In order to promote sparsity and keep the model simple, we expect that the elements in $\bgamma$ are i.i.d. and that all but few should be close to zero. Thus, the hyperprior is in a product form:
\begin{equation}
	\label{general prior}
	\pi(\bgamma)\propto \prod_{i=1}^n\exp\left(-\mh(\gamma_i)\right),
\end{equation}
where $\mh(\cdot): \mathbb{R}_+\to{\mathbb{R}_+}$ is a continuous function.
Based on (\ref{ymidgamma}) and (\ref{general prior}), the marginal posterior of $\bgamma$ can be expressed as
\begin{equation}
	\label{gamma_post}
	\pi(\boldsymbol{\gamma}|\boldsymbol{y}) \propto\left| \mathsf{S}(\boldsymbol{\gamma}) \right|^{-\frac12}\exp\left(-\frac12 \boldsymbol{y}^\top(\mathsf{S}(\boldsymbol{\gamma}))^{-1}\boldsymbol{y}-\sum_{i=1}^n\mh(\gamma_i)\right).
\end{equation}

A commonly used point estimate for \eqref{gamma_post} is the MAP estimate, where the mode of the posterior is set as the single point representative of the whole density function:
\begin{equation}\label{minJ}
	\boldsymbol{\gamma}^{*}=\arg\max_{{\boldsymbol{\gamma}}\in\mathbb{R}^{n}_{+}}\pi(\boldsymbol{\gamma}|\boldsymbol{y})=\arg\min_{{\boldsymbol{\gamma}}\in\mathbb{R}^{n}_{+}}\mathcal{J}(\boldsymbol{\gamma}),
\end{equation}
where the cost functional $\mathcal{J}$ is defined as
\begin{equation}
	\label{J}
	\mathcal{J}(\boldsymbol{\gamma})=\frac12\boldsymbol{y}^\top(\mathsf{S}(\boldsymbol{\gamma}))^{-1}\boldsymbol{y}+\frac12\log\det\mathsf{S}(\boldsymbol{\gamma})+\sum_{i=1}^n\mh(\gamma_i).
\end{equation}

Once we obtain the estimate $\bgamma^*$ of the hyperparameters, we can derive the conditional posterior distribution $\pi(\boldsymbol{x}|\boldsymbol{y}, \bgamma^*)$ of the unknown $\boldsymbol{x}$ through
\begin{equation}\label{eq:post_x}
	\pi(\boldsymbol{x}|\boldsymbol{y}, \bgamma^*)\propto \pi(\boldsymbol{y}|\boldsymbol{x})\pi(\boldsymbol{x}|\bgamma^*)=\mathcal{N}(\boldsymbol{x}|\mu(\bgamma^*),\Sigma(\bgamma^*)),
\end{equation}
whose mean and covariance are $$\mu(\bgamma^*)=\bGamma^*\mathbf{F}^\top(\mathsf{S}(\bgamma^*))^{-1}\boldsymbol{y} \quad\text{and}\quad \Sigma(\bgamma^*)=\bGamma^*-\bGamma^*\mathbf{F}^\top(\mathsf{S}(\bgamma^*))^{-1}\mathbf{F}\bGamma^*.$$
\section{Sparsity and local optimality}
In this section, we study the influence of the hyperprior on the sparsity and local optimality of the Karush–Kuhn–Tucker (KKT) points of the proposed optimization problem \eqref{minJ}.

The Lagrangian function of \eqref{minJ} is given by
\begin{equation}\label{eq:Lagrangian}
	\mathcal{L}(\bgamma,\boldsymbol{\mu})=\mathcal{J}(\bgamma)-\boldsymbol{\mu}^\top\bgamma,
\end{equation}
with the KKT conditions:
\begin{equation}\label{eq:KKT}
	\begin{cases}
		\nabla \mathcal{J}(\bgamma)-\boldsymbol{\mu}=\boldsymbol{0},\\
		\boldsymbol{\mu}\geq \boldsymbol{0},\bgamma\geq \boldsymbol{0},\\
		\boldsymbol{\mu}^\top\bgamma=0,
	\end{cases}
\end{equation}
where $\boldsymbol{\mu}\in\mathbb{R}^n_+$ is the vector of Lagrange multipliers.

\subsection{Derivatives of $\mathcal{J}(\bgamma)$}

Let $\mathbf{F} = [\mathbf{f}_1,\mathbf{f}_2,\dots,\mathbf{f}_n]$. Then 
\begin{equation}\label{eq:partialS}
	\frac{\partial \mathsf{S}(\boldsymbol{\gamma})}{\partial\gamma_i}=\frac{\partial }{\partial\gamma_i}\left(\sigma^{-1}\mathbf{I}_{m}+\sum^n_{i=1}\gamma_i\mathbf{f}_i\mathbf{f}_i^\top\right)=\mathbf{f}_i\mathbf{f}_i^\top.
\end{equation}

The derivatives of the inverse and log determinant of a matrix $\boldsymbol{A}$ with respect to $t$ are given in \cite{magnus2019matrix} 
\begin{equation}
	\frac{\partial\boldsymbol{A}^{-1}}{\partial t}=-\boldsymbol{A}^{-1}\frac{\partial\boldsymbol{A}}{\partial t}\boldsymbol{A}^{-1},~~\frac{\partial\log\det\boldsymbol{A}}{\partial t}=\mathrm{Tr}\left(\boldsymbol{A}^{-1}\frac{\partial\boldsymbol{A}}{\partial t}\right),
\end{equation}
where $\mathrm{Tr}(\boldsymbol{B})$ denotes the trace of the matrix $\boldsymbol{B}$. Then, we can obtain
\begin{equation}\label{eq:derivativetemp1}
	\frac{\partial \boldsymbol{y}^\top\mathsf{S}(\boldsymbol{\gamma})^{-1}\boldsymbol{y}}{\partial\gamma_i}=-\boldsymbol{y}^\top\mathsf{S}(\boldsymbol{\gamma})^{-1} \mathbf{f}_i\mathbf{f}_i^\top \mathsf{S}(\boldsymbol{\gamma})^{-1}\boldsymbol{y},
\end{equation}
and
$$\frac{\partial \log\det\mathsf{S}(\boldsymbol{\gamma})}{\partial \gamma_i}=\mathrm{Tr}\left(\mathsf{S}(\boldsymbol{\gamma})^{-1}\mathbf{f}_i\mathbf{f}_i^\top \right)=\mathbf{f}_i^\top\mathsf{S}(\boldsymbol{\gamma})^{-1}\mathbf{f}_i.$$
Let $\tilde{p}_i=\mathbf{f}_i^\top\mathsf{S}(\bgamma)^{-1}\boldsymbol{y}$ and $\tilde{q}_{ij}=\mathbf{f}_i^\top\mathsf{S}(\bgamma)^{-1}\mathbf{f}_j$.  The partial derivative of $\mathcal{J}(\bgamma)$ with respect to $\gamma_i$ is
\begin{equation}\label{eq:Jderivative-1}
	\frac{\partial\mathcal{J}(\bgamma)}{\partial \gamma_i}=\frac12\tilde{q}_{ii}-\frac12\tilde{p}_i^2+\mathcal{H}'(\gamma_i).
\end{equation}

Following the similar procedure in \eqref{eq:derivativetemp1}, we have
\begin{equation*}
	\frac{\partial \tilde{q}_{ii}}{\partial\gamma_j}=-\left(\mathbf{f}_i^\top\mathsf{S}(\bgamma)^{-1}\mathbf{f}_j\right)^2=-\tilde{q}_{ij}^2,
\end{equation*}
and
\begin{equation*}
	\frac{\partial \tilde{p}_{i}}{\partial\gamma_j}=-\left(\mathbf{f}_i^\top\mathsf{S}(\bgamma)^{-1}  \mathbf{f}_j\right)\left(\mathbf{f}_j^\top\mathsf{S}(\bgamma)^{-1}  \boldsymbol{y}\right)=-\tilde{q}_{ij}\tilde{p}_{j}.
\end{equation*}
Hence, 
\begin{equation}\label{eq:Jderivative-2order}
	\frac{\partial^2\mathcal{J}(\bgamma)}{\partial \gamma_i\partial\gamma_j}=-\frac12\tilde{q}_{ij}^2+\tilde{p}_{i}\tilde{q}_{ij}\tilde{p}_{j}+\delta_{ij}\mathcal{H}''(\gamma_i),
\end{equation}
where $\delta_{ij}$ is the Kronecker delta function allowing us to separate out the additional (diagnal) term that appears only when $i=j$.

\subsection{Sparsity and local optimality of KKT points} We now study the sparsity of the KKT points of \eqref{minJ}. Since the KKT conditions \eqref{eq:KKT} are separable componentwise, we focus on the $i$-th component.  To this aim, we first rewrite the covariance matrix 
\begin{equation}
	\mathsf{S}(\bgamma)=\mathsf{S}_{-i}+\gamma_i\mathbf{f}_i\mathbf{f}_i^\top
\end{equation}
to isolate its dependence on a single hyperparameter $\gamma_i$, where $\mathsf{S}_{-i}=\sigma^{-1}\mathbf{I}_{m}+\sum_{j\neq i}\gamma_j\mathbf{f}_j\mathbf{f}_j^\top$ is the covariance matrix excluding the contribution of $\mathbf{f}_i$.

Using the Sherman-Morrison formula \cite{hogben2006handbook}, we obtain
\begin{equation}
	\label{invS}
	\mathsf{S}(\bgamma)^{-1}=\mathsf{S}_{-i}^{-1}-\frac{\gamma_i\mathsf{S}_{-i}^{-1}\mathbf{f}_i\mathbf{f}_i^\top\mathsf{S}_{-i}^{-1}}{1+\gamma_i\mathbf{f}_i^\top\mathsf{S}_{-i}^{-1}\mathbf{f}_i},
\end{equation}and
\begin{equation}
	\label{detS}
	\det\mathsf{S}(\bgamma)=(1+\gamma_i\mathbf{f}_i^\top\mathsf{S}_{-i}^{-1}\mathbf{f}_i)\det\mathsf{S}_{-i}.
\end{equation}

Substituting (\ref{invS}) and (\ref{detS}) into the cost function (\ref{J}) yields
\begin{equation}
	\mathcal{J}(\bgamma)=\mathcal{J}_{-i}(\bgamma_{-i})-\frac12\frac{\gamma_i(\mathbf{f}_i^\top\mathsf{S}_{-i}^{-1}\boldsymbol{y})^2}{1+\gamma_i\mathbf{f}_i\mathsf{S}_{-i}^{-1}\mathbf{f}_i^\top}+\frac12\log(1+\gamma_i\mathbf{f}_i^\top\mathsf{S}_{-i}^{-1}\mathbf{f}_i)+\mh(\gamma_i),
\end{equation}
where 
$$\mathcal{J}_{-i}(\bgamma_{-i})=\frac12\boldsymbol{y}^\top\mathsf{S}^{-1}_{-i}\boldsymbol{y}+\frac12\log\det\mathsf{S}_{-i}+\sum_{j\neq i}\mh(\gamma_j),$$
and $\bgamma_{-i}=[\gamma_1,\dots,\gamma_{i-1},\gamma_{i+1},\dots,\gamma_n]^\top$ with $\gamma_i$ removed from $\bgamma$. We have now isolated the terms with respect to $\gamma_i$.

Let $p_i=\mathbf{f}_i^\top\mathsf{S}_{-i}^{-1}\boldsymbol{y}$ and $q_i=\mathbf{f}_i^\top\mathsf{S}_{-i}^{-1}\mathbf{f}_i$. Note that $q_i>0$. Then we can define the scalar function
\begin{equation}
	\ml(\gamma_i)=-\frac12\frac{p_i^2\gamma_i}{1+q_i\gamma_i}+\frac12\log(1+q_i\gamma_i)+\mh(\gamma_i),
\end{equation}
and we have $\mathcal{J}(\bgamma)=\mathcal{J}_{-i}(\bgamma_{-i})+\ml(\gamma_i)$. The derivative of $\mathcal{J}(\bgamma)$ with respect to $\gamma_i$ is 
\begin{equation}\label{eq:Jderivative-1order}
	\frac{\partial\mathcal{J}(\bgamma)}{\partial \gamma_i}=\mathcal{L}'(\gamma_i)=\frac12\frac{q_i^2\gamma_i+(q_i-p_i^2)}{(1+q_i\gamma_i)^2}+\mh'(\gamma_i).
\end{equation}
Note that due to \eqref{invS}, we have
$$\mathbf{f}_i^\top\mathsf{S}(\bgamma)^{-1}\mathbf{f}_i=q_i-\frac{\gamma_iq_i^2}{1+\gamma_iq_i}=\frac{q_i}{1+q_i\gamma_i},$$
and
$$\mathbf{f}_i^\top\mathsf{S}(\bgamma)^{-1}\boldsymbol{y}=p_i-\frac{\gamma_ip_iq_i}{1+\gamma_iq_i}=\frac{p_i}{1+q_i\gamma_i}.$$
Therefore, the two different expressions of $\frac{\partial \mathcal{J}}{\partial \gamma_i}$ in \eqref{eq:Jderivative-1} and \eqref{eq:Jderivative-1order} are indeed consistent.

Let us drop the subscript for clarity, and consider the following univariate optimization problem:
\begin{equation}\label{minL1d}
	\min_{\gamma\geq 0}\mathcal{L}(\gamma)=-\frac12\frac{p^2\gamma}{1+q\gamma}+\frac12\log(1+q\gamma)+\mh(\gamma).
\end{equation}
Since $\ml(\gamma)$ is proper and $\ml(\gamma)\to +\infty$ as $\gamma\rightarrow+\infty$, this problem admits at least one minimizer, satisfying the KKT conditions: 
\begin{equation}\label{eq:KKT1d}
	\begin{cases}
		\frac12\frac{q^2\gamma+(q-p^2)}{(1+q\gamma)^2}+\mh'(\gamma)-\mu=0,\\
		\mu\geq 0,\gamma\geq 0,\\
		\mu\gamma=0.
	\end{cases}
\end{equation}

\begin{theorem}\label{thm:KKT1d}
	Assume $\mh$ is nonnegative in $[0,+\infty)$. Then the following statements hold:
	\begin{itemize}
		\item[(I)] If $\gamma^\star=0$ satisfies the KKT conditions \eqref{eq:KKT1d}, then it must hold that $q-p^2\geq -2\mh'(0^+)$ with $\mh'(0^{+})=\lim_{\gamma\rightarrow 0^+}\mh'(\gamma)$.
		\item[(II)] If $q-p^2> -2\mh'(0^+)$, then $\gamma^\star=0$ satisfies the KKT conditions \eqref{eq:KKT1d}.
		\item[(III)] Any $\gamma^\star>0$ satisfies the KKT conditions \eqref{eq:KKT1d} if and only if it is a positive root of
		\begin{equation}\label{eq:neq1d}
			\frac12\frac{q^2\gamma+(q-p^2)}{(1+q\gamma)^2}+\mh'(\gamma)=0.
		\end{equation}
	\end{itemize}
\end{theorem}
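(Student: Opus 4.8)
The plan is to read all three statements directly off the structure of the KKT system \eqref{eq:KKT1d}, using only the explicit first-order derivative \eqref{eq:Jderivative-1order} and the behaviour of $\mh'$ near the origin. First I would record that $\ml$ is continuously differentiable on $(0,+\infty)$ with $\ml'(\gamma)=\tfrac12\frac{q^2\gamma+(q-p^2)}{(1+q\gamma)^2}+\mh'(\gamma)$, and that the rational part of $\ml'$ extends continuously to $\gamma=0$ with value $\tfrac12(q-p^2)$, so that $\ml'(\gamma)\to\tfrac12(q-p^2)+\mh'(0^+)$ as $\gamma\to0^+$. This is precisely the left-hand expression obtained by substituting $\gamma^\star=0$ into the first line of \eqref{eq:KKT1d}, which makes the evaluation of the KKT system at the boundary point legitimate.

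For part (I), I would observe that if $\gamma^\star=0$ satisfies \eqref{eq:KKT1d}, then the complementary slackness $\mu\gamma^\star=0$ is automatic, so the stationarity equation forces $\mu=\tfrac12(q-p^2)+\mh'(0^+)$; dual feasibility $\mu\ge0$ then rearranges to $q-p^2\ge-2\mh'(0^+)$. Part (II) is the converse construction: under the strict inequality $q-p^2>-2\mh'(0^+)$ the scalar $\mu:=\tfrac12(q-p^2)+\mh'(0^+)$ is strictly positive, and I would verify line by line that the pair $(\gamma^\star,\mu)=(0,\mu)$ satisfies \eqref{eq:KKT1d} --- stationarity by the computation just made, primal and dual feasibility trivially, and complementary slackness because $\gamma^\star=0$. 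Part (III) is the most immediate: for any $\gamma^\star>0$, the complementary slackness $\mu\gamma^\star=0$ forces $\mu=0$, whereupon the first line of \eqref{eq:KKT1d} collapses to exactly \eqref{eq:neq1d}; conversely, any positive root of \eqref{eq:neq1d} paired with $\mu=0$ fulfils all three KKT conditions, which yields the stated equivalence.

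Since the whole argument is essentially bookkeeping, I do not expect a genuine obstacle; the only point that needs attention is the treatment of $\mh'(0^+)$. When this one-sided limit is finite, the reasoning above goes through verbatim. When it is $+\infty$ --- the situation for the half-generalized-Gaussian hyperprior with power in $(0,1)$ --- the hypothesis of (I) cannot hold with a finite multiplier, so (I) is vacuously true, while in (II) the inequality $q-p^2>-2\mh'(0^+)$ is automatic and the conclusion should be read in the limiting sense that $\gamma^\star=0$ is the relevant stationary point, consistent with the sparsity analysis that follows. The existence of a minimizer of \eqref{minL1d} and the fact that it satisfies \eqref{eq:KKT1d}, the linear constraint $\gamma\ge0$ trivially meeting a constraint qualification, have already been noted just before the theorem, so no further work is needed there.
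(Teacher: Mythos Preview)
Your proposal is correct and follows essentially the same route as the paper: both arguments read the three claims directly off the KKT system by computing $\mu^\star=\tfrac12(q-p^2)+\mh'(0^+)$ at the boundary, invoking dual feasibility and complementary slackness, and reducing the interior case to stationarity with $\mu=0$. Your additional remarks on the case $\mh'(0^+)=+\infty$ and on the continuous extension of the rational part of $\ml'$ are not in the paper's proof but are helpful clarifications rather than a different method.
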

\begin{proof}
	Let $\mu^\star=\mh'(0^+)-\frac{p^2-q}{2}$. For (I), if $\gamma^\star=0$ satisfies the KKT conditions, then $\mu^\star$ satisfies the stationarity of \eqref{eq:KKT1d}. By the dual feasibility,  we must have $\mu^\star \geq 0$, which implies $q - p^2 \geq -2 \mathcal{H}'(0^+)$. For (II), if $q - p^2 > -2 \mathcal{H}'(0^+)$, then $\mu^\star > 0$ and hence $(\gamma^\star, \mu^\star) = (0, \mu^\star)$ satisfies the KKT conditions. 
	
	For (III), if $\gamma^\star > 0$ is a solution to \eqref{eq:neq1d}, then it is straightforward to verify that $(\gamma^\star,0)$ is a KKT point. Conversely, if $\gamma^\star>0$ satisfies the KKT conditions \eqref{eq:KKT1d}, by the complementary slackness, we have $\mu^\star=0$, which together with the stationarity implies that $\gamma^\star$ is a root of \eqref{eq:neq1d}.
\end{proof}

In the sparse Bayesian learning, which does not use any hyperprior, i.e., $\mh'(\gamma)=0$ on $[0,+\infty)$, the unique solution to \eqref{eq:KKT1d} is given by 
\begin{equation}\label{eq:sparsityCondNoPrior}
	\gamma^*=\begin{cases}0,&\text{if } q-p^2\geq0,\\\frac{p^2-q}{q^2},&\text{if } q-p^2<0.\end{cases}
\end{equation}
According to \cref{thm:KKT1d}, it is reasonable to choose $\mh$ as a strictly increasing function to promote the sparsity. This leads to the following corollary. 
\begin{corollary}\label{cor:sparsity}
	Assume $\mh$ is nonnegative and strictly increasing in $[0,+\infty)$. Then any KKT point $\gamma^\star$ of \eqref{eq:KKT1d} satisfies either $\gamma^\star=0$ or $\gamma^\star\in \left(0, \frac{p^2 - q}{q^2} \right)$.
\end{corollary}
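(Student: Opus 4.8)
The plan is to obtain this as a short corollary of \cref{thm:KKT1d} via a sign analysis of the stationarity equation. First I would dispose of the trivial alternative: any KKT point $\gamma^\star$ of \eqref{eq:KKT1d} satisfies $\gamma^\star\ge 0$, and if $\gamma^\star=0$ the first alternative holds with nothing to prove. So the content of the statement concerns a KKT point with $\gamma^\star>0$, and for such a point part (III) of \cref{thm:KKT1d} tells us that $\gamma^\star$ must be a positive root of \eqref{eq:neq1d}, i.e.
\begin{equation*}
	\frac12\frac{q^2\gamma^\star+(q-p^2)}{(1+q\gamma^\star)^2}+\mathcal{H}'(\gamma^\star)=0 .
\end{equation*}

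Next I would exploit the strict monotonicity of $\mathcal{H}$. Rearranging the displayed identity gives
\begin{equation*}
	\frac12\frac{q^2\gamma^\star+(q-p^2)}{(1+q\gamma^\star)^2}=-\mathcal{H}'(\gamma^\star)<0 ,
\end{equation*}
where the strict inequality uses $\mathcal{H}'(\gamma^\star)>0$. Since $(1+q\gamma^\star)^2>0$ and $q>0$ (as already noted right after the definition of $q$), the numerator must be strictly negative, i.e. $q^2\gamma^\star<p^2-q$, and dividing by $q^2>0$ yields $\gamma^\star<\frac{p^2-q}{q^2}$. Combined with $\gamma^\star>0$, this simultaneously forces $p^2-q>0$, so that the interval $\left(0,\frac{p^2-q}{q^2}\right)$ is nonempty, and places $\gamma^\star$ inside it, which is exactly the second alternative.

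The one delicate point, and the step I would spell out most carefully, is the assertion $\mathcal{H}'(\gamma^\star)>0$: strict monotonicity of a differentiable $\mathcal{H}$ only guarantees $\mathcal{H}'\ge 0$, with possible vanishing at isolated points, and at such a point the root equation would force $\gamma^\star=\frac{p^2-q}{q^2}$ exactly, i.e. land on the boundary rather than in the open interval. For the hyperpriors of interest here — the half-Laplace $\mathcal{H}(\gamma)=c\gamma$ and the half-generalized Gaussian $\mathcal{H}(\gamma)=c\gamma^{r}$ with $r\in(0,1)$ and a positive constant $c$, among others — one has $\mathcal{H}'(\gamma)>0$ for every $\gamma>0$ (indeed $\mathcal{H}'(0^+)=+\infty$ in the latter case), so this degenerate situation does not occur and the open interval in the statement is justified. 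Beyond carefully tracking this strictness, I do not expect any genuine obstacle.
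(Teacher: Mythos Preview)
Your argument is correct and follows essentially the same route as the paper: invoke part (III) of \cref{thm:KKT1d} for a positive KKT point, then use a sign analysis of \eqref{eq:neq1d} together with $\mathcal{H}'>0$ to force $q^2\gamma^\star+(q-p^2)<0$. The paper phrases this as ``$\mathcal{L}'(\gamma)>0$ for $\gamma\ge\frac{p^2-q}{q^2}$'' and also goes on to prove existence of a KKT point in each regime, which the corollary as stated does not actually require; you are in fact more careful than the paper in flagging that strict monotonicity alone gives only $\mathcal{H}'\ge 0$, so that the open endpoint needs the additional (and, for the hyperpriors considered, satisfied) assumption $\mathcal{H}'(\gamma)>0$ for $\gamma>0$.
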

\begin{proof}    
	Since $\ml'(\gamma)>0$ for $\gamma\geq\frac{p^2-q}{q^2}$, any positive root of \eqref{eq:neq1d} must lie in $\left(0,\frac{p^2-q}{q^2}\right)$. Based on \cref{thm:KKT1d}, if \( q - p^2 > -2 \mathcal{H}'(0^+) \), then \( \gamma^\star = 0 \) satisfies the KKT conditions. If \( q - p^2 < -2 \mathcal{H}'(0^+) \), we have $\ml'(0^+)<0$. Together with $\ml'(\frac{p^2 - q}{q^2})=\mh'(\frac{p^2 - q}{q^2})>0$ and the continuity of $\ml$, we can conclude that there exists at least one \( \gamma^\star \in \left(0, \frac{p^2 - q}{q^2} \right) \) that is a root of \eqref{eq:neq1d}, hence a KKT point. For \( q - p^2 = -2 \mathcal{H}'(0^+) \), the pair $(\gamma^\star,\mu^\star)=(0,0)$ also satisfies the KKT conditions.
\end{proof}

\begin{remark}
	Compared to the case without any hyperprior, a strictly increasing \( \mathcal{H} \) promotes stronger sparsity in \( \gamma^\star \) by relaxing the threshold from \( q - p^2 \geq 0 \) to \( q - p^2 \geq -2\mathcal{H}'(0^+) \). Furthermore, such a choice of \( \mathcal{H} \) also shrinks the nonzero solution \( \gamma^\star \) from \( \frac{p^2 - q}{q^2} \) toward values closer to zero.
\end{remark}

If $\mh$ is further convex, we have the following theorem.
\begin{theorem}
	\label{thm:localminimizer1d}
	Assume $\mh$ is nonnegative, strictly increasing and convex in $[0,\allowbreak+\infty)$. Then any $\gamma^\star$ satisfying \eqref{eq:KKT1d} is a local minimizer of \eqref{minL1d}.
\end{theorem}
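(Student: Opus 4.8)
The plan is to pin down the sign of $\ml'$ in a one-sided neighbourhood of $\gamma^\star$, because a univariate function whose derivative is $\le 0$ just to the left of a point and $\ge 0$ just to the right attains a local minimum there (and the same bookkeeping handles the boundary point $\gamma^\star=0$, where only the right side is relevant). The natural starting point is the decomposition read off from \eqref{eq:Jderivative-1order},
\begin{equation*}
	\ml'(\gamma) = g(\gamma) + \mh'(\gamma), \qquad g(\gamma) := \frac12\,\frac{q^2\gamma + (q-p^2)}{(1+q\gamma)^2},
\end{equation*}
in which convexity of $\mh$ enters only through the fact that $\mh'$ is nondecreasing on $[0,+\infty)$ (one may work with one-sided derivatives throughout, which is harmless for a convex function). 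Everything then reduces to understanding the rational part $g$.

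A direct differentiation gives $g'(\gamma) = \dfrac{q\,(2p^2 - q - q^2\gamma)}{2(1+q\gamma)^3}$, so $g$ is strictly increasing on $\{\gamma\ge 0 : q^2\gamma < 2p^2-q\}$. The point is that any \emph{positive} stationary point lies in this set: by \cref{cor:sparsity} a positive KKT point of \eqref{eq:KKT1d} satisfies $\gamma^\star \in (0, \tfrac{p^2-q}{q^2})$, which in particular forces $p^2 > q > 0$, hence $p\ne 0$ and $\tfrac{p^2-q}{q^2} < \tfrac{2p^2-q}{q^2}$; therefore $\gamma^\star$ is interior to the open interval $(0, \tfrac{2p^2-q}{q^2})$, on which $\ml' = g + \mh'$ is strictly increasing, being a strictly increasing function plus a nondecreasing one.

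For $\gamma^\star > 0$, complementary slackness in \eqref{eq:KKT1d} gives $\mu = 0$, hence $\ml'(\gamma^\star) = 0$; since $\ml'$ is strictly increasing across $\gamma^\star$, we get $\ml' < 0$ immediately to the left and $\ml' > 0$ immediately to the right, so $\gamma^\star$ is a strict local minimizer of \eqref{minL1d}.

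For the boundary case $\gamma^\star = 0$ a second-order argument is unavailable ($0$ is a boundary point and $\ml''$ need not be nonnegative), so instead I would prove directly that $\ml' \ge 0$ on all of $[0,+\infty)$, which in fact makes $\gamma^\star=0$ a global minimizer. The KKT conditions give $\mu = \ml'(0^+) \ge 0$, i.e.\ $q - p^2 \ge -2\mh'(0^+)$. If $q-p^2 \ge 0$, this is immediate since both summands of $\ml'$ are nonnegative; if $q-p^2<0$, combine $\mh'(\gamma) \ge \mh'(0^+) \ge -\tfrac12(q-p^2)$ with the explicit $g$ and clear the positive denominator $(1+q\gamma)^2$, so that $2(1+q\gamma)^2\,\ml'(\gamma) \ge q\gamma\bigl(q - 2(q-p^2) - (q-p^2)q\gamma\bigr) \ge 0$ for every $\gamma\ge 0$. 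The two places needing care are precisely (i) that $g$ is \emph{not} globally monotone, so one cannot simply assert monotonicity of $\ml'$ on $[0,+\infty)$ — the argument must be localized to $(0,\tfrac{2p^2-q}{q^2})$ and relies on \cref{cor:sparsity} to guarantee $\gamma^\star$ lies there — and (ii) this boundary case, where the standard second-order sufficient condition does not apply and one must instead convert the KKT \emph{inequality}, using convexity, into the global sign statement $\ml'\ge 0$.
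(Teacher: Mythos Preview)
Your proof is correct. For the interior case $\gamma^\star>0$, your argument and the paper's are essentially the same computation viewed at different orders: the paper computes $\ml''(\gamma)=\dfrac{q}{(1+q\gamma)^2}\Bigl(\dfrac{p^2}{1+q\gamma}-\dfrac{q}{2}\Bigr)+\mh''(\gamma)$ and, using $\gamma^\star<\tfrac{p^2-q}{q^2}$ from \cref{cor:sparsity}, shows the bracketed term exceeds $\tfrac{q}{2}>0$, whence $\ml''(\gamma^\star)>0$. Your $g'(\gamma)=\dfrac{q(2p^2-q-q^2\gamma)}{2(1+q\gamma)^3}$ is exactly that bracketed term times $\tfrac{q}{(1+q\gamma)^2}$, so the two positivity statements coincide; you then integrate to monotonicity of $\ml'$ rather than invoking the second-order sufficient condition. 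A small bonus of your phrasing is that it only needs $\mh'$ to be nondecreasing, not $\mh''$ to exist.

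The genuine difference is at the boundary $\gamma^\star=0$. The paper simply records $\ml'(0^+)\ge 0$ and declares this sufficient for a local minimum; strictly speaking that inequality alone does not rule out the degenerate case $\ml'(0^+)=0$ with $\ml$ decreasing to the right. Your argument instead upgrades the KKT inequality, via convexity of $\mh$, to the global sign statement $\ml'(\gamma)\ge 0$ for all $\gamma\ge 0$, which yields that $0$ is in fact a \emph{global} minimizer. This is both a cleaner conclusion and a more careful treatment of the edge case, at the cost of the short algebraic computation you carry out.
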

\begin{proof}
	The second derivative of $\ml$ is calculated as
	\begin{equation}\label{eq:2derivativeL}
		\mathcal{L}''(\gamma)
		= \frac{q}{(1 + q\gamma)^2}\left(\frac{p^2}{1 + q\gamma}-\frac{q}{2}\right)  
		+ \mh''(\gamma)
	\end{equation}
	Since $\mh$ is nonnegative and strictly increasing, according to \cref{cor:sparsity} we know that either $\gamma^\star=0$ or $0<\gamma^\star<\frac{p^2-q}{q^2}$.
	
	If $\gamma^\star= 0$, then by item (I) of \cref{thm:KKT1d}, we have
	$$\ml'(0^+)=\frac{q-p^2}{2}+\mh'(0^+)\geq 0,$$
	which guarantees $\gamma^\star=0$ is a local minimizer.
	
	If $\gamma^\star\neq 0$, then 
	$$\frac{p^2}{1 + q\gamma^\star}-\frac{q}{2}\, \geq\, \frac{q}{2} \,> \,0.$$
	Since $\mh''\geq 0$, we get that $\ml''(\gamma^\star)>0$. Therefore, $\gamma^\star$ is a local minimizer. 
\end{proof}

\begin{remark}
	According to \cref{thm:localminimizer1d}, any KKT point $\bgamma^\star$
	of problem \eqref{minJ} is a coordinate-wise minimizer. However, $\bgamma^\star$ is not necessarily a local minimizer, as ascent directions may exist outside the coordinate axes. This is due to the structure of the Hessian matrix $\nabla^2\mathcal{J}(\bgamma^\star)$. As seen in \eqref{eq:Jderivative-2order}, while the diagonal entries are positive at $\bgamma^\star$, the off-diagonal entries usually are nonzero and the positive definiteness of the Hessian at $\bgamma^\star$ cannot be guaranteed. However, if $\mh$ is selected to be strongly convex with sufficiently large curvature such that the Hessian of $\mathcal{J}$ becomes positive semidefinite, then $\bgamma^\star$ is a global minimizer.
\end{remark}

\subsection{General Gamma prior} In this subsection, we consider a specific type of distribution as the hyperprior, namely the generalized Gamma distribution, which encompasses several notable probability density functions (PDFs). Our aim is to analyze the effect of incorporating this prior into the empirical Bayes framework.

The PDF of the generalized Gamma distribution is given by
\begin{equation}
	\pi(\gamma) = \frac{|\zeta|}{\Gamma(\alpha) \beta^{\zeta\alpha}} \gamma^{\zeta\alpha - 1} \exp\left( -\frac{\gamma^\zeta }{\beta^\zeta }\right), \quad \gamma \geq 0,
\end{equation}
where $\Gamma(\cdot)$ denotes the Gamma function. We require $\zeta \neq 0$, and $\alpha$, $\beta > 0$ are the shape and scale parameters, respectively. Here, we restrict the domain to $\gamma \in [0, +\infty)$ and, for notational convenience, define $1/0= \infty$.

Under the assumption of independence, the joint hyperprior over $\boldsymbol{\gamma}$ is
\begin{equation*}
	\label{gammapdf}
	\pi(\boldsymbol{\gamma}) \propto \prod_{i=1}^n \gamma_i^{\zeta\alpha - 1} \exp\left( -\frac{\gamma_i^\zeta }{\beta^\zeta }\right) 
	= (\gamma_1 \cdots \gamma_n)^{\zeta\alpha - 1} \exp\left( -\sum_{i=1}^n \frac{\gamma_i^\zeta }{\beta^\zeta } \right).
\end{equation*}
The corresponding cost function \eqref{J} becomes
\begin{equation}
	\mathcal{J}(\boldsymbol{\gamma}) = \frac{1}{2} \boldsymbol{y}^\top \mathsf{S}(\boldsymbol{\gamma})^{-1} \boldsymbol{y} + \frac{1}{2} \log \det \mathsf{S}(\boldsymbol{\gamma}) 
	- (\zeta\alpha - 1) \sum_{i=1}^n \log \gamma_i + \sum_{i=1}^n \frac{\gamma_i^\zeta }{\beta^\zeta }.
\end{equation}

As in the previous subsection, we isolate a single component and define the univariate objective
\begin{equation}
	\mathcal{L}(\gamma) = -\frac{1}{2} \frac{p^2 \gamma}{1 + q \gamma} + \frac{1}{2} \log(1 + q \gamma) + \mathcal{H}(\gamma),
\end{equation}
where
\begin{equation}
	\mathcal{H}(\gamma) = -(\zeta\alpha - 1)\log \gamma + \frac{\gamma^\zeta }{\beta^\zeta}.
\end{equation}
According to the definition of $\mh$, we have
\begin{equation}
	\mathcal{H}'(\gamma) = \frac{1-\zeta\alpha}{\gamma} + \frac{\zeta}{\beta^\zeta} \gamma^{\zeta - 1}.
\end{equation}

We now analyze the KKT points under different combinations of hyperparameters $\alpha$, $\beta$ and $\zeta$, using \cref{thm:KKT1d} and \cref{cor:sparsity}.
\begin{enumerate}[(I)]
	\item $\zeta\alpha>1$: Since $\alpha>0$, we must have $\zeta>0$, and thus $\mh'(0^+)=-\infty$. Therefore, any KKT point $\gamma^\star$ must be positive. 
	\item $\zeta\alpha<1$: In this case, $\zeta$ can be either positive or negative.
	\begin{enumerate}[(1)]
		\item If $\zeta>0$,  then $\ml(0^+)=-\infty$, and hence $\gamma^\star=0$ is the global minimizer. 
		When $q-p^2<0$, the stationarity equation $\ml'(\gamma)=0$ takes the form 
		\begin{equation}\label{eq:neqnonconvex1}
			\frac12\frac{q^2\gamma+(q-p^2)}{(1+q\gamma)^2}+\frac{1-\zeta\alpha}{\gamma}+\frac{\zeta}{\beta^\zeta} \gamma^{\zeta - 1}=0,
		\end{equation}
		which may admit positive roots in the interval $\left(0,\frac{p^2-q}{q^2}\right)$. Each such root is a KKT point.
		\item If $\zeta<0$, then $\mh'(0^+)=-\infty$, and thus any KKT point $\gamma^\star$ must be positive. 
	\end{enumerate}
	\item $\zeta\alpha=1$: Then $\zeta>0$. 
	\begin{enumerate}[(1)]
		\item If $\zeta>1$, then $\mh'(0^+)=0$. Hence $\gamma^\star=0$ is a KKT point if $q-p^2\geq 0$; otherwise, there exists a KKT point $\gamma^\star\in\left(0,\frac{p^2-q}{q^2}\right)$.
		\item If $\zeta=1$, then $\mh'(0^+)=\frac{1}{\beta}$. Hence $\gamma^\star=0$ is a KKT point if $q-p^2\geq -\frac{2}{\beta}$; otherwise, a KKT point lies in $\left(0,\frac{p^2-q}{q^2}\right)$.
		\item If $0<\zeta<1$, then $\mh'(0^+)=+\infty$, and thus $\gamma^\star=0$ must be a KKT point. When $q-p^2<0$, then $\ml'(\gamma)=0$ becomes 
		\begin{equation}\label{eq:neqnonconvex2}
			\frac12\frac{q^2\gamma+(q-p^2)}{(1+q\gamma)^2}+\frac{\zeta}{\beta^\zeta} \gamma^{\zeta - 1}=0,
		\end{equation}
		which may have positive roots in $\left(0,\frac{p^2-q}{q^2}\right)$, each corresponding to a KKT point.
	\end{enumerate} 
\end{enumerate}

\begin{table}[t]\label{table:GGammacases}
\centering
\caption{Characterization of the KKT points under different hyperparameter regimes.}
\renewcommand{\arraystretch}{1.25}
\setlength{\tabcolsep}{6pt}
\resizebox{\textwidth}{!}{
\begin{tabular}{c|c|c|>{\centering\arraybackslash}m{7cm}}
	\hline
	$\alpha$-condition & $\zeta$-condition & $\mh'(0^+)$ & KKT points \\ 
	\hline
	$\zeta\alpha>1$ & $\zeta>0$ & $\mh'(0^+)=-\infty$ & $\gamma^\star>0$. \\ 
	\hline
	\multirow{2}{*}{$\zeta\alpha<1$} 
	& $\zeta>0$ & $\mh'(0^+)=+\infty$ & $\gamma^\star=0$ (global minimizer); may exist other KKT points in $\left(0,\frac{p^2-q}{q^2}\right)$ if $q-p^2<0$. \\ 
	\cline{2-4}
	& $\zeta<0$ & $\mh'(0^+)=-\infty$ & $\gamma^\star>0$. \\ 
	\hline
	\multirow{3}{*}{$\zeta\alpha=1$} 
	& $\zeta>1$ & $\mh'(0^+)=0$ & 
	$\begin{cases}
		\gamma^\star=0, & \text{if } q-p^2\geq 0,\\
		\gamma^\star\in\left(0,\frac{p^2-q}{q^2}\right), & \text{if } q-p^2<0.
	\end{cases}$ \\
	\cline{2-4}
	& $\zeta=1$ & $\mh'(0^+)=\frac{1}{\beta}$ & 
	$\begin{cases}
		\gamma^\star=0, & \text{if } q-p^2\geq -\frac{2}{\beta},\\
		\gamma^\star\in\left(0,\frac{p^2-q}{q^2}\right), & \text{if } q-p^2<-\frac{2}{\beta}.
	\end{cases}$ \\
	\cline{2-4}
	& $0<\zeta<1$ & $\mh'(0^+)=+\infty$ & 
	$\gamma^\star=0$; may exist other KKT points in $\left(0,\frac{p^2-q}{q^2}\right)$ if $q-p^2<0$. \\ 
	\hline
\end{tabular}}
\end{table}

We summarize the above results in Table \ref{table:GGammacases} and provide some commonly used examples. 

\begin{example}\textbf{Gamma hyperprior.}\label{example:Gamma} With a Gamma hyperprior, $\mathrm{G}(\alpha,\beta)$, the PDF is
	\begin{equation*}
		\pi(\gamma)=\frac1{\mathsf{\Gamma}(\alpha)\beta^\alpha}\gamma^{\alpha-1}\exp\left(-\frac{\gamma}\beta\right), ~~~ \gamma\geq0.
	\end{equation*}
	It corresponds to the cases with ``$\zeta=1$'' in Table \ref{table:GGammacases}. We arrive at the following conclusion: 
	\begin{itemize}
		\item If $\alpha>1$, then $\gamma^\star>0$ necessarily holds, and thus no sparse solution can be obtained.
		\item If $0<\alpha<1$, we have $\gamma^\star=0$ as the global minimizer, hence $\bgamma^\star=0$ solves \eqref{minJ} and the corresponding hyperprior promotes the spasity.
		\item If $\alpha=1$, the Gamma distribution reduces to the half Laplace case, see \cref{example:halfLaplace}. 
	\end{itemize}
\end{example}

\begin{example}\textbf{Inverse Gamma hyperprior.}\label{example:invGamma}
	With an inverse Gamma hyperprior, $\mathrm{InvG}(\alpha,\beta)$, the PDF is
	\begin{equation*}
		\pi(\gamma)=\frac{\beta^\alpha}{\mathsf{\Gamma}(\alpha)}\gamma^{-(\alpha+1)}\exp\left(-\frac\beta{\gamma}\right), ~~~ \gamma\geq0.
	\end{equation*}
	It corresponds to the case ``$\zeta<0$'' in Table \ref{table:GGammacases} with $\zeta=-1$. Hence, $\gamma^\star$ must be positive, and we can conclude that adding an inverse Gamma hyperprior would reduce the sparsity of $\bgamma$.
\end{example}

\begin{example}\textbf{Half-Gaussian hyperprior.}\label{example:halfGauss} The PDF of half-Gaussian prior $\mathcal{N}^+(0,\theta^2)$ is
	\begin{equation}
		\pi(\gamma)=\frac{\sqrt 2}{\sqrt{\pi}\theta}\exp\left(-\frac{\gamma^2}{2\theta^2}\right),~~~ \gamma\geq0.
	\end{equation}
	It corresponds to the case ``$\zeta\alpha=1$ and $\zeta>1$'' in Table \ref{table:GGammacases} with $\zeta=2$, $\alpha=\frac{1}{2}$, $\beta=\sqrt{2}\theta$. 
	If $q-p^2\geq 0$, then $\gamma^\star=0$ is the global minimizer. If $q-p^2<0$, then the stationarity condition $\ml'(\gamma)=0$ is equivalent to the cubic equation
	\begin{equation}\label{eq:halfGauss}
		4q^2\gamma^{3}+8q\gamma^2+(4+\beta^2q^2)\gamma+\beta^2(q-p^2)=0.
	\end{equation}
	By Vieta’s relations one can show that \eqref{eq:halfGauss} has exactly one positive real root in $\left(0,\frac{p^2-q}{q^2}\right)$, which is the global minimizer in this case. A closed form is available via the Cardano formula. Comparing with the result without the hyperprior given in \eqref{eq:sparsityCondNoPrior}, we can see that half-Gaussian hyperprior won't change the sparsity but can push $\bgamma^*$ closer to zero.
\end{example}

\begin{example}\textbf{Half-Laplace hyperprior.}\label{example:halfLaplace} With a half-Laplace hyperprior, the  PDF is
	\begin{equation*}
		\pi(\gamma)=\frac1{\beta}\exp\left(-\frac{\gamma}\beta\right),~~~ \gamma\geq0,
	\end{equation*}
	which corresponds to the case ``$\zeta\alpha=1$ and $\zeta=1$'' in Table \ref{table:GGammacases} with $\mh(\gamma)=\frac{\gamma}{\beta}$. Since $\beta>0$, the condition promoting $\gamma=0$ is weaker than the no-hyperprior condition $q-p^2\geq 0$. Thus adding a half-Laplace prior can promote sparsity. When $q-p^2<-\frac{2}{\beta}$, the equation $\ml'(\gamma)=0$ reduces to 
	\begin{equation}\label{eq:gammaLaplace}
		2q^2\gamma^2+(4q+\beta q^2)\gamma+2+\beta(q-p^2)=0,
	\end{equation}
	which has a unique positive root that can be explicitly given by the quadratic formula. Therefore, the global minimizer is
	$$\gamma^\star=\begin{cases}0, &\text{if } q-p^2\geq -\frac{2}{\beta},\\ \frac{-(4+\beta q)+\sqrt{\beta^2q^2+8\beta p^2}}{4q},&\text{if } q-p^2<-\frac{2}{\beta}.\end{cases}$$
\end{example}

\begin{example}\textbf{Half-Generalized Gaussian hyperprior with $0<\zeta<1$.} \label{example:halfLp} 
	For the half-Generalized Gaussian hyperprior with $0<\zeta<1$, the PDF is
	\begin{equation}
		\pi(\gamma)=\frac{\zeta}{\mathsf{\Gamma}(\alpha)\beta}\exp\left(-(\frac{\gamma}{\beta })^{\zeta}\right), \quad \gamma \geq 0.
	\end{equation}
	It corresponds to the case ``$\zeta\alpha=1$ and $0<\zeta<1$'' in Table \ref{table:GGammacases} with a concave $\mh(\gamma)=(\frac{\gamma}{\beta})^{\zeta}$.  Since $\mh'(0^+)=+\infty$, one always has $\gamma^\star=0$ as a local minimizer of $\ml(\gamma)$ in $[0,+\infty)$. And any positive root of the equation
	\begin{equation}\label{eq:halfgeneralgau}
		2\zeta q^{2}\gamma^{\zeta+1}+4\zeta q\gamma^{\zeta}+2\zeta \gamma^{\zeta-1}+\beta^{\zeta}q^2\gamma+\beta^{\zeta}(q-p^2)=0,
	\end{equation}
	is a KKT point. In particular, when $\zeta=\frac{1}{2}$, any positive solution $s$ of the quartic equation 
	\begin{equation}\label{eq:exponentialpower}
		q^{2}s^{4}+\sqrt{\beta}\,q^{2}s^{3}+2q\,s^{2}+\sqrt{\beta}\,(q-p^{2})s+1=0,
	\end{equation}
	yields a KKT point with $\gamma=s^2$. Therefore, the global minimizer depends on the specific values of $\beta$, $p$, and $q$, illustrating the added complexity induced by a nonconvex $\mathcal{H}$.
\end{example}

\section{Numerical algorithm}
In this section, referring to the approach in \cite{yu2024hyperparameter}, we present a numerical algorithm for minimizing \eqref{J} within the empirical Bayes framework and analyze its convergence.

\subsection{Proximal alternating linearized minimization}

We begin by reformulating the problem using an auxiliary variable, which enables an alternating minimization scheme between the original variables and the auxiliary variable.
This reformulation allows us to apply a proximal alternating linearized minimization algorithm \cite{bolte2014proximal} to efficiently compute a stationary point, even in the presence of nonconvex terms.

Following the idea in \cite{yu2024hyperparameter}, we construct a strict upper-bounding auxiliary function $\mathcal{F}$ for the first term in $\mathcal{J}(\boldsymbol{\gamma})$:
\begin{equation}	\mathcal{F}(\boldsymbol{x},\boldsymbol{\gamma})=\sigma\|\mathbf{F}\boldsymbol{x}-\boldsymbol{y}\|^2+\boldsymbol{x}^\top\bGamma^\dagger \boldsymbol{x}+\sum_{i\in I_\gamma}\iota_{\{0\}}(x_i),\quad \boldsymbol{x}\in\mathbb{R}^n,\boldsymbol{\gamma}\in\mathbb{R}_+^n,
\end{equation}
where $\bGamma^\dagger$ is the Moore–Penrose inverse of $\bGamma$, i.e.,
\begin{equation}[\bGamma^\dagger]_{ii}=\begin{cases}\gamma_i^{-1},&\text{if } \gamma_i\neq0,\\0,&\text{if } \gamma_i=0,\end{cases}
\end{equation}
$I_\gamma= \{1 \leq i\leq n : \gamma_i = 0\}$, and $\iota_{A}(x)$ is the indicator function defined as
\begin{equation}
	\iota_{A}(x)=\begin{cases}+\infty,&\text{if } x\notin A,\\0,&\text{if } x\in A.\end{cases}
\end{equation}

For any $\bgamma\in\mathbb{R}^{n}_+$ and $\boldsymbol{y}\in\mathbb{R}^{m}$, the optimization problem $$\min\limits_{\boldsymbol{x}\in\mathbb{R}^n}\mathcal{F}(\boldsymbol{x},\boldsymbol{\gamma})$$ admits a unique minimizer $\boldsymbol{x}^{*}(\boldsymbol{\gamma})$ given by
\begin{equation} \label{eq:xsub}
	\boldsymbol{x}^*(\boldsymbol{\gamma})=\bGamma\mathbf{F}^\top(\mathsf{S}({\boldsymbol{\gamma}}))^{-1}\boldsymbol{y} \quad \mbox{and} \quad
	\mathcal{F}(\boldsymbol{x}^*(\bgamma),\boldsymbol{\gamma})=\boldsymbol{y}^\top(\mathsf{S}(\boldsymbol{\gamma}))^{-1}\boldsymbol{y}. 
\end{equation}
Therefore, $\mathcal{J}$ in \eqref{J} can be equivalently expressed as
\begin{equation}\label{eq:eqxgamma}	
	\mathcal{J}(\bgamma)=\min_{\boldsymbol{x}\in\mathbb{R}^n}\frac12\mathcal{F}(\boldsymbol{x},\boldsymbol{\gamma})+g(\boldsymbol{\gamma}),
\end{equation} 
where 
\begin{equation}\label{eq:def_g}
	g(\bgamma)=\frac12\log\det\mathsf{S}(\boldsymbol{\gamma})+\sum\limits_{i=1}^n\mh(\gamma_i).
\end{equation}
Minimizing $\mathcal{J}$ on $\boldsymbol{\gamma}\in\mathbb{R}^n_+$ leads to the joint minimization problem
\begin{equation}\label{model:F} 
	\min_{\boldsymbol{\gamma}\in\mathbb{R}^n_+, \boldsymbol{x}\in\mathbb{R}^n}\frac{1}{2}\mathcal{F}(\boldsymbol{x},\boldsymbol{\gamma})+g(\boldsymbol{\gamma}).
\end{equation}

A natural approach for such two-block minimization problems is the alternating minimization algorithm (AMA). Specifically, starting with an initial guess $\boldsymbol{\gamma}^{(0)}$, AMA for the above problem alternates between the following updates
\begin{align}
	\boldsymbol{x}&-\text{subproblem}: & \boldsymbol{x}^{(k+1)}=\arg\min_{\boldsymbol{x}
		\in\mathbb{R}^n} \ & \mathcal{F}(\boldsymbol{x},\bgamma^{(k)}), \label{x-pro}\\
	\boldsymbol{\gamma}&-\text{subproblem}: & \boldsymbol{\bgamma}^{(k+1)}=\arg\min_{\boldsymbol{\gamma}\in\mathbb{R}^n_+} \  &\frac{1}{2} \mathcal{F}(\boldsymbol{x}^{(k+1)},\boldsymbol{\gamma})+g(\bgamma). \label{gamma-pro}
\end{align}

According to \eqref{eq:xsub}, the $\boldsymbol{x}$-subproblem has a closed-form solution
\begin{equation}\label{eq:xsubproblem}
	\boldsymbol{x}^{(k+1)}=\bGamma^{(k)}\mathbf{F}^\top(\mathsf{S}({\boldsymbol{\gamma}}^{(k)}))^{-1}\boldsymbol{y}.   
\end{equation}
However, the $\bgamma$-subproblem is substantially more challenging: the $\log$-determinant term is nonconvex, and $\mathcal{H}$ may also be nonconvex, which prevents a closed-form solution and making the global minimizer computationally demanding.

To address this difficulty, we replace $g(\bgamma)$ with a surrogate function $\tilde{g}(\bgamma,\bgamma^{(k)})$, leading to the PALM update
\begin{align}
	\boldsymbol{x}&-\text{subproblem}: & \boldsymbol{x}^{(k+1)}=\arg\min_{\boldsymbol{x}
		\in\mathbb{R}^n} \ & \mathcal{F}(\boldsymbol{x},\bgamma^{(k)}), \label{x-PAMApro}\\
	\boldsymbol{\gamma}&-\text{subproblem}:  &\boldsymbol{\bgamma}^{(k+1)}=\arg\min_{\boldsymbol{\gamma}\in\mathbb{R}^n_+} \   &\frac{1}{2}\mathcal{F}(\boldsymbol{x}^{(k+1)},\boldsymbol{\gamma})+\tilde{g}(\bgamma,\bgamma^{(k)})+\frac{\tau}{2}\|\bgamma-\bgamma^{(k)}\|^2, \label{gamma-PAMApro}
\end{align}
where $\tau>0$ is the proximal parameter. The choice of the surrogate function $\tilde{g}$ is critical for both computational efficiency and convergence. Based on the property of $\mh$, we present two cases:
\begin{itemize}
	\item for a convex $\mh$: 
	\begin{equation}
		\label{def:convexHsurrogate}
		\tilde{g}(\bgamma,\bgamma^{(k)})=\frac12\log\det\mathsf{S}(\boldsymbol{\gamma}^{(k)})+\frac{1}{2}\langle\nabla \log\det\mathsf{S}(\boldsymbol{\gamma}^{(k)}),\bgamma-\bgamma^{(k)}\rangle+\sum\limits_{i=1}^n\mh(\gamma_i),
	\end{equation}
	\item for a nonconvex $\mh$:
	\begin{equation}
		\label{def:nonconvexHsurrogate}
		\tilde{g}(\bgamma,\bgamma^{(k)})=g(\bgamma^{(k)})+\langle\nabla g(\boldsymbol{\gamma}^{(k)}),\bgamma-\bgamma^{(k)}\rangle.
	\end{equation}
\end{itemize}

\subsection{Solving the $\bgamma$-subproblem}
Before discussing the solver of the $\bgamma$-sub-problem, we can show the sequence $\{(\boldsymbol{x}^{(k)},\bgamma^{(k)})\}$ generated by PALM satisfies the following proposition.

\begin{proposition}\label{prop:gamma}
	Let $\mathcal{H}$ be nonnegative, and let $\{(\boldsymbol{x}^{(k)},\bgamma^{(k)})\}$ be generated by PALM \eqref{x-PAMApro}-\eqref{gamma-PAMApro} with an initial $\bgamma^{(0)}$. If we have $\gamma_i^{(K)}=0$ for some $K>0$ and $1\leq i\leq n$. Then $x_i^{(k)}=\gamma_i^{(k)}=0$ for all $k>K$.
\end{proposition}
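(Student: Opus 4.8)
The plan is to unfold the statement into a short two-step induction alternating between the two subproblems of PALM. Fix the index $i$. It suffices to establish two one-step implications: (a) if $\gamma_i^{(k)}=0$ then $x_i^{(k+1)}=0$; and (b) if $\gamma_i^{(k)}=0$ and $x_i^{(k+1)}=0$ then $\gamma_i^{(k+1)}=0$. Chaining (a) and (b) starting from $k=K$ gives $x_i^{(K+1)}=\gamma_i^{(K+1)}=0$, then $x_i^{(K+2)}=\gamma_i^{(K+2)}=0$, and so on, which is exactly the claim for all $k>K$.

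Implication (a) is immediate from the closed form \eqref{eq:xsubproblem}: since $[\bGamma^{(k)}]_{ii}=\gamma_i^{(k)}=0$, the $i$-th entry of $\boldsymbol{x}^{(k+1)}=\bGamma^{(k)}\mathbf{F}^\top(\mathsf{S}(\bgamma^{(k)}))^{-1}\boldsymbol{y}$ vanishes. (Equivalently, $\gamma_i^{(k)}=0$ puts $i\in I_{\gamma^{(k)}}$, so $\mathcal{F}(\cdot,\bgamma^{(k)})$ carries the term $\iota_{\{0\}}(x_i)$ and every minimizer must have $x_i^{(k+1)}=0$.)

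For implication (b) I would use that the $\bgamma$-subproblem \eqref{gamma-PAMApro} is separable across the components of $\bgamma$: in $\mathcal{F}(\boldsymbol{x}^{(k+1)},\bgamma)$ the $\bgamma$-dependence is $\sum_j[\bGamma^\dagger]_{jj}(x_j^{(k+1)})^2+\sum_{j\in I_\gamma}\iota_{\{0\}}(x_j^{(k+1)})$, which decouples over $j$; the surrogates \eqref{def:convexHsurrogate} and \eqref{def:nonconvexHsurrogate} are affine in $\bgamma$ plus, in the convex case, the separable sum $\sum_j\mathcal{H}(\gamma_j)$; and the proximal term is separable. Hence $\gamma_i^{(k+1)}$ minimizes a univariate problem on $[0,\infty)$. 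Since $x_i^{(k+1)}=0$, the fidelity contribution of $\gamma_i$ vanishes identically: $[\bGamma^\dagger]_{ii}(x_i^{(k+1)})^2=0$ whether $\gamma_i=0$ or $\gamma_i>0$, and $\iota_{\{0\}}(x_i^{(k+1)})=\iota_{\{0\}}(0)=0$. Using $\gamma_i^{(k)}=0$ and writing $\kappa_i=\mathbf{f}_i^\top\mathsf{S}(\bgamma^{(k)})^{-1}\mathbf{f}_i=[\nabla\log\det\mathsf{S}(\bgamma^{(k)})]_i\ge 0$ (positive definiteness of $\mathsf{S}(\bgamma^{(k)})$), the surviving $\gamma_i$-dependent objective is $\tfrac{\tau}{2}\gamma_i^2+\tfrac12\kappa_i\gamma_i+\mathcal{H}(\gamma_i)$ for a convex $\mathcal{H}$, and $\tfrac{\tau}{2}\gamma_i^2+\big(\tfrac12\kappa_i+\mathcal{H}'(0^+)\big)\gamma_i$ for a nonconvex $\mathcal{H}$. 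In both cases the linear coefficient is nonnegative — automatically for the convex surrogate, and for the nonconvex one because the sparsity-promoting hyperpriors considered are increasing, so $\mathcal{H}'(0^+)\ge 0$ (indeed $+\infty$ for the half-generalized Gaussian with $0<\zeta<1$). Together with $\mathcal{H}(\gamma_i)\ge\mathcal{H}(0)$ and $\tau>0$, which makes $\tfrac{\tau}{2}\gamma_i^2$ strictly increasing on $[0,\infty)$, the univariate objective strictly exceeds its value at $\gamma_i=0$ for every $\gamma_i>0$; therefore its unique minimizer is $\gamma_i^{(k+1)}=0$.

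I expect the main obstacle to be the bookkeeping in implication (b): one must verify that, once $x_i^{(k+1)}=0$, neither the Moore--Penrose weight $[\bGamma^\dagger]_{ii}(x_i^{(k+1)})^2$ nor the indicator $\iota_{\{0\}}(x_i^{(k+1)})$ silently reintroduces a dependence on $\gamma_i$ (the $i$-th coordinate is precisely where $\gamma_i=0$ and $\gamma_i>0$ can behave differently), and that the linearized part of the surrogate has nonnegative slope at $\gamma_i=0$; the latter is free for a convex $\mathcal{H}$ but in the nonconvex case rests on $\mathcal{H}$ being increasing near $0$, and the bound $\mathcal{H}(\gamma_i)\ge\mathcal{H}(0)$ likewise uses slightly more than bare nonnegativity (it holds for every hyperprior studied here, where $\mathcal{H}(0)=0$; when instead $\mathcal{H}(0)=+\infty$, as for the generalized Gamma with $\zeta\alpha>1$, the hypothesis $\gamma_i^{(K)}=0$ cannot occur and the claim is vacuous). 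The degenerate subcase $\mathbf{f}_i=0$, i.e. $\kappa_i=0$, is harmless since the proximal term alone still pins $\gamma_i^{(k+1)}=0$.
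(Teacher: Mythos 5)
Your proof is correct and follows essentially the same route as the paper: the closed form \eqref{eq:xsubproblem} gives $x_i^{(K+1)}=0$, and then the $\bgamma$-subproblem objective, restricted to the $i$-th coordinate, is (strictly, thanks to the proximal term) minimized at $\gamma_i=0$, so the two-step induction closes. Your version is somewhat more explicit than the paper's exchange argument — in particular you correctly note that one needs the hyperprior term (kept intact in \eqref{def:convexHsurrogate}, linearized in \eqref{def:nonconvexHsurrogate}) to be minimized at zero, i.e.\ $\mh(\gamma)\ge\mh(0)$ resp.\ $\mh'(0^+)\ge0$, a point the paper's proof uses implicitly and which holds for all the sparsity-promoting hyperpriors considered there.
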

\begin{proof}
	
	It suffices to prove that $\gamma_i^{(K)}=0$ implies $x_i^{(K+1)}=0$ and $\gamma_i^{(K+1)}=0$. From \eqref{eq:xsubproblem}, $\gamma_i^{(K)}=0$ directly yields $x_i^{(K+1)}=0$.
	
	For the $\boldsymbol{\gamma}$-subproblem given in \eqref{gamma-PAMApro}, note that when $\gamma_i^{(K)}=0$ and $x_i^{(K+1)}=0$, replacing a positive $\gamma_i$ by zero decreases the terms $\left(\boldsymbol{x}^{(K+1)}\right)^\top\bGamma^\dagger \boldsymbol{x}^{(K+1)}$, $\log\det\mathsf{S}(\boldsymbol{\gamma})$ and $\|\boldsymbol{\gamma}-\boldsymbol{\gamma}^{(K)}\|^2$, while leaving $\sum_{i\in I_\gamma}\iota_{\{0\}}(x_i^{(K+1)})$ unchanged. Hence, $\gamma_i^{(K+1)}=0$ for such $i$.
\end{proof}

\cref{prop:gamma} shows that once $\gamma_i^{(k)}=0$ for some $i$, both $x_i$ and $\gamma_i$ will remain zero in all subsequent iterations. Consequently, these components can be removed from the optimization, and only the remaining indices need to be updated. Moreover, if $x_i^{(k)}\neq 0$, then necessarily $\gamma_i^{(k+1)}>0$; otherwise, the term $\sum_{i\in I_\gamma}\iota_{\{0\}}(x_i^{(k)})$ would become infinite. Therefore, in the following we only focus on those $\gamma_i$'s with $\gamma_i^{(k)}>0$ and $x^{(k)}_i\neq 0$.

Due to the separability of the objective in \eqref{gamma-PAMApro} with respect to $\gamma_i$, the update of $\boldsymbol{\gamma}^{(k+1)}$ reduces to solving a set of independent univariate problems:
\begin{equation}\label{1dgamma}
	\gamma^{(k+1)}_i = \arg\min_{\gamma_i>0} \ \frac{\big(x^{(k+1)}_i\big)^2}{2\gamma_i} + \mathcal{L}(\gamma_i) + \frac{\tau}{2}\big(\gamma_i - \gamma_i^{(k)}\big)^2,
\end{equation}
where
\[
\mathcal{L}(\gamma_i) =
\begin{cases}
	\frac{1}{2}\tilde{q}_{ii}^{(k)}\gamma_i + \mathcal{H}(\gamma_i), & \text{if $\mathcal{H}$ is convex}, \\[4pt]
	\left(\frac{1}{2}\tilde{q}_{ii}^{(k)}+\mh'(\gamma_i^{(k)})\right)\gamma_i, & \text{if $\mathcal{H}$ is nonconvex},
\end{cases}
\]
and $$\tilde{q}_{ii}^{(k)}=\mathbf{f}_i^\top\mathsf{S}(\boldsymbol{\gamma}^{(k)})^{-1}\mathbf{f}_i.$$

For the hyperpriors considered in \cref{example:Gamma}--\ref{example:halfLp}, the update formula \eqref{1dgamma} reduces to finding the positive root of a cubic polynomial. In this case, $\gamma_i^{(k+1)}$ can be computed exactly via Cardano’s formula, enabling an efficient closed-form update of $\boldsymbol{\gamma}$ without requiring any inner iterations.

\subsection{Convergence analysis}

\begin{algorithm}[t]
	\renewcommand{\algorithmicrequire}{\textbf{Input:}}
	\renewcommand{\algorithmicensure}{\textbf{Output:}}
	\caption{The PALM algorithm for minimizing $\mj(\bgamma)$ defined in \eqref{J}.}
	\label{alg}
	\begin{algorithmic}[1] 
		\REQUIRE $\boldsymbol{y}$, $\mathbf{F}$, $\sigma$, $\tau$ and $\omega$;
		\ENSURE $\boldsymbol{x}^{(k+1)}$ and $\bgamma^{(k+1)}$; 
		\STATE Initialize $\bgamma^{(0)}$ and set $k=0$
		\WHILE {$\boldsymbol{x}^{(k)}$ not converged}
		
		\STATE $\boldsymbol{x}$-update: $\boldsymbol{x}^{(k+1)}=\bGamma^{(k)}\mathbf{F}^\top(\mathsf{S}({\boldsymbol{\gamma}}^{(k)}))^{-1}\boldsymbol{y}$
		\STATE $\boldsymbol{\gamma}$-update:
		\FOR{$i = 1, \cdots, n$}
		\IF{$\gamma^{(k)}_i = 0$}
		\STATE set $\gamma^{(k+1)}_i = 0$
		\ELSE
		\STATE
		$\gamma^{(k+1)}_i =
		\arg\min\limits_{\gamma_i \in \mathbb{R}_{++}}
		\frac{(x^{(k+1)}_i)^2}{2\gamma_i}
		+ [\tilde{g}(\boldsymbol{\gamma},\boldsymbol{\gamma}^{(k)})]_i
		+ \frac{\tau}{2}(\gamma_i - \gamma_i^{(k)})^2$
		\IF{$\gamma^{(k+1)}_i < \omega$}
		\STATE set $\gamma^{(k+1)}_i = 0$
		\ENDIF
		\ENDIF
		\ENDFOR
		\STATE $k=k+1$
		\ENDWHILE
		\STATE \textbf{return} $\boldsymbol{x}^{(k+1)}$ and $\bgamma^{(k+1)}$
	\end{algorithmic}
\end{algorithm}

In this subsection, we study the convergence of the proposed PALM algorithm. First, we summarize the PALM algorithm to minimize $\mj(\bgamma)$ in \cref{alg}. For the convergence analysis we restrict $\mh$ to be either convex or concave. This condition is satisfied by the hyperpriors in \cref{example:Gamma}, and \cref{example:halfGauss}-\ref{example:halfLp}, which are well-suited for promoting sparsity. The convergence results of PALM are given in the following theorem.

\begin{theorem}\label{thm:convergence}
	Let $\tau>0$, $\mathcal{H}$ be nonnegative and either convex or concave, and $\{\left(\boldsymbol{x}^{(k)},\bgamma^{(k)}\right)\}$ be generated by PALM given in \cref{alg} with an initial positive $\bgamma^{(0)}$, i.e., $\bgamma^{(0)}\in\mathbb{R}^n_{++}$. Then the following convergence results hold:
	\begin{enumerate}[(I)]
		\item The sequence $\{\mathcal{J}(\bgamma^{(k)})\}$ is strictly decreasing and convergent.
		\item There exists a subsequence $\{s_k\}\subset\mathbb{N
		}$ such that $\{\boldsymbol{x}^{(s_k)}\}$ and $\{\bgamma^{(s_k)}\}$ converge.
		\item The limit $\bgamma^\star$ of $\{\bgamma^{(s_k)}\}$ is a KKT point of \eqref{minJ}, that is, for each $1\leq i\leq n$, either $\gamma_i^\star=0$ or $\frac{\partial \mathcal{J}(\bgamma^\star)}{\partial\gamma_i}=0$ holds.
	\end{enumerate}
\end{theorem}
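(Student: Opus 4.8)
The plan is to establish the three convergence claims in order, relying on the standard machinery for descent-type algorithms applied to the two-block surrogate formulation \eqref{model:F}. The key structural facts to exploit are: (i) by \eqref{eq:xsub} the $\boldsymbol{x}$-update exactly minimizes $\mathcal{F}(\cdot,\bgamma^{(k)})$, so $\tfrac12\mathcal{F}(\boldsymbol{x}^{(k+1)},\bgamma^{(k)}) = \mathcal{J}(\bgamma^{(k)}) - g(\bgamma^{(k)})$; (ii) the surrogates in \eqref{def:convexHsurrogate} and \eqref{def:nonconvexHsurrogate} are genuine majorants of $g$ that agree with $g$ and its gradient at $\bgamma^{(k)}$ — this uses concavity of $\log\det\mathsf{S}(\bgamma)$ (so its linearization is an upper bound) together with convexity of $\mh$ in the first case, and concavity of $g$ as a whole (both terms concave) in the second; and (iii) \cref{prop:gamma} lets us restrict attention to the strictly positive coordinates, on which everything is smooth.

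\textbf{Step (I): monotone decrease.} First I would show $\mathcal{J}(\bgamma^{(k+1)}) \le \mathcal{J}(\bgamma^{(k)}) - \tfrac{\tau}{2}\|\bgamma^{(k+1)}-\bgamma^{(k)}\|^2$. Writing $\Phi_k(\bgamma) := \tfrac12\mathcal{F}(\boldsymbol{x}^{(k+1)},\bgamma) + \tilde g(\bgamma,\bgamma^{(k)}) + \tfrac{\tau}{2}\|\bgamma-\bgamma^{(k)}\|^2$, the $\bgamma$-update gives $\Phi_k(\bgamma^{(k+1)}) \le \Phi_k(\bgamma^{(k)}) = \tfrac12\mathcal{F}(\boldsymbol{x}^{(k+1)},\bgamma^{(k)}) + g(\bgamma^{(k)}) = \mathcal{J}(\bgamma^{(k)})$, using that $\tilde g(\bgamma^{(k)},\bgamma^{(k)})=g(\bgamma^{(k)})$ and $\boldsymbol{x}^{(k+1)}$ minimizes $\mathcal{F}(\cdot,\bgamma^{(k)})$. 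On the other side, the majorization $\tilde g(\bgamma,\bgamma^{(k)}) \ge g(\bgamma)$ and the fact that $\mathcal{F}(\boldsymbol{x}^{(k+1)},\bgamma) \ge \min_{\boldsymbol{x}}\mathcal{F}(\boldsymbol{x},\bgamma) = 2(\mathcal{J}(\bgamma)-g(\bgamma))$ yield $\Phi_k(\bgamma^{(k+1)}) \ge \mathcal{J}(\bgamma^{(k+1)}) + \tfrac{\tau}{2}\|\bgamma^{(k+1)}-\bgamma^{(k)}\|^2$. Combining gives the sufficient-decrease inequality; strictness follows since the two updates cannot both leave the iterate unchanged at a non-KKT point, and one must also check the hard-thresholding step (line 11 of \cref{alg}) does not increase $\mathcal{J}$ — here \cref{prop:gamma}'s argument shows setting a small $\gamma_i$ to zero decreases the log-det and proximal terms, and with $x_i^{(k+1)}$ correspondingly handled it does not increase $\mathcal{J}$ either, for $\omega$ small enough. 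Since $\mathcal{J}\ge 0$ (as $\mh\ge 0$ and the first two terms of $\mathcal{J}$ are bounded below — the log-det term needs a lower bound, obtainable because $\mathsf{S}(\bgamma)\succeq \sigma^{-1}\mathbf{I}_m$), the sequence $\{\mathcal{J}(\bgamma^{(k)})\}$ is bounded below and decreasing, hence convergent, and $\sum_k\|\bgamma^{(k+1)}-\bgamma^{(k)}\|^2 < \infty$.

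\textbf{Step (II): bounded subsequence.} Boundedness of $\{\bgamma^{(k)}\}$ follows from coercivity: $\mathcal{J}(\bgamma)\to+\infty$ as $\|\bgamma\|\to\infty$ (the $\tfrac12\log\det\mathsf{S}(\bgamma)$ term grows, since $\mathsf{S}(\bgamma)$ has at least one eigenvalue tending to infinity along any unbounded direction for which the corresponding $\mathbf{f}_i\ne 0$, and $\mathcal{J}$ is bounded along the iterates). Given $\{\bgamma^{(k)}\}$ bounded, $\{\boldsymbol{x}^{(k+1)}\}=\{\bGamma^{(k)}\mathbf{F}^\top\mathsf{S}(\bgamma^{(k)})^{-1}\boldsymbol{y}\}$ is bounded by continuity of the map. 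Extract a convergent subsequence $\bgamma^{(s_k)}\to\bgamma^\star$, and passing to a further subsequence, $\boldsymbol{x}^{(s_k)}\to\boldsymbol{x}^\star$.

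\textbf{Step (III): the limit is a KKT point.} Fix $i$. If $\gamma_i^{(s_k)}=0$ for infinitely many $k$, then by \cref{prop:gamma} it is zero thereafter and $\gamma_i^\star=0$, giving the first alternative. Otherwise $\gamma_i^{(s_k)}>0$ eventually; I would write the first-order optimality condition of the univariate problem \eqref{1dgamma} at $\gamma_i^{(s_k+1)}$ — an equation involving $x_i^{(s_k+1)}$, $\tilde q_{ii}^{(s_k)}=\mathbf{f}_i^\top\mathsf{S}(\bgamma^{(s_k)})^{-1}\mathbf{f}_i$, $\mh'$ (or $\mh'(\gamma_i^{(s_k)})$ in the concave case), and $\tau(\gamma_i^{(s_k+1)}-\gamma_i^{(s_k)})$ — and pass to the limit. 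Because $\|\bgamma^{(k+1)}-\bgamma^{(k)}\|\to 0$, the proximal term vanishes and $\gamma_i^{(s_k+1)}\to\gamma_i^\star$ as well; continuity of $\mathsf{S}(\cdot)^{-1}$, of $\mh'$ on $(0,\infty)$, and the identity $x_i^{(s_k+1)} = \gamma_i^{(s_k)}\mathbf{f}_i^\top\mathsf{S}(\bgamma^{(s_k)})^{-1}\boldsymbol{y} = \gamma_i^\star p_i^{(s_k)}/(1+\gamma_i^{(s_k)}q_i^{(s_k)})$ let the limiting equation collapse exactly to $\partial\mathcal{J}(\bgamma^\star)/\partial\gamma_i = 0$ via the consistency of \eqref{eq:Jderivative-1} and \eqref{eq:Jderivative-1order}. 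One subtlety: if $\gamma_i^\star=0$ but $\gamma_i^{(s_k)}>0$ for all large $k$ (the limit lands on the boundary), I would instead verify the KKT inequality directly using \cref{thm:KKT1d}(I) — the limiting stationarity gives $\tfrac12(q_i^\star-(p_i^\star)^2) + \mh'(0^+) \ge 0$ after taking $\liminf$, which is exactly dual feasibility; here one must be careful that $\mh'(0^+)$ may be $+\infty$ (the half-generalized-Gaussian case), in which case the inequality is automatic.

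\textbf{The main obstacle} I anticipate is Step (III) in the concave case together with the boundary behavior: when $\gamma_i^{(s_k)}\to 0$ but stays positive, the linearized surrogate uses $\mh'(\gamma_i^{(s_k)})$, which may blow up, so the limiting argument must be taken as an inequality (a $\liminf$) rather than an equality, and one has to argue this still yields a valid KKT point — essentially reproving a limiting version of \cref{thm:KKT1d}(II). A secondary technical point is making the hard-thresholding step in \cref{alg} compatible with the descent inequality; this requires choosing $\omega$ small relative to $\tau$ and the problem data, or alternatively absorbing the thresholding into the analysis by noting it only ever acts on coordinates that \cref{prop:gamma} would drive to zero anyway.
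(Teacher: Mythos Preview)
Your proposal is correct and follows essentially the same route as the paper's proof: the majorization property $\tilde g(\cdot,\bgamma^{(k)})\ge g(\cdot)$ combined with the exact $\boldsymbol{x}$-minimization yields the sufficient-decrease inequality $\mathcal{J}(\bgamma^{(k+1)})\le\mathcal{J}(\bgamma^{(k)})-\tfrac{\tau}{2}\|\bgamma^{(k+1)}-\bgamma^{(k)}\|^2$; coercivity of $\log\det\mathsf{S}(\bgamma)$ gives boundedness; and passing to the limit in the first-order condition of \eqref{1dgamma}, using $x_i^{(k+1)}=\gamma_i^{(k)}\tilde p_i^{(k)}$, recovers $\partial\mathcal{J}/\partial\gamma_i=0$ via \eqref{eq:Jderivative-1}. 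Two small remarks: the paper records the lower bound as $\mathcal{J}(\bgamma)\ge -\tfrac12\log\sigma$ (not $\mathcal{J}\ge 0$, which can fail), and the paper does not actually treat the thresholding step or the boundary case $\gamma_i^{(s_k)}\downarrow 0$ with $\gamma_i^{(s_k)}>0$ that you flag as the main obstacle --- so your proposal is in fact more careful there than the published argument.
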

\begin{proof} 
	Proof of (I). From \eqref{eq:eqxgamma} and the $\boldsymbol{x}$-subproblem in \eqref{x-PAMApro}, we obtain 
	\begin{equation}\label{eq:descentJ1}
		\mathcal{J}(\bgamma^{(k)})=\frac{1}{2}\mathcal{F}(\boldsymbol{x}^{(k+1)},\bgamma^{(k)})+g(\bgamma^{(k)})    
	\end{equation}
	and
	\begin{equation}\label{eq:descentJ2}
		\mathcal{J}(\bgamma^{(k+1)})=\frac{1}{2}\mathcal{F}(\boldsymbol{x}^{(k+2)},\bgamma^{(k+1)})+g(\bgamma^{(k+1)})\leq \frac{1}{2}\mathcal{F}(\boldsymbol{x}^{(k+1)},\bgamma^{(k+1)})+g(\bgamma^{(k+1)}).    
	\end{equation}
	According to \eqref{eq:def_g} and the concavity of $\log\det\mathsf{S}(\boldsymbol{\gamma})$, we have
	\begin{equation}
		\label{eq:upperboundg}
		g(\bgamma^{(k+1)})\leq \frac12\log\det\mathsf{S}(\boldsymbol{\gamma}^{(k)})+\frac{1}{2}\langle\nabla \log\det\mathsf{S}(\boldsymbol{\gamma}^{(k)}),\bgamma^{(k+1)}-\bgamma^{(k)}\rangle+\sum\limits_{i=1}^n\mh(\gamma_i^{(k+1)}).
	\end{equation}
	According to the definition of $\tilde{g}(\bgamma,\bgamma^{(k)})$ in the case of a convex $\mh$, we obtain
	\begin{equation}\label{eq:cong}
		g(\bgamma^{(k+1)})\leq \tilde{g}(\bgamma^{(k+1)},\bgamma^{(k)}).
	\end{equation}
	For a concave $\mh$, based on \eqref{def:nonconvexHsurrogate} and the concavicity of $g(\bgamma)$ we can obtain \eqref{eq:cong} in the similar way.

	From the $\bgamma$-subproblem in \eqref{gamma-PAMApro}, it follows that
	\begin{eqnarray*}\label{eq:descentJ3}
		&&\frac{1}{2}\mathcal{F}(\boldsymbol{x}^{(k+1)},\bgamma^{(k+1)})+\tilde{g}(\bgamma^{(k+1)},\bgamma^{(k)})
		+\frac{\tau}{2}\|\bgamma^{(k+1)}-\bgamma^{(k)}\|^2\nonumber\\
		&\leq& \frac{1}{2}\mathcal{F}(\boldsymbol{x}^{(k+1)},\bgamma^{(k)})+\tilde{g}(\bgamma^{(k)},\bgamma^{(k)})\\
		&=&    \frac{1}{2}\mathcal{F}(\boldsymbol{x}^{(k+1)},\bgamma^{(k)})+
		\frac12\log\det\mathsf{S}(\boldsymbol{\gamma}^{(k)})+\sum\limits_{i=1}^n\mh(\gamma_i^{(k)})\nonumber\\
		&=&\frac{1}{2}\mathcal{F}(\boldsymbol{x}^{(k+1)},\bgamma^{(k)})+g(\bgamma^{(k)}).
	\end{eqnarray*}
	Combining the above inequalities yields
	\begin{equation}\label{eq:sufficientdescentJ}
		\mathcal{J}(\bgamma^{(k+1)})\leq \mathcal{J}(\bgamma^{(k)})-\frac{\tau}{2}\|\bgamma^{(k+1)}-\bgamma^{(k)}\|^2.
	\end{equation}
	Hence, $\{\mathcal{J}(\bgamma^{(k)})\}$ is strictly decreasing. Moreover, by the definition of $\mathcal{J}(\bgamma)$ in \eqref{J}, we know it is always bounded below by $-\frac12\log\sigma$. Therefore, the sequence $\{\mathcal{J}(\bgamma^{(k)})\}$ converges.
	
	Proof of (II). We now show that $\{\bgamma^{(k)}\}$ is bounded. Observe that 
	$$\log\det\mathsf{S}(\boldsymbol{\gamma})=\log\det\left(\sigma^{-1}\mathbf{I}_{m}+\sum^n_{i=1}\gamma_i\mathbf{f}_i\mathbf{f}_i^\top\right)$$
	is coercive, i.e., $\lim\limits_{\|\bgamma\|\rightarrow+\infty}\log\det\mathsf{S}(\boldsymbol{\gamma})=+\infty$.
	Moreover,
	$$\frac12\log\det\mathsf{S}(\boldsymbol{\gamma}^{(k)})\leq\mathcal{J}(\bgamma^{(k)})\leq \mathcal{J}(\bgamma^{(0)}),~\forall k\geq 0.$$
	which implies boundedness of $\{\bgamma^{(k)}\}$. Thus, there exists a subsequence $s_k$ such that $\{\bgamma^{(s_k)}\}$ converges. Due to the closed-form expression of $\boldsymbol{x}^{(k)}$ in \eqref{eq:xsubproblem}, we conclude that $\{\boldsymbol{x}^{(s_k)}\}$ also converges.
	
	Proof of (III). Taking the limit in the sufficiently descent inequality \eqref{eq:sufficientdescentJ}, we find
	$$\lim\limits_{k\rightarrow+\infty}\|\bgamma^{(k+1)}-\bgamma^{(k)}\|^2=0.$$
	Due to the convergence of $\{\bgamma^{(s_k)}\}$, we define the limit point $\bgamma^\star$:
	$$\lim\limits_{s_k\to+\infty}\bgamma^{(s_k)}=\bgamma^\star.$$ 
	For any $1\leq i\leq n$, if $\gamma_i^{(s_k)}=0$ for some $s_k\in\mathbb{N}$, based on \cref{prop:gamma} we have $\gamma_i^\star=0$. Otherwise, the stationarity condition for \eqref{1dgamma} gives
	\begin{align}
		-\frac12\left(\frac{x_i^{(s_k)}}{\gamma_i^{(s_k+1)}}\right)^2+\frac{1}{2}\tilde{q}_{ii}^{(s_k)}+\mh'\left(\gamma_i^{(s_k+1)}\right)+\tau\left(\gamma_i^{(s_k+1)}-\gamma_i^{(s_k)}\right)&=0, ~ \text{if $\mathcal{H}$ is convex},\label{eq:gammastationality1} \\
		-\frac12\left(\frac{x_i^{(s_k)}}{\gamma_i^{(s_k+1)}}\right)^2+\frac{1}{2}\tilde{q}_{ii}^{(s_k)}+\mh'\left(\gamma_i^{(s_k)}\right)+\tau\left(\gamma_i^{(s_k+1)}-\gamma_i^{(s_k)}\right)&=0, ~ \text{if $\mathcal{H}$ is concave}.\label{eq:gammastationality2}
	\end{align}
	Define $\tilde{p}_i^{(k)}=\mathbf{f}_i^\top\mathsf{S}(\bgamma^{(k)})^{-1}\boldsymbol{y}$. By \eqref{eq:xsubproblem}, we have
	$x_i^{(s_k)}=\gamma_i^{(s_k)}\tilde{p}_i^{(s_k)}.$ Substituting it into \eqref{eq:gammastationality1} and \eqref{eq:gammastationality2}, after taking the limit on both sides we obtain
	$$\frac12\tilde{q}_{ii}^\star-\frac12\left(\tilde{p}_{i}^\star\right)^2+\mh'(\gamma_i^\star)=0.$$
	From the expression of $\frac{\partial \mathcal{J}(\bgamma)}{\partial\gamma_i}$ in \eqref{eq:Jderivative-1}, we conclude that $\frac{\partial \mathcal{J}(\bgamma^\star)}{\partial\gamma_i} = 0$.
\end{proof}
\section{Numerical experiments}

In this section, we present some numerical results that illustrate how the choice of the hyperprior from the generalized Gamma family affects the sparsity of the solution to \eqref{minJ}.

We consider a 2D image deblurring problem with a Gaussian blurring operator $\mathbf{K}$ as the test problem. We assume that the inverse variance $\sigma$ of the noise and the standard deviation $\sigma_{ker}$ of the Gaussian blurring kernel are known. 
In practice, images are often not sparse in the spatial domain but can exhibit sparsity under suitable bases or dictionaries. In our tests, we assume that the test images have a sparse representation under the Discrete Cosine Transform (DCT), i.e., we have
\begin{equation}
	\label{R}
	\boldsymbol{x} = \mathbf{R} \boldsymbol{z},
\end{equation}
where $\boldsymbol{z}$ denotes the test image, $\boldsymbol{x}$ is the sparse DCT coefficients, and $\mathbf{R}$ represents DCT that satisfies
\[
\mathbf{R}^\top\mathbf{R}=\mathbf{R}\mathbf{R}^\top=\mathbf{I}.
\]
Then, we rewrite our forward problem according to \eqref{ip} as
\begin{equation}
	\boldsymbol{y}=\mathbf{K} \mathbf{R}^\top\boldsymbol{x}+\boldsymbol{\epsilon}
\end{equation}
with $\mathbf{F}=\mathbf{K} \mathbf{R}^\top$. 

In \cref{alg}, most of the computational cost lies in the calculation of the main diagonal of $\mathbf{F}^\top\mathsf{S}({\boldsymbol{\gamma}})^{-1}\mathbf{F}$ in the $\bgamma$-update. Applying the Woodbury matrix identity \cite{higham2002accuracy}, we have
\begin{align}
	\mathbf{F}^\top\mathsf{S}({\boldsymbol{\gamma}})^{-1}\mathbf{F}&=\mathbf{F}^{\top}(\sigma^{-1}\mathbf{I}+\mathbf{F}\bGamma\mathbf{F}^\top)^{-1}\mathbf{F} \notag\\
	&=\sigma\mathbf{F}^\top\mathbf{F}-\sigma\mathbf{F}^\top\mathbf{F}(\bGamma^{-1}+\sigma\mathbf{F}^\top\mathbf{F})^{-1}\sigma\mathbf{F}^\top\mathbf{F}. \label{eq:diag}
\end{align} 
Under the symmetric boundary condition, the blurring matrix $\mathbf{K}$ can be diagonalized by DCT \cite{ng1999fast}, i.e., $\mathbf{R}\mathbf{K}\mathbf{R}^\top=\mathbf{\Lambda}$, where $\mathbf{\Lambda}$ is a diagonal matrix. Further, we obtain
\[
\mathbf{F}^{\top}\mathbf{F} = \mathbf{R} (\mathbf{R}^{\top} \mathbf{\Lambda}^2\mathbf{R})\mathbf{R}^{\top} = \mathbf{\Lambda}^2.
\]
Then, according to \eqref{eq:diag} $\mathbf{F}^\top\mathsf{S}({\boldsymbol{\gamma}})^{-1}\mathbf{F}$ becomes a diagonal matrix and can be easily calculated.

In order to prevent unnecessary calculations caused by too small $\gamma_i^{(k)}$, we introduce a threshold $\omega$ in \cref{alg}. When $\gamma_{i}^{(k)}<\omega$, we set $\gamma_{i}^{(k)}=0$. According to \cref{prop:gamma}, we can neglect these zero components from the optimization problem. In all tests, we set $\omega=10^{-16}$. In addition, the initialization of $\bgamma$ is set as the absolute value of the DCT coefficients of the degraded image and the stopping criteria of \cref{alg} are 
$$\frac{\Vert\boldsymbol{x}^{(k+1)}-\boldsymbol{x}^{(k)}\Vert}{\Vert\boldsymbol{x}^{(k)}\Vert}<10^{-8}\quad \mbox{ and }\quad k\leq200.$$
To quantitatively evaluate the performance of different hyperpriors, we use the relative error and the sparsity rate as the main evaluation metrics. The relative error between the image $\hat{\boldsymbol{z}}$ and the ground truth $\boldsymbol{z}$ is defined as $\|\hat{\boldsymbol{z}}-\boldsymbol{z}\|/\|\boldsymbol{z}\|$, and the sparsity rate of $\boldsymbol{x}$ is defined as the ratio of zero elements in $\boldsymbol{x}$ to its total number of elements.

\subsection{Impact of different hyperpriors}\label{sec:num1}
In this subsection, we test the performance and characteristics of EBF with different hyperpriors compared with SBL (without any hyperprior) to illustrate the influence of the hyperpriors on the sparsity. We use the 256-by-256 gray image \textit{Cameraman} as the test image, and the intensity range is $[0,1]$. Since \textit{Cameraman} is not sufficiently sparse under DCT, we truncate its DCT coefficients at 0.025, which gives the sparsity rate as $54.94\%$, and the relative error to the original image is 0.0172. In addition, we set $\sigma_{ker}=1$ and the noise level as $10\%$ for all tests. Figure \ref{test.camera} shows the original image, the compressed image that is used as the ground truth, the degraded image and their DCT coefficient maps. In each map, frequencies increase from top to bottom and left to right.
The absolute value of each DCT coefficient indicates the contribution of the corresponding frequency component.

\begin{figure}[t]
	\label{test.camera}
	\centering
	\subfigure
	{
		\begin{minipage}[b]{.3\linewidth}
			\centering
			\includegraphics[width=\linewidth,trim=280 0 280 0,clip]{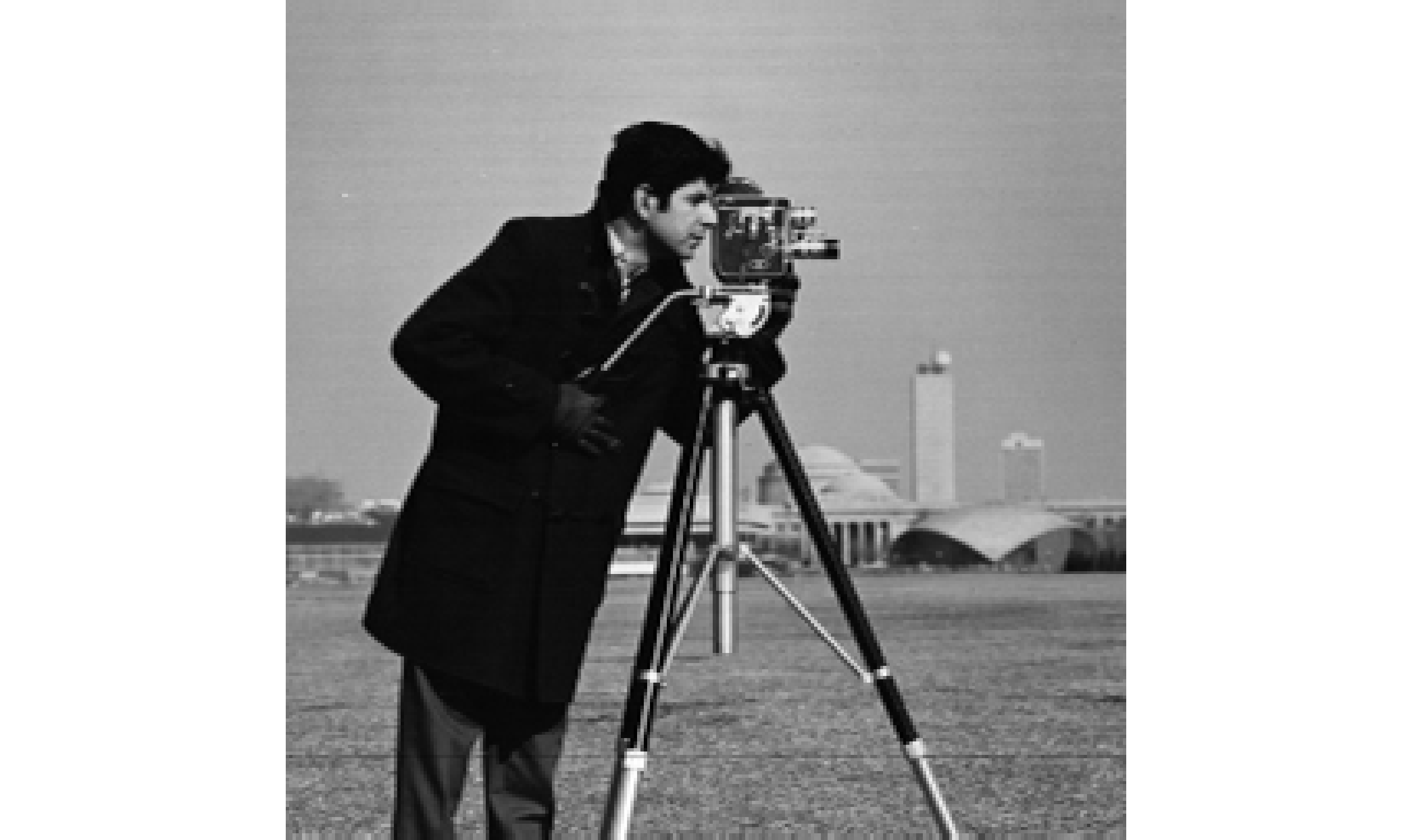}
		\end{minipage}
		\begin{minipage}[b]{.3\linewidth}
			\centering
			\includegraphics[width=\linewidth,trim=280 0 280 0,clip]{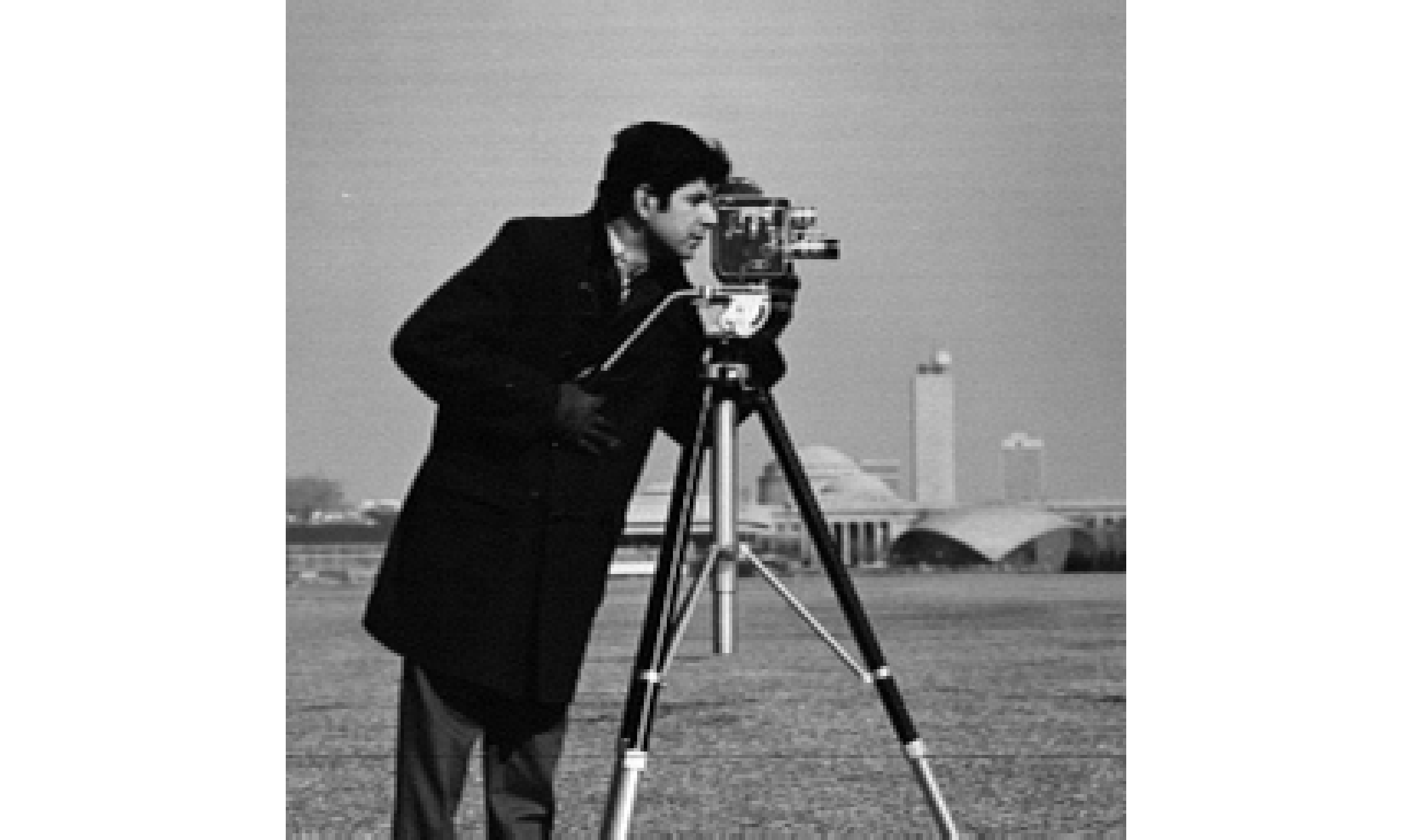}
		\end{minipage}
		\begin{minipage}[b]{.3\linewidth}
			\centering
			\includegraphics[width=\linewidth,trim=280 0 280 0,clip]{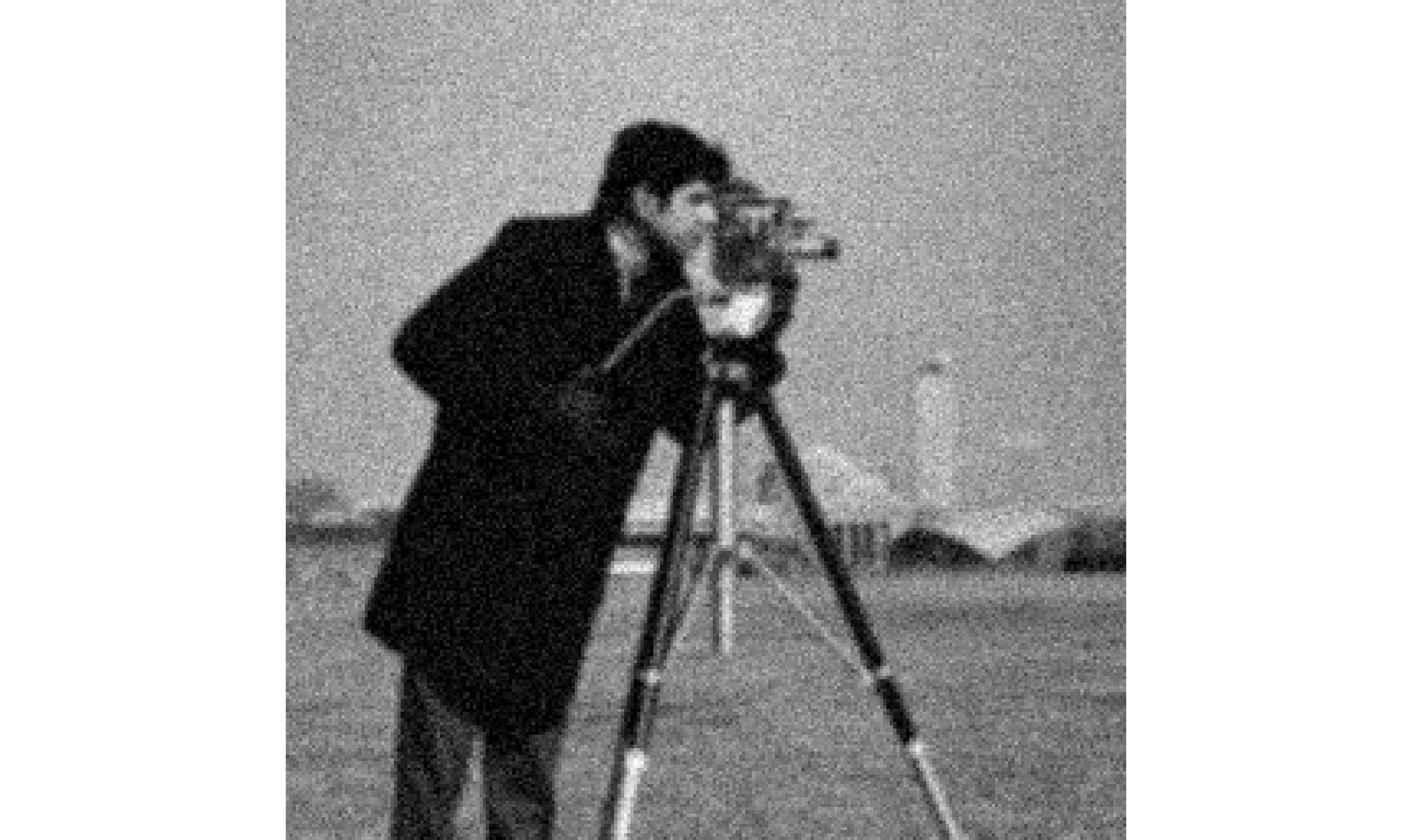}
		\end{minipage}
		\hspace{0.01\linewidth}
	}
	\subfigure
	{
		\begin{minipage}[b]{.3\linewidth}
			\centering
			\includegraphics[width=\linewidth,trim=280 0 280 0,clip]{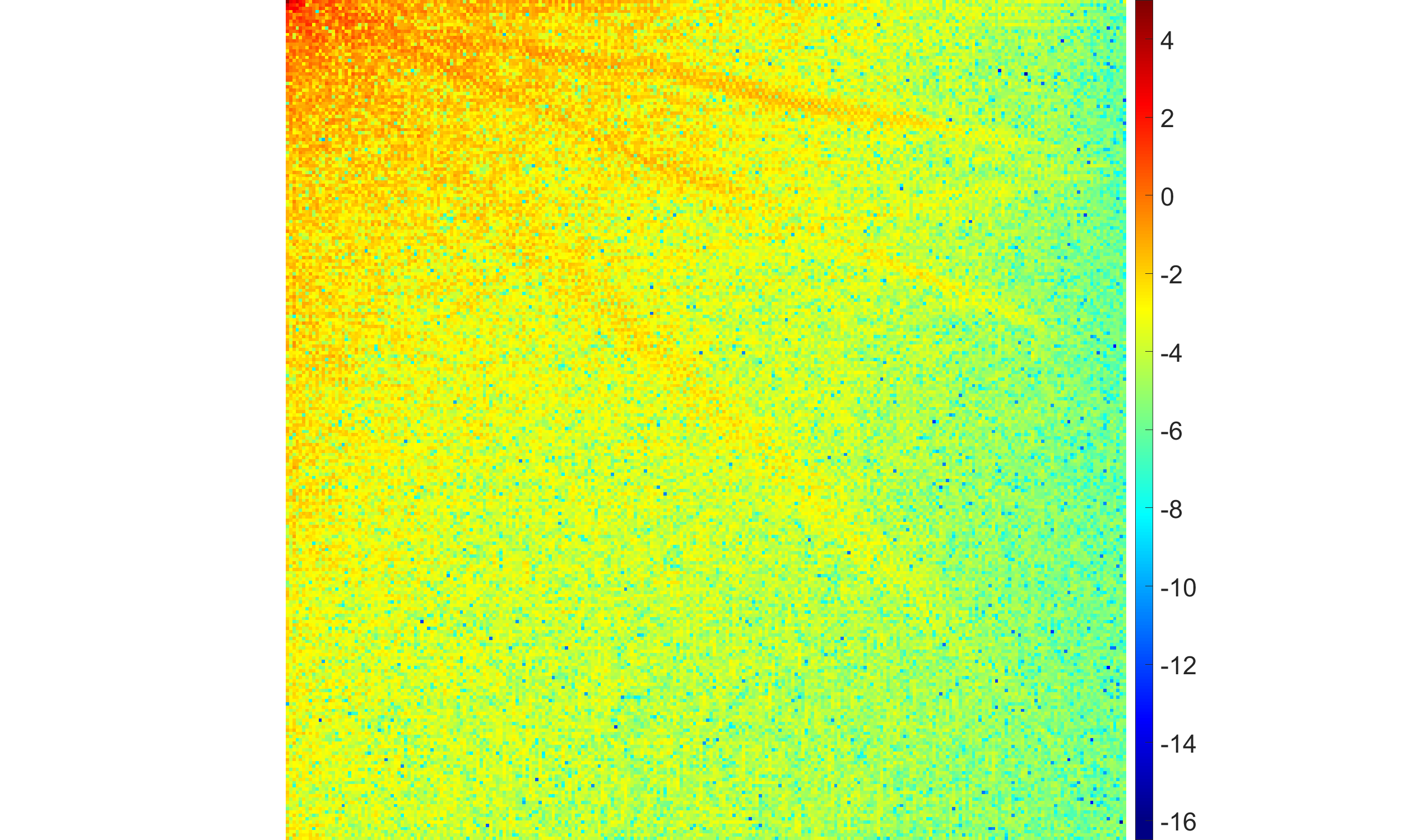}\\
			Original image
		\end{minipage}
		\begin{minipage}[b]{.3\linewidth}
			\centering
			\includegraphics[width=\linewidth,trim=280 0 280 0,clip]{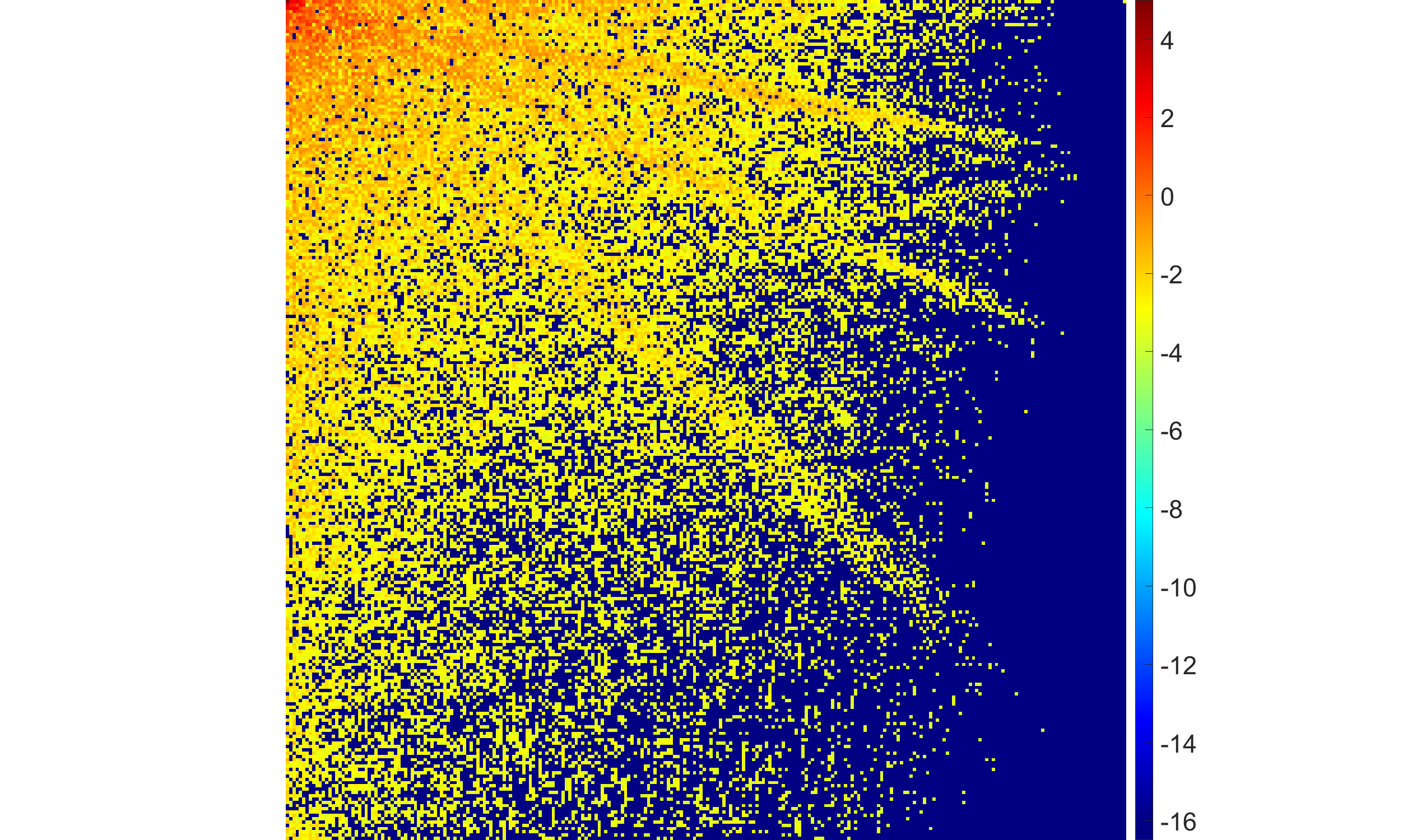}\\
			Compressed image
		\end{minipage}
		\begin{minipage}[b]{.3\linewidth}
			\centering
			\includegraphics[width=\linewidth,trim=280 0 280 0,clip]{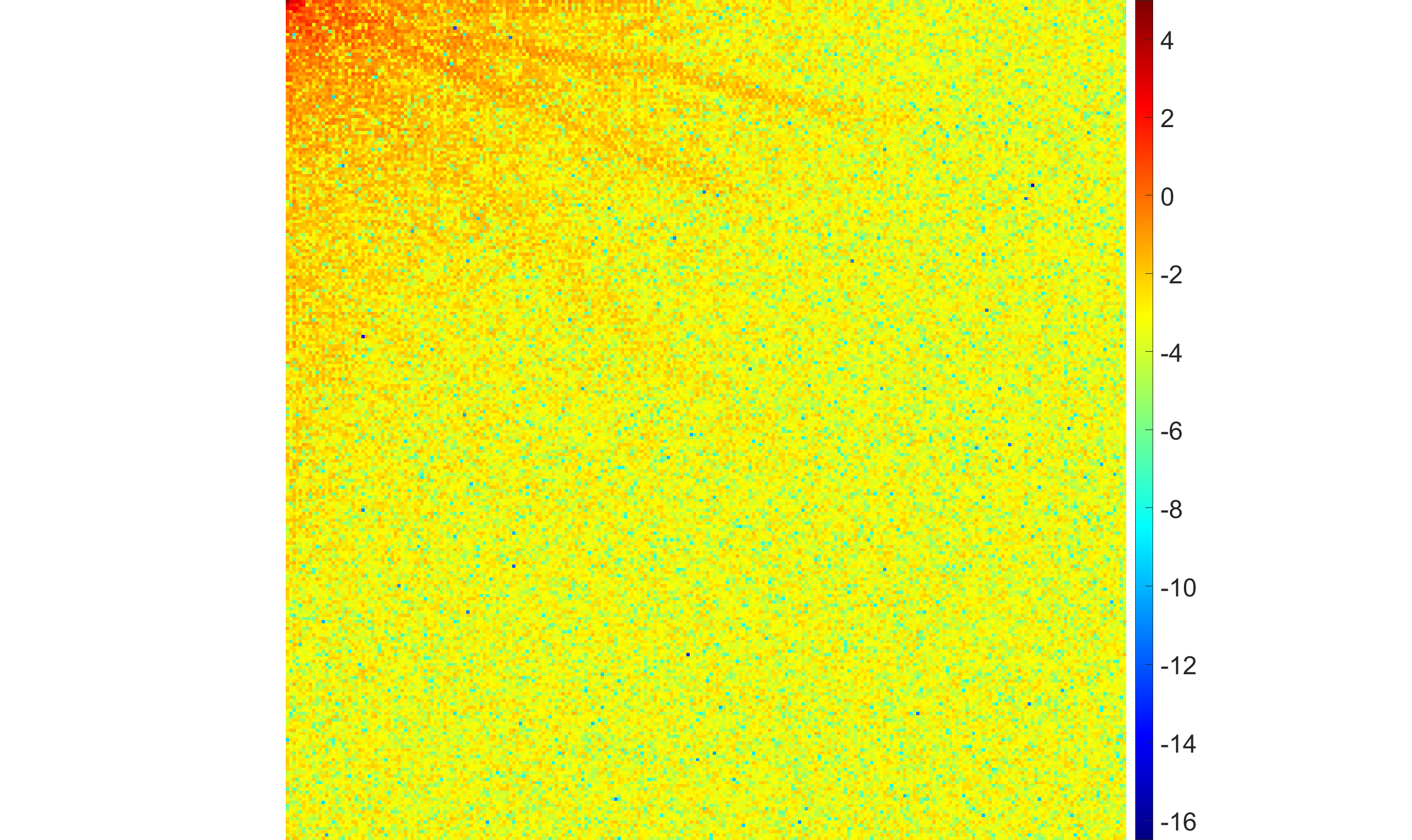}\\
			Degraded image
		\end{minipage}
		\hspace{0.01\linewidth}
	}
	\caption{The \textit{Cameraman} image. Top row: test images; Bottom row: absolute values of DCT coefficients.} 
\end{figure}

\subsubsection{Comparison of the half-Laplace and half-Gaussian hyperpriors}

In this numerical test, we illustrate the influence of using the half-Laplace and half-Gaussian hyperpriors on the solutions of \eqref{model:F}, and further compare the effect of these two hyperpriors with the one without hyperprior, i.e., SBL. To ensure that the strengths of two hyperpriors are comparable, we set $\beta = \theta=0.1$.

\begin{figure}[t]
	\label{fig1.gau_gam}
	\centering
	\subfigure
	{
		\begin{minipage}[b]{.3\linewidth}
			\centering
			\includegraphics[width=\linewidth,trim=280 0 280 0,clip]{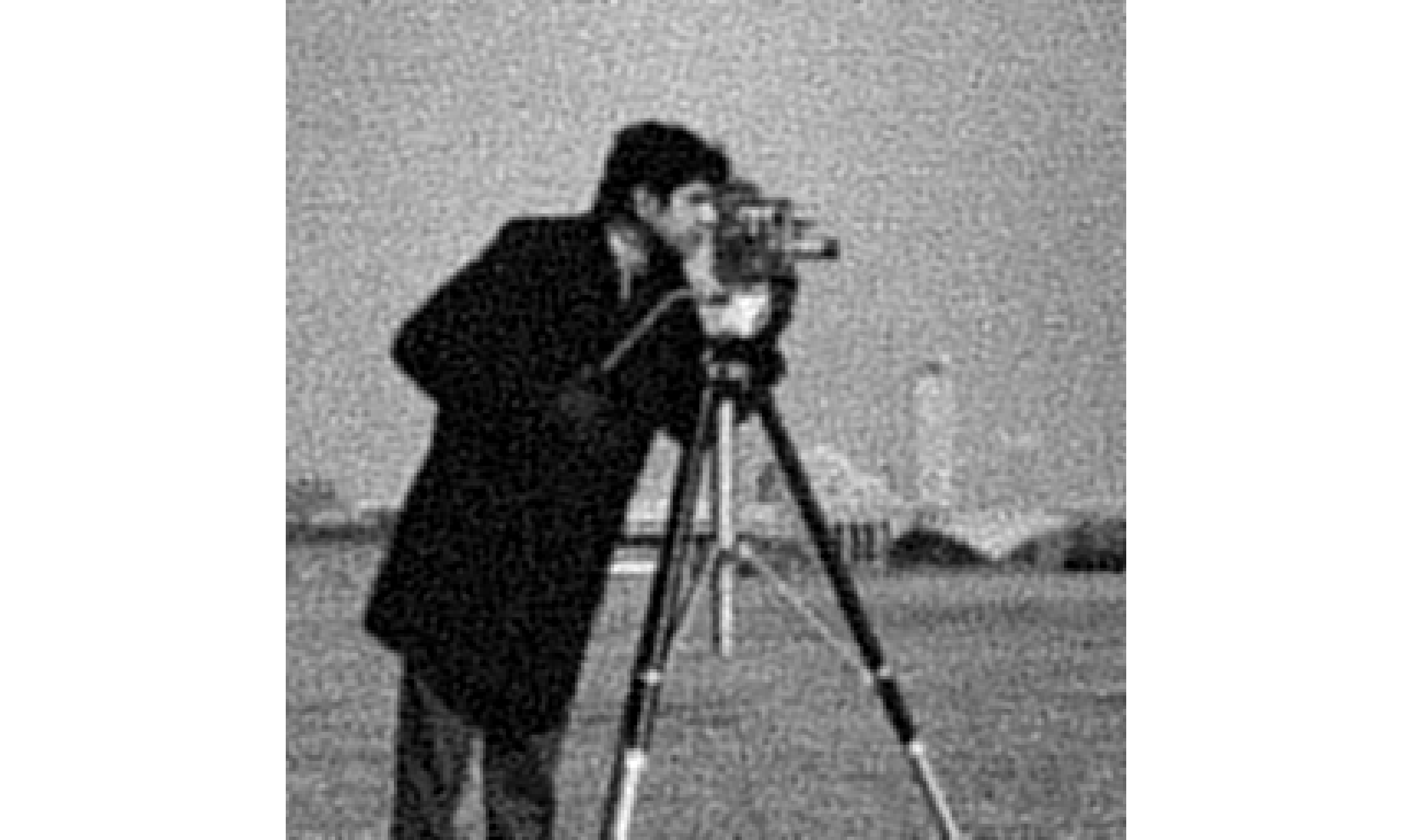}
		\end{minipage}
		\begin{minipage}[b]{.3\linewidth}
			\centering
			\includegraphics[width=\linewidth,trim=280 0 280 0,clip]{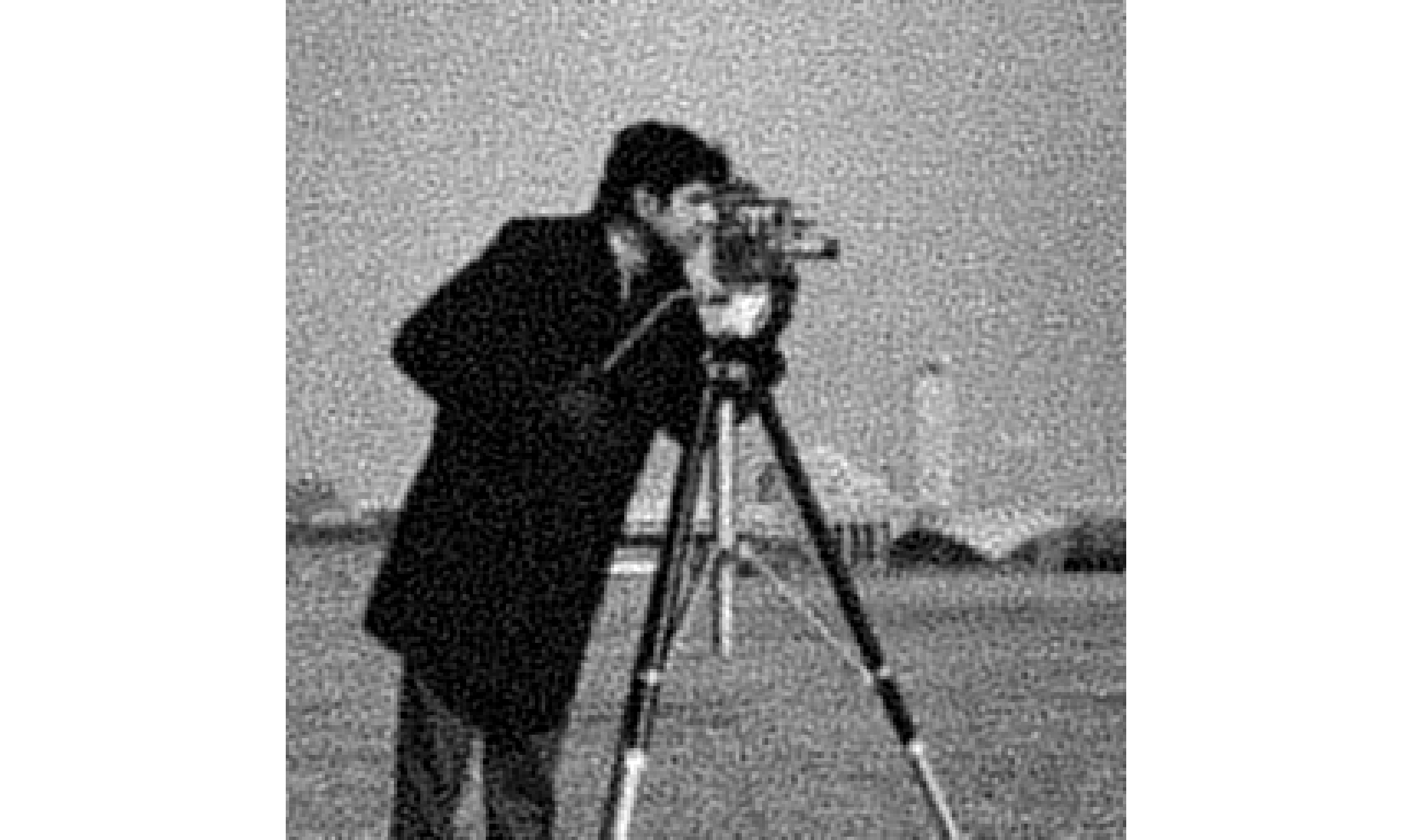}
		\end{minipage}
		\begin{minipage}[b]{.3\linewidth}
			\centering
			\includegraphics[width=\linewidth,trim=280 0 280 0,clip]{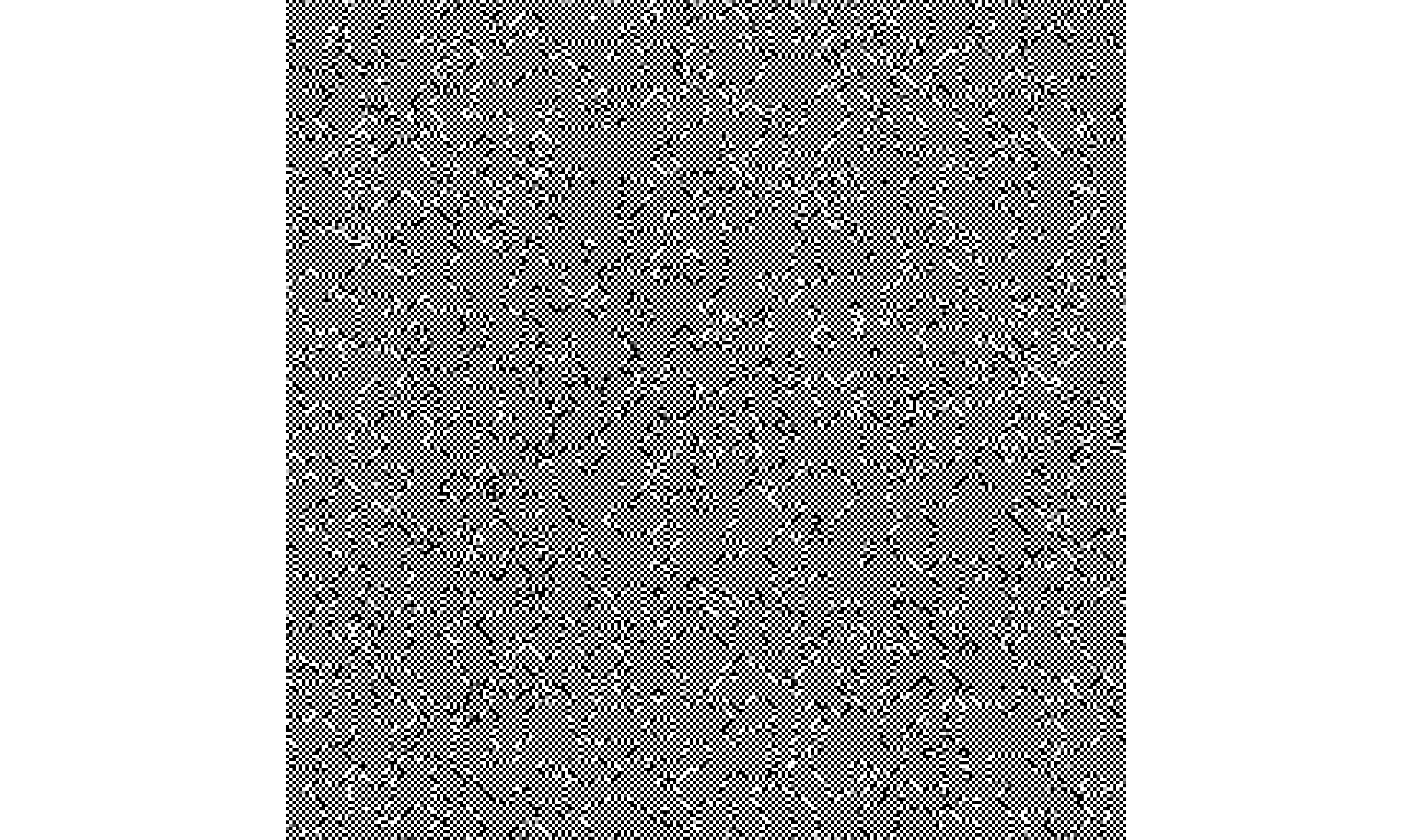}
		\end{minipage}
		\hspace{0.01\linewidth}
	}
	\subfigure
	{
		\begin{minipage}[b]{.3\linewidth}
			\centering
			\includegraphics[width=\linewidth,trim=280 0 280 0,clip]{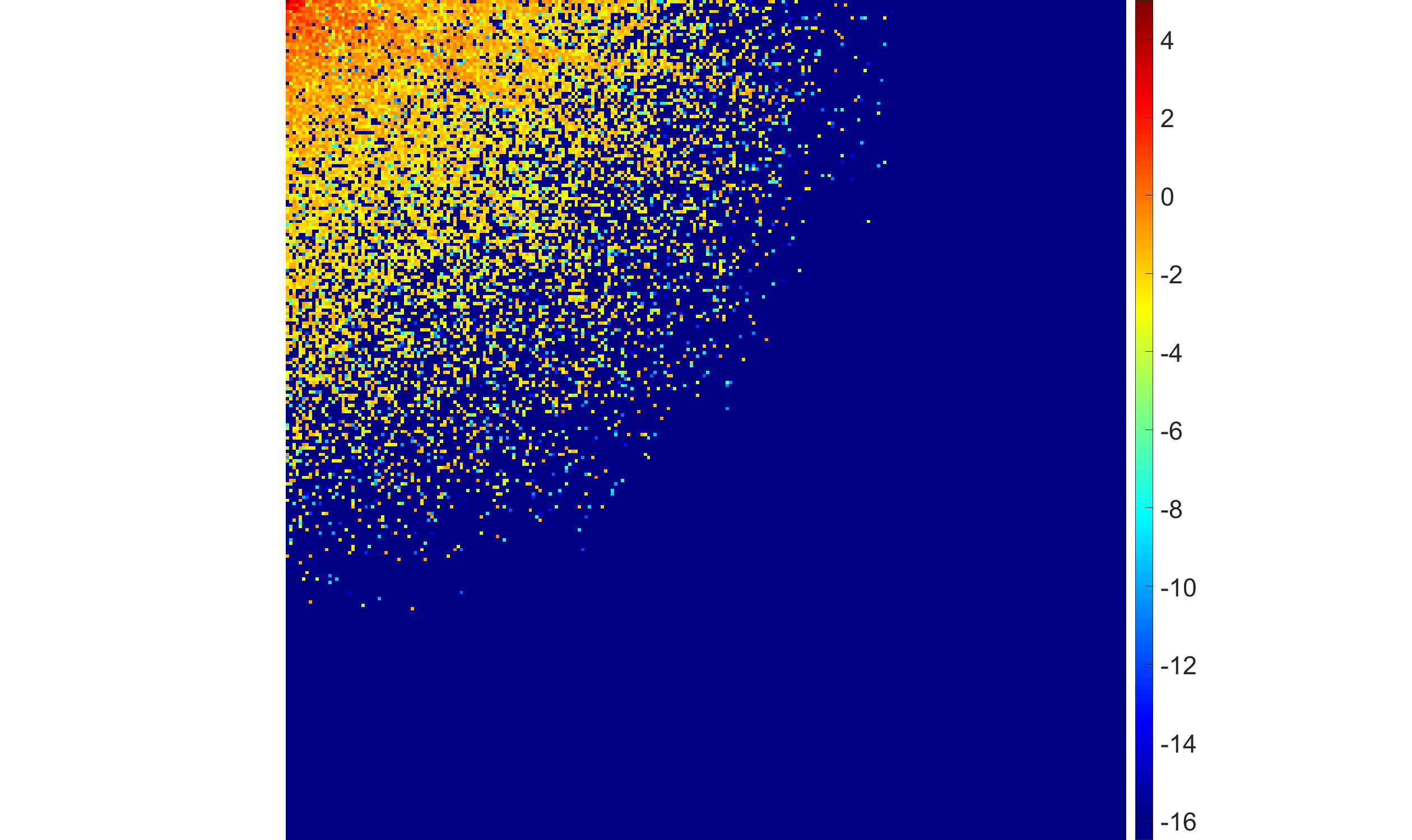}\\
			Half-Laplace
		\end{minipage}
		
		\begin{minipage}[b]{.3\linewidth}
			\centering
			\includegraphics[width=\linewidth,trim=280 0 280 0,clip]{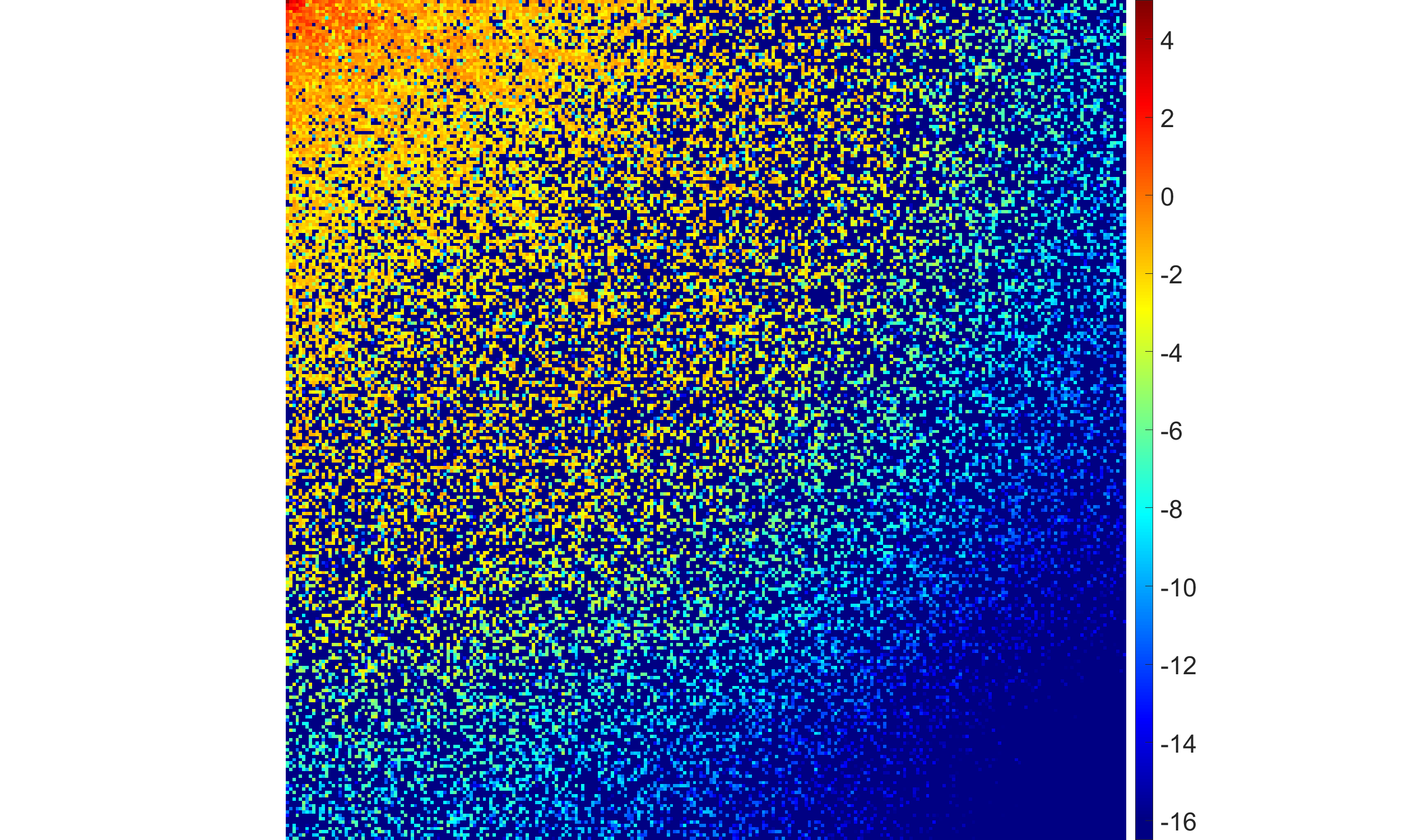}\\
			Half-Gaussian
		\end{minipage}
		
		\begin{minipage}[b]{.3\linewidth}
			\centering
			\includegraphics[width=\linewidth,trim=280 0 280 0,clip]{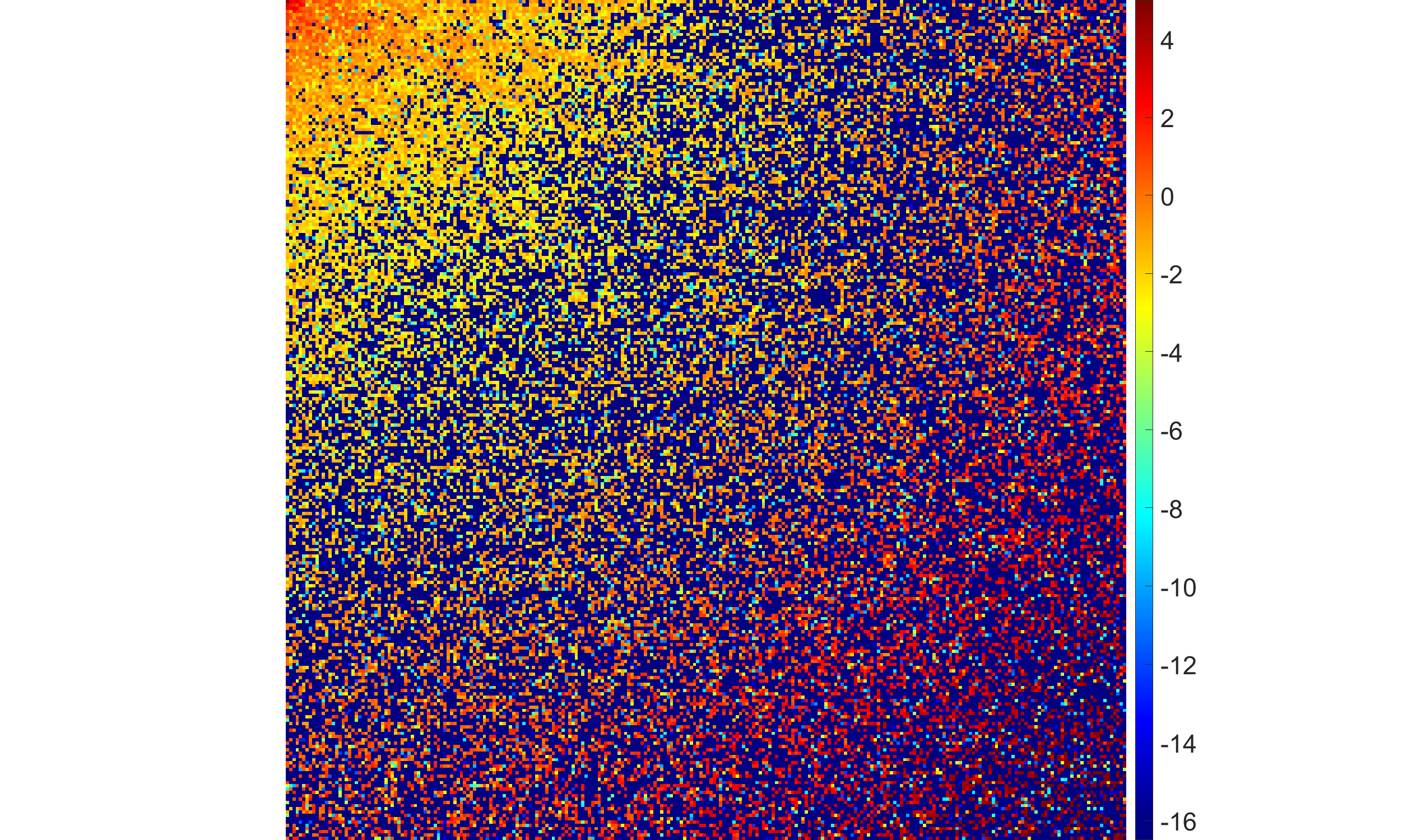}\\
			SBL
		\end{minipage}
		\hspace{0.01\linewidth}
	}
	\caption{Comparisons of the results using the half-Laplace, half-Gaussian and no hyperprior.
		Top row: restored images; Bottom row: absolute values of DCT coefficients.}
\end{figure}

In Figure \ref{fig1.gau_gam}, it is obvious that the half-Laplace hyperprior significantly promotes the sparsity of the DCT coefficients, making the sector-shaped region of non-zero coefficients more compact, which leads to fewer artifacts in the restoration. The half-Gaussian hyperprior produces a smoother decay of the coefficients from low frequency to high frequency, but keeps many small non-zero components, resulting in more artifacts in the restoration compared with the half-Laplace result. In contrast, the SBL method without any hyperprior exhibits numerous scattered abnormal peaks in the coefficient map, leading to high-frequency noise and resulting in obvious speckles and artifacts in the restoration. According to the relative errors listed in \cref{tab1}, we can see that the half-Laplace hyperprior provides the best results quantitively. Comparing the sparsity level, i.e. the percentage of the zero elements in $\boldsymbol{x}$, in \cref{tab1} we conclude that the half-Laplace hyperprior can promote the sparsity. The half-Gaussian hyperprior can stabilize the problem by reducing the contributions of high-frequency components, but cannot promote sparsity.  

\subsubsection{Influence of the half-generalized Gaussian hyperpriors}

To investigate the influence of the half-generalized Gaussian hyperprior on the sparsity of the DCT coefficients, we test it with different choice of $p$, which controls the sharpness and tail thickness of the hyperprior. When $p = 1$, the hyperprior reduces to the half-Laplace hyperprior. When $p = 2$, it becomes the half-Gaussian hyperprior. To obtain a prior that strongly promotes sparsity, we focus on the case $0 < p \leq 1$. In this test, we set $\beta=0.1$. Figure \ref{fig2.lp} shows the restored images and the DCT coefficients with $p=0.5$, $0.75$ and $1$. 

\begin{figure}[t]\label{fig2.lp}
	\centering
	\subfigure
	{
		\begin{minipage}[b]{.3\linewidth}
			\centering
			\includegraphics[width=\linewidth,trim=280 0 280 0,clip]{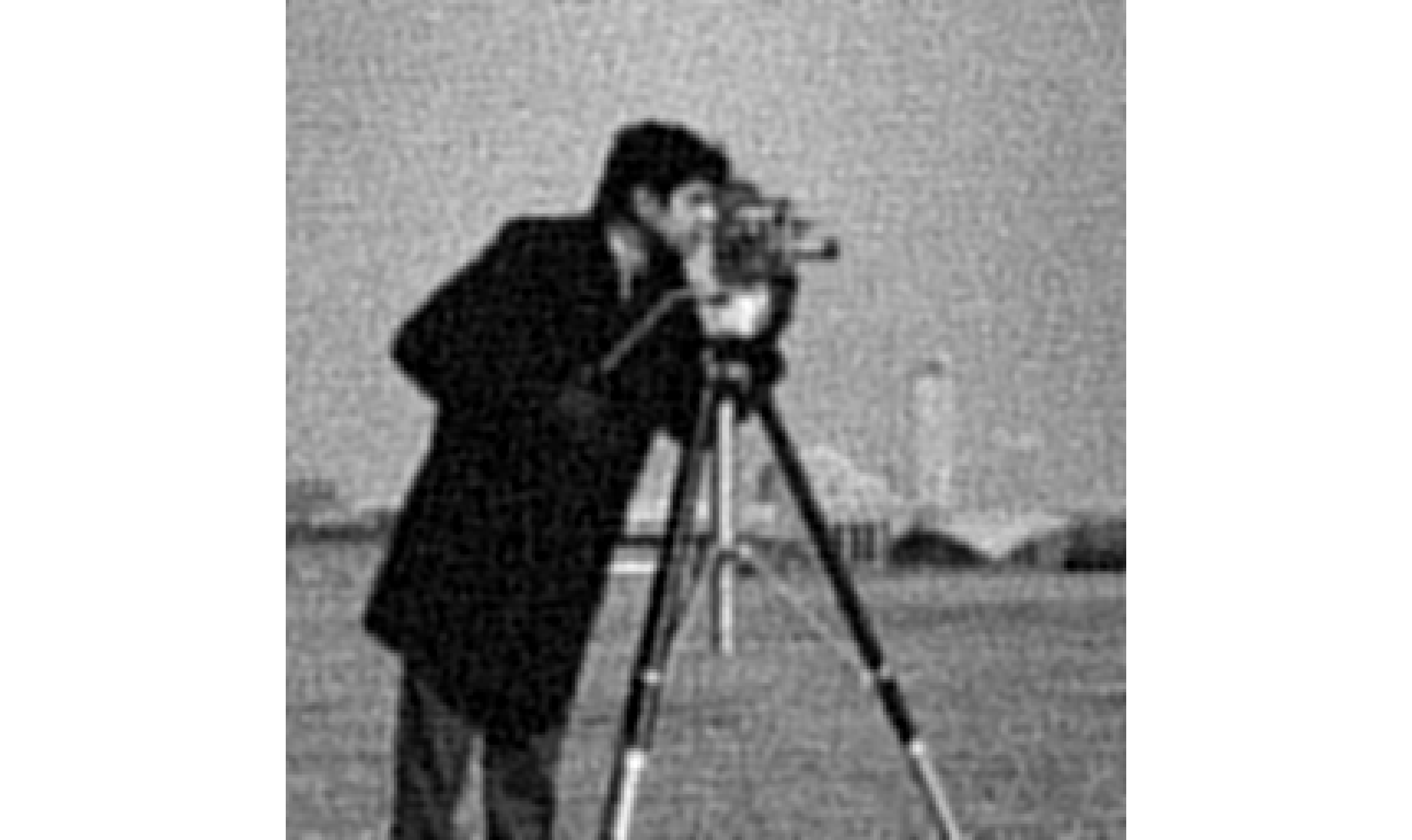}
		\end{minipage}
		\begin{minipage}[b]{.3\linewidth}
			\centering
			\includegraphics[width=\linewidth,trim=280 0 280 0,clip]{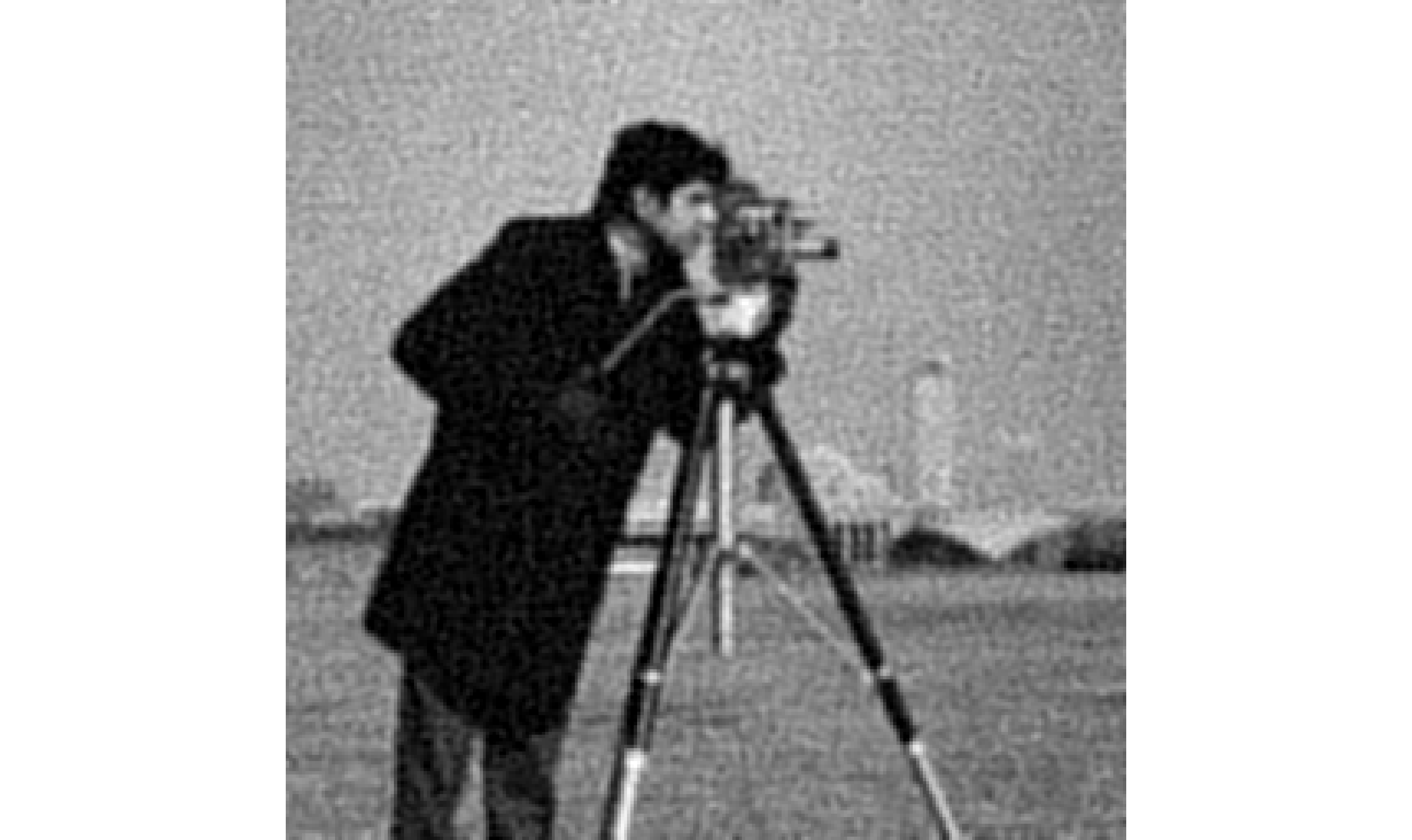}
		\end{minipage}
		\begin{minipage}[b]{.3\linewidth}
			\centering
			\includegraphics[width=\linewidth,trim=280 0 280 0,clip]{l1x.png}
		\end{minipage}
		\hspace{0.01\linewidth}
	}
	\subfigure
	{
		\begin{minipage}[b]{.3\linewidth}
			\centering
			\includegraphics[width=\linewidth,trim=280 0 280 0,clip]{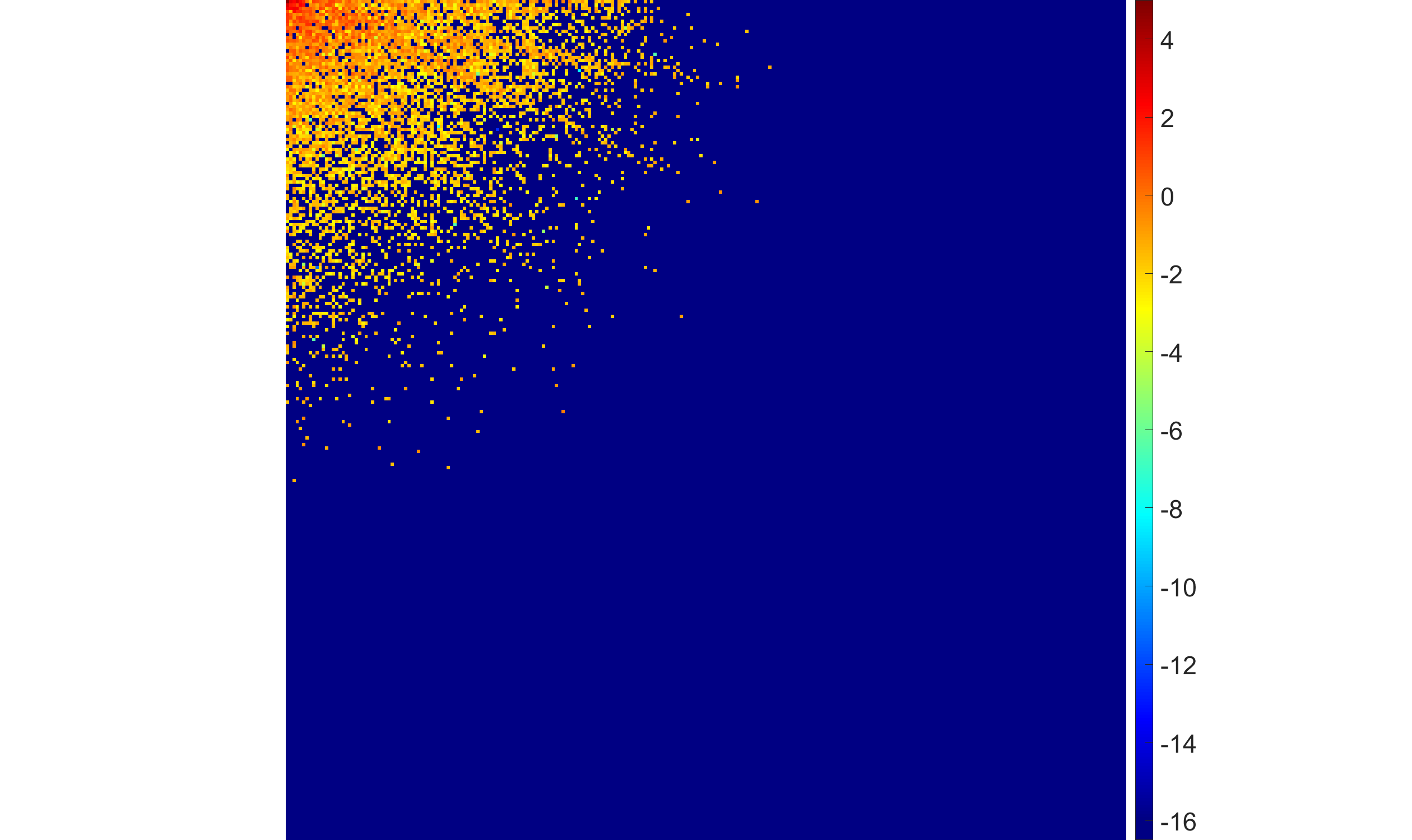}\\
			$p=0.5$
		\end{minipage}
		\begin{minipage}[b]{.3\linewidth}
			\centering
			\includegraphics[width=\linewidth,trim=280 0 280 0,clip]{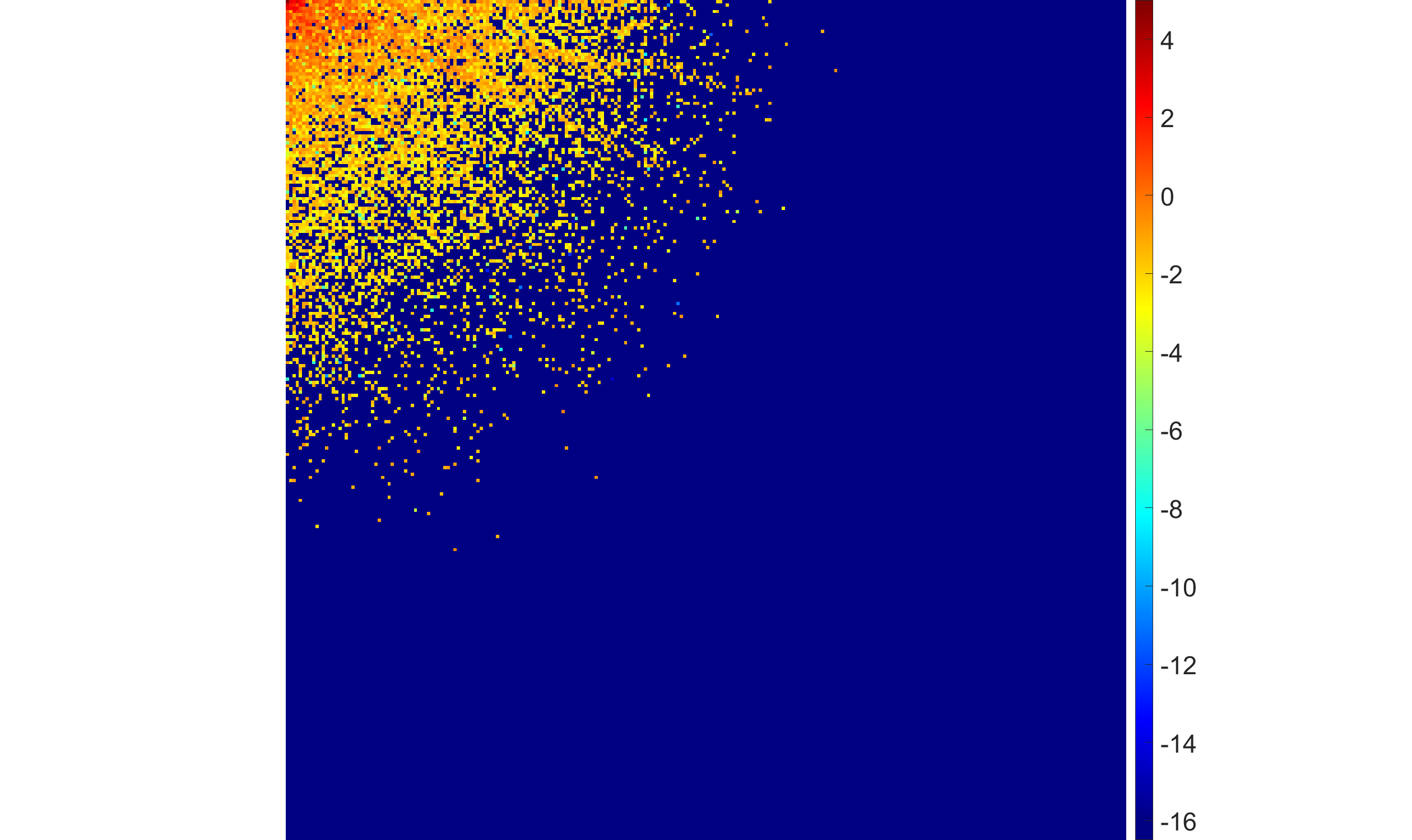}\\
			$p=0.75$
		\end{minipage}
		\begin{minipage}[b]{.3\linewidth}
			\centering
			\includegraphics[width=\linewidth,trim=280 0 280 0,clip]{l1c.png}\\
			$p=1$ (half-Laplace)
		\end{minipage}
		\hspace{0.01\linewidth}
	}
	\caption{Comparisons of the results using the half-generalized Gaussian hyperpriors with different $p$.
		Top row: restored images; Bottom row: absolute values of DCT coefficients.}
\end{figure}

In the half-generalized Gaussian distribution, the smaller the $p$-value, the sharper the peak of the distribution. Therefore, the half-generalized Gaussian hyperprior with a smaller $p$ has a stronger capability to promote sparsity. This property can be easily observed by comparing the DCT coefficients shown in Figure \ref{fig2.lp}, where more and more high-frequency coefficients are driven to zero as $p$ decreasing. The sparsity levels reported in \cref{tab1} also
confirm this trend. However, setting \( p \) too small (e.g., \( p = 0.5 \)) can lead to oversmoothing and loss of image detail. As shown in the figure, the restoration for \( p = 0.5 \) appears overly smooth compared to the results with larger \( p \).

Although the half-generalized Gaussian hyperpriors with $0<p<1$ can offer excellent sparse solutions, they correspond to a nonconvex $\mh$, making analysis and computation more challenging. In contrast, with the half-Laplace hyperprior, i.e., $p=1$, we can obtain the similar restoration quality and sparsity level, and the corresponding $\mh$ is convex. Therefore, in \cref{sec:num2} we choose the half-Laplace hyperprior to illustrate the performance of EBF with respect to the ill-posedness and noise. 

\begin{table}[t]
	\centering
	\caption{Restoration performance of half-generalized Gaussian hyperpriors with different $p$.}
	\label{tab1}
	\resizebox{\textwidth}{!}{
	\begin{tabular}{lccccc}
		\toprule
		\multirow{2}{*}{~} 
		& \multicolumn{5}{c}{$\ell_p$ Hyperprior} \\ 
		\cmidrule(lr){2-6}
		& None (SBL) & $p=2$ (Gaussian) & $p=1$ (Laplace) & $p=0.75$ & $p=0.5$\\ 
		\midrule
		Relative error & 0.5246 & 0.1279 & 0.1055 & 0.1017 & 0.1041 \\
		Sparsity (\%)  & 55.95 & 56.36 & 84.73 & 90.32 & 93.25 \\ 
		\bottomrule
	\end{tabular}}
\end{table}

\subsubsection{Influence of the Gamma hyperprior}
Before discussing the performance of EBF equipped with half-Laplace, we test EBFs with the Gamma hyperprior. According to \cref{example:Gamma}, we know that the relationship between $\alpha$ and $1$ determines the sparsity-promoting effect of the Gamma hyperprior. In this test, we fix $\beta=0.1$ and show how the hyperparameter $\alpha$ affects the sparsity of the solution. Figure \ref{fig3.gamma} shows the restored images and the corresponding DCT coefficient maps for $\alpha=0.5$, $1$ and $1.5$, respectively.

\begin{figure}[t]\label{fig3.gamma}
	\centering
	\subfigure
	{
		\begin{minipage}[b]{.3\linewidth}
			\centering
			\includegraphics[width=\linewidth,trim=280 0 280 0,clip]{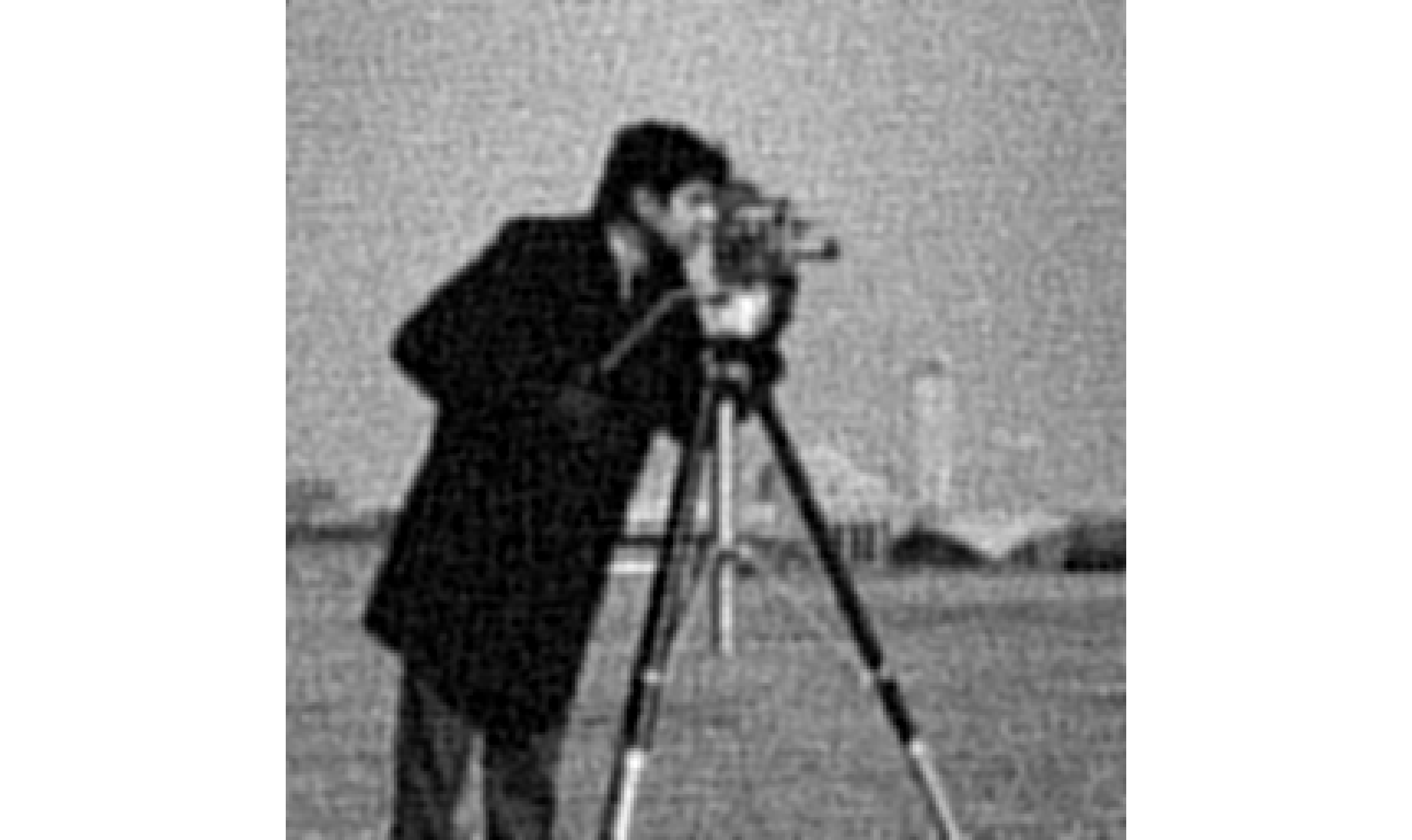}
		\end{minipage}
		\begin{minipage}[b]{.3\linewidth}
			\centering
			\includegraphics[width=\linewidth,trim=280 0 280 0,clip]{l1x.png}
		\end{minipage}
		\begin{minipage}[b]{.3\linewidth}
			\centering
			\includegraphics[width=\linewidth,trim=280 0 280 0,clip]{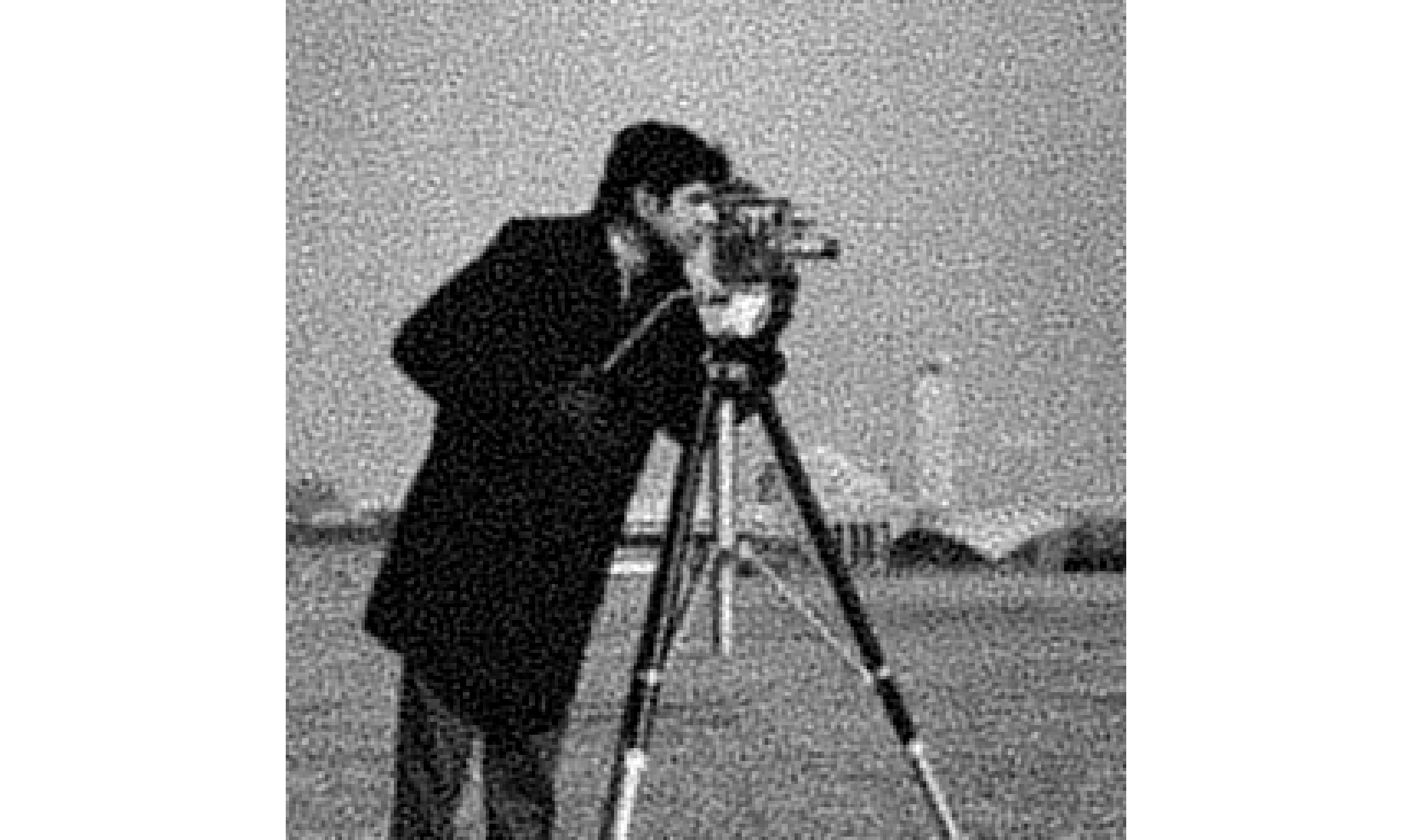}
		\end{minipage}
		\hspace{0.01\linewidth}
	}
	\subfigure
	{
		\begin{minipage}[b]{.3\linewidth}
			\centering
			\includegraphics[width=\linewidth,trim=280 0 280 0,clip]{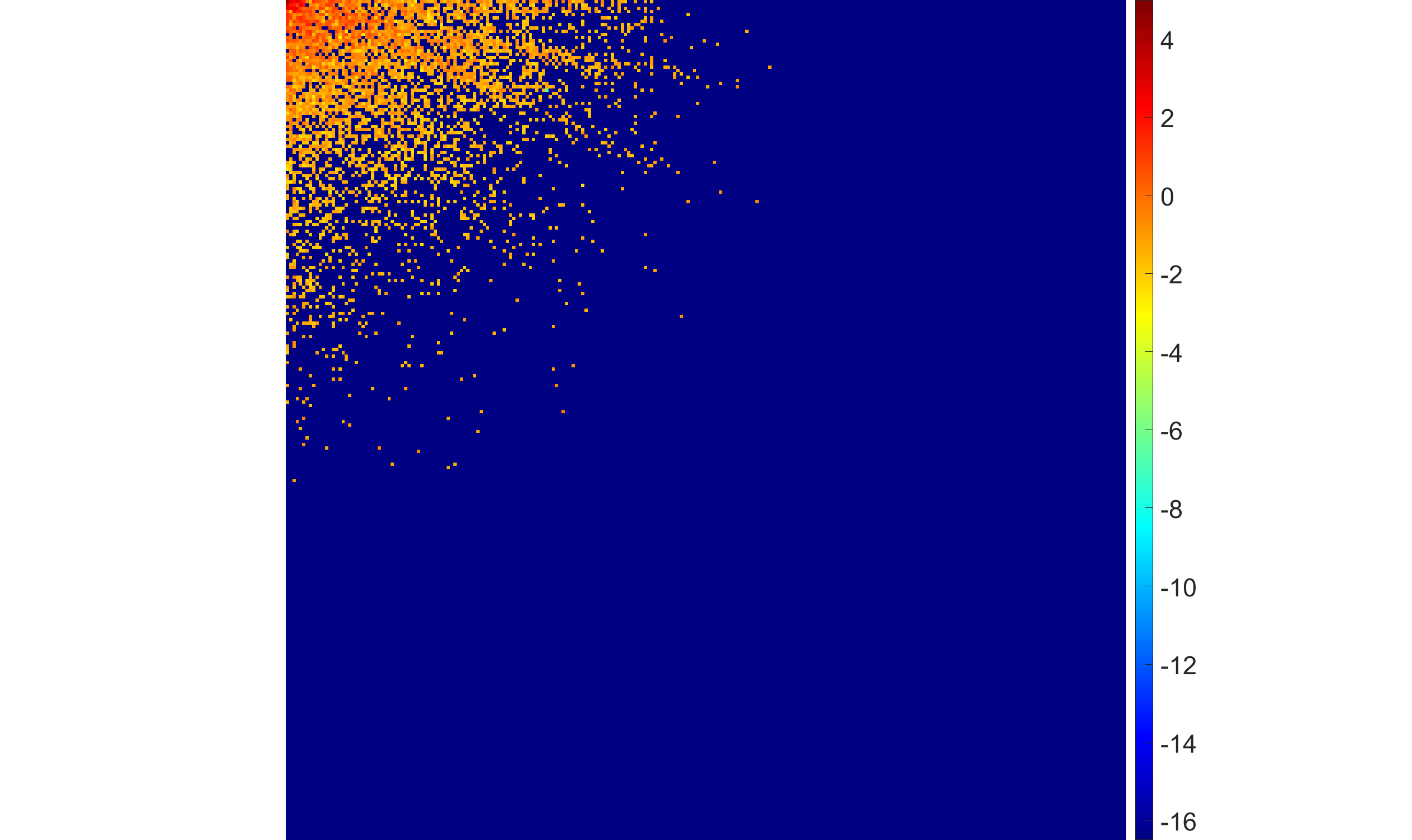}\\
			$\alpha=0.5$
		\end{minipage}
		\begin{minipage}[b]{.3\linewidth}
			\centering
			\includegraphics[width=\linewidth,trim=280 0 280 0,clip]{l1c.png}\\
			$\alpha=1$ (half-Laplace)
		\end{minipage}
		\begin{minipage}[b]{.3\linewidth}
			\centering
			\includegraphics[width=\linewidth,trim=280 0 280 0,clip]{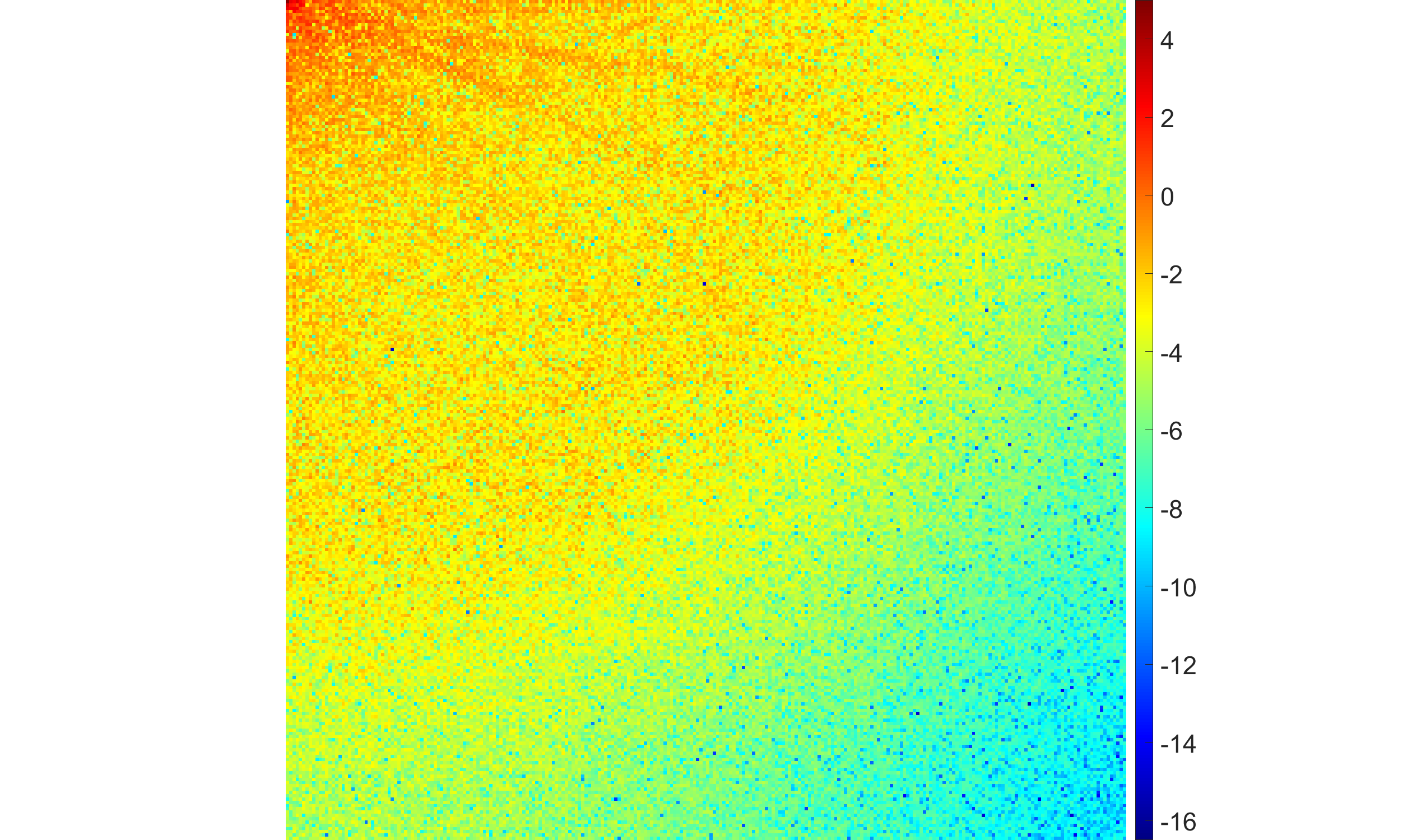}\\
			$\alpha=1.5$
		\end{minipage}
		\hspace{0.01\linewidth}
	}
	\caption{Comparisons of the results using the Gamma hyperprior with different $\alpha$.
		Top row: restored images with relative errors of $0.1084$, $0.1055$ and $0.1423$, respectively; Bottom row: absolute values of DCT coefficients with sparsity rates of $94,74\%$, $84.73\%$ and $0\%$, respectively.}
\end{figure}

In the Gamma hyperprior, when $0<\alpha<1$, $\boldsymbol{\gamma}^\star=0$ is always a global minimizer. Therefore, the Gamma hyperprior is a strongly sparsity-promoting prior in this case, thereby pushing all $\gamma_i$ values towards $0$. However, there are still quite a few non-zeros $\gamma_i$ in the low-frequency region in Figure \ref{fig3.gamma} because the algorithm converges to other KKT points instead of the global minimizer. When $\alpha>1$, many mid- and high-frequency coefficients that should have been suppressed remain relatively large magnitudes. It can be observed from both Figure \ref{fig3.gamma} and its $0\%$ sparsity rate that the Gamma hyperprior in this case completely destroys the sparse structure with a positive $\bgamma$, but it can still stabilize the problem. The case $\alpha = 1$, corresponding to the half-Laplace hyperprior, has been analyzed in previous tests.

\subsection{Influence of ill-posedness and noise}\label{sec:num2}

In \cref{sec:num1}, we have compared the influence of different hyperpriors applied in EBF on sparsity and restoration. In this subsection, we will use the half-Laplace hyperprior as an example to study the performance of EBF under different levels of ill-posedness and noise. In our tests, besides \textit{Cameraman} we include 256-by-256 gray image \textit{House} as an extra test image. To ensure the sparsity in its DCT coefficients, we pre-process the image similarly by truncating its DCT coefficients with the threshold $0.025$. It gives the sparsity rate as $75.18\%$, and the relative error between the compressed and original images is 0.0157. In Figure \ref{fig4.testhouse} we show the compressed \textit{House} image with its DCT coefficient map.

\begin{figure}[t]
	\label{fig4.testhouse}
	\centering
		\begin{minipage}[b]{.3\linewidth}
			\centering
			\includegraphics[width=\linewidth,trim=280 0 280 0,clip]{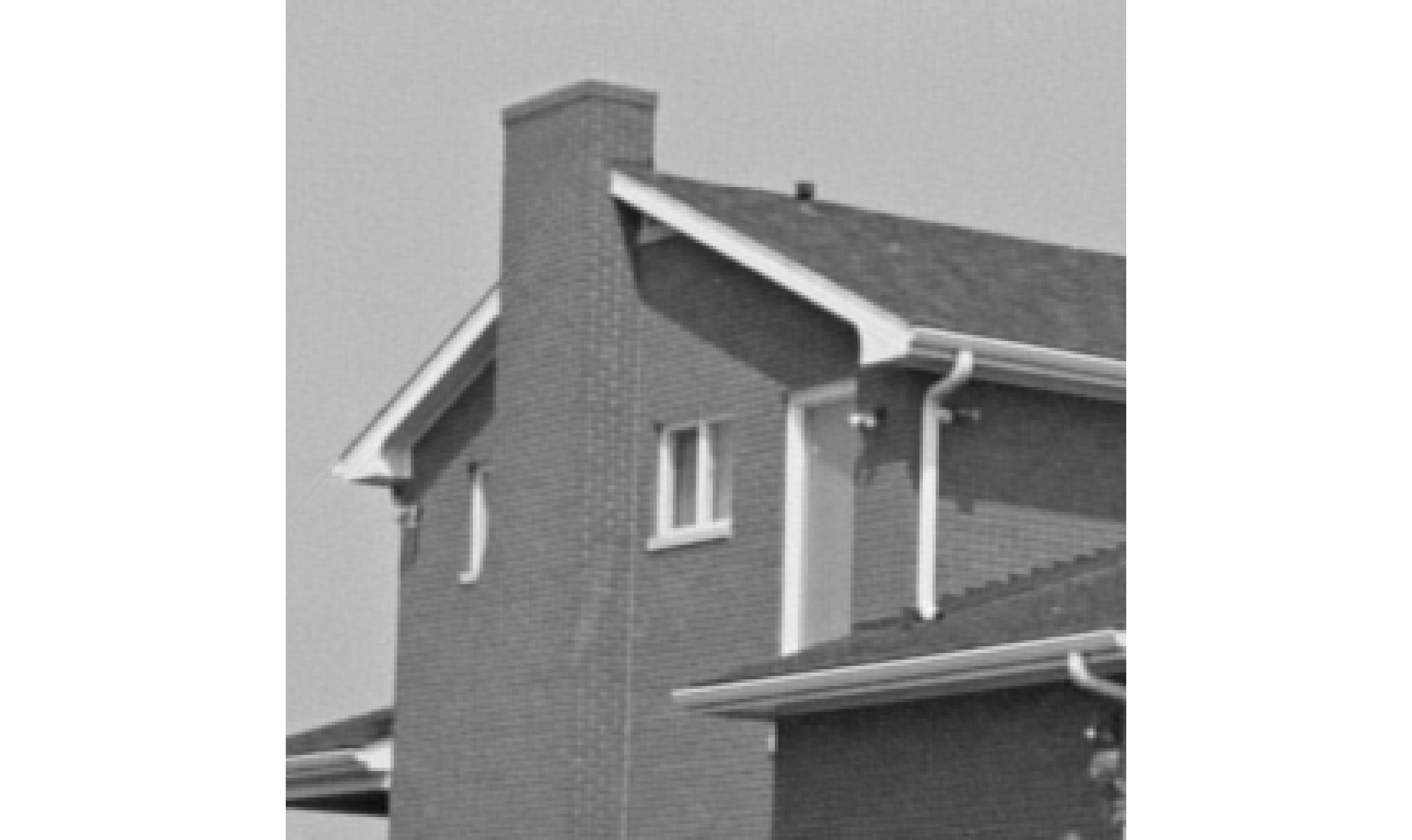}
		\end{minipage}
		\begin{minipage}[b]{.3\linewidth}
			\centering
			\includegraphics[width=\linewidth,trim=280 0 280 0,clip]{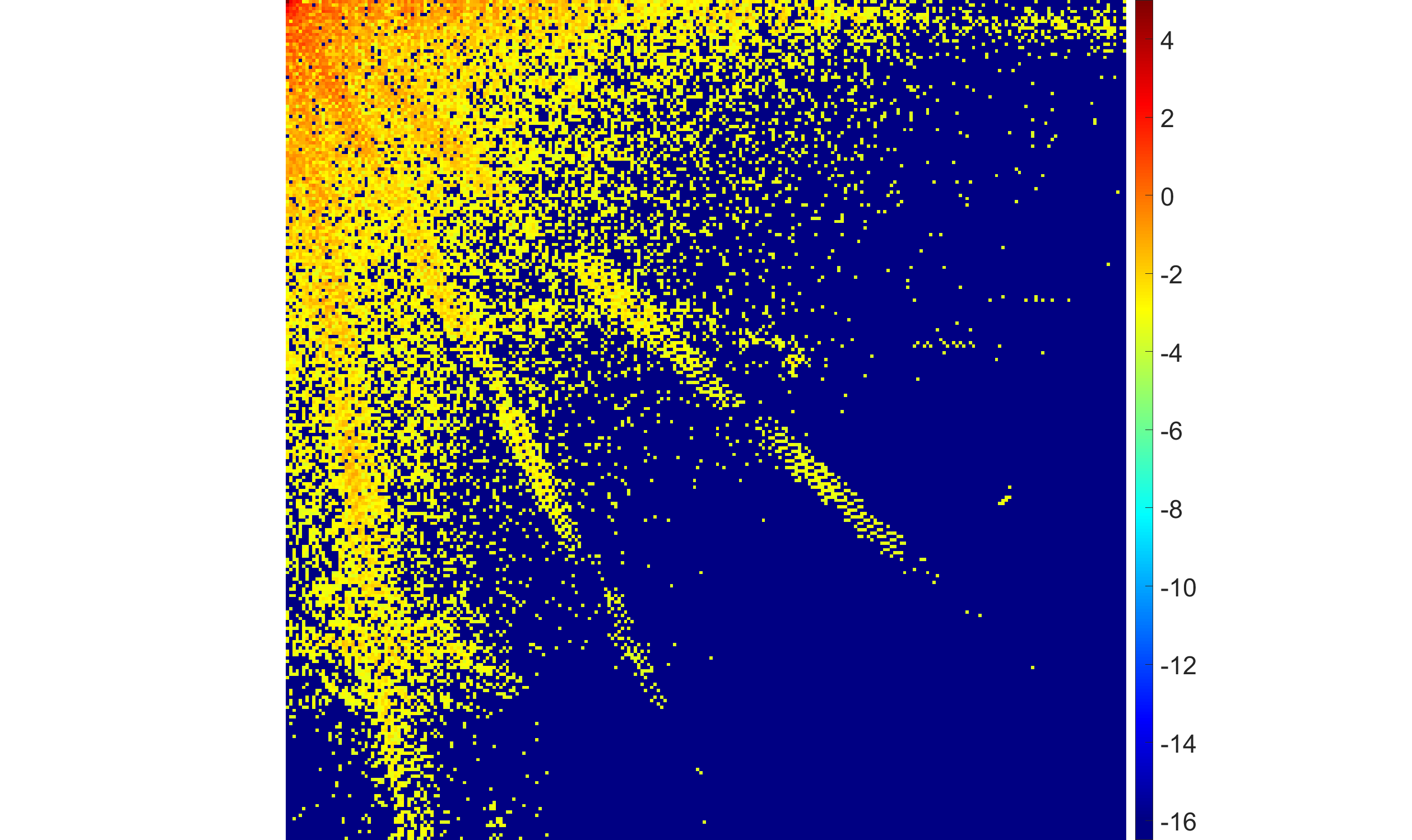}
		\end{minipage}
		\hspace{0.01\linewidth}
	\caption{The compressed \textit{House} image and its DCT coefficients.}
\end{figure}

\subsubsection{Ill-posedness}
In all tests, we consider the Gaussian blur. The standard deviation $\sigma_{ker}$ of the Gaussian blurring kernel controls the ill-posedness of the inverse problem. A larger $\sigma_{ker}$ results in stronger smoothing and faster decay of the singular values of the forward operator, thereby making the inverse problem more ill-posed. In this test, we vary the ill-posedness of the problem by adopting different values of $\sigma_{ker}$. We set $\sigma_{ker}=0.5$, $1$ and $1.5$. In the half-Laplace hyperprior we still set $\beta$ as $0.1$. To reduce the interference of noise and ensure that the differences in performance in the tests are mainly caused by the variation in ill-posedness, we set a rather low noise level as $5\%$.

\begin{figure}[t]
	\label{fig4.restore}
	\centering
	\subfigure
	{
		\begin{minipage}[b]{.3\linewidth}
			\centering
			\includegraphics[width=\linewidth,trim=280 0 280 0,clip]{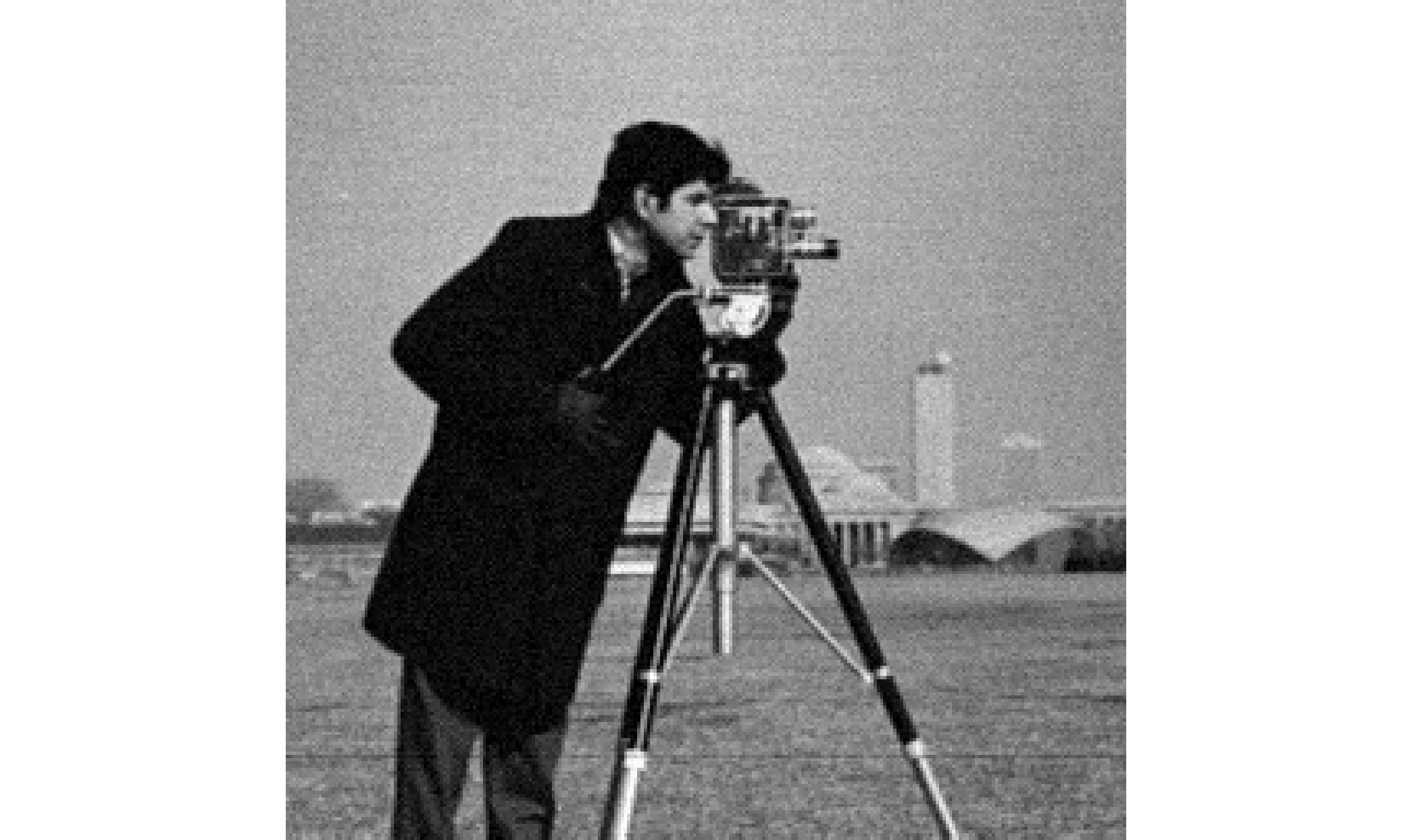}
		\end{minipage}
		\begin{minipage}[b]{.3\linewidth}
			\centering
			\includegraphics[width=\linewidth,trim=280 0 280 0,clip]{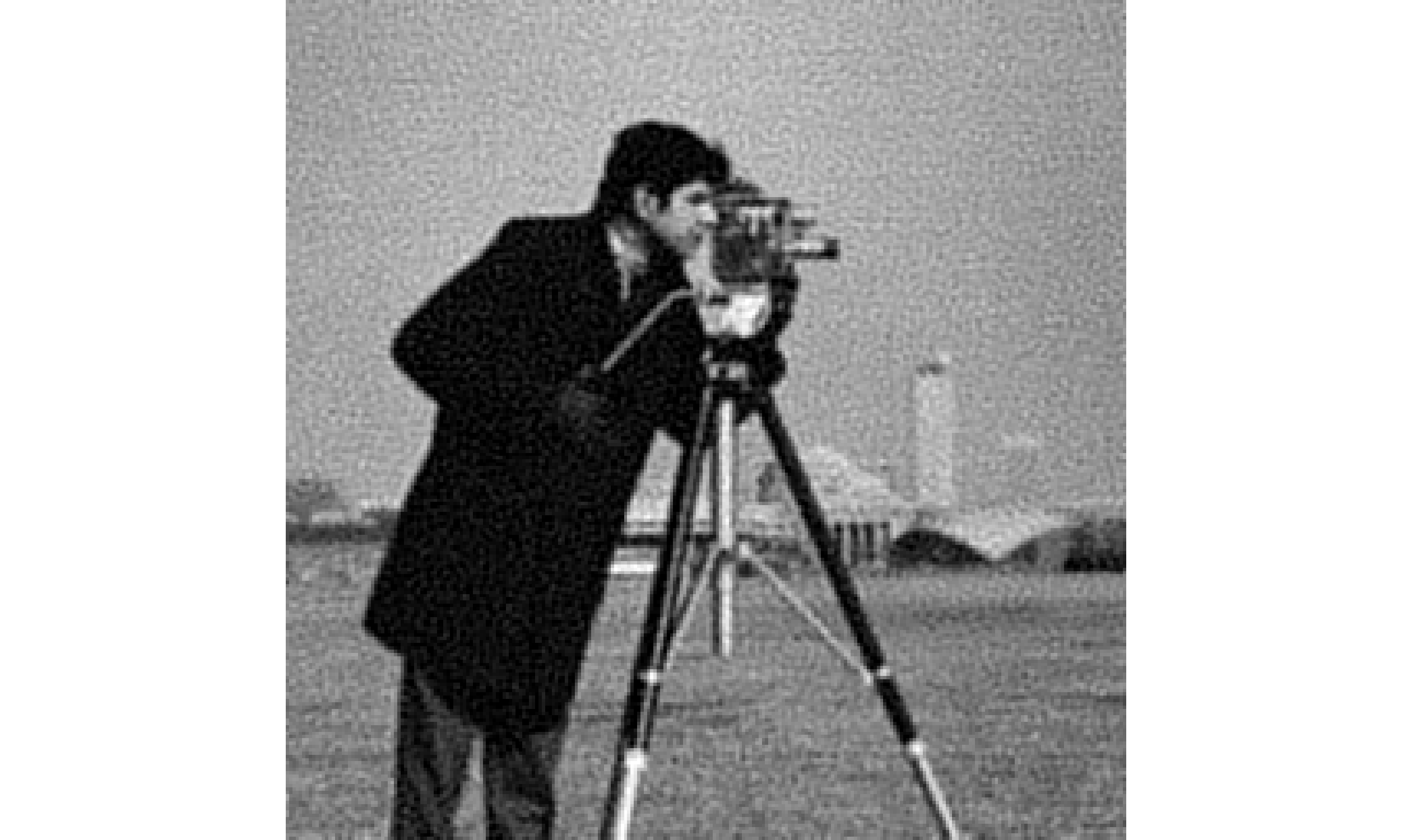}
		\end{minipage}
		\begin{minipage}[b]{.3\linewidth}
			\centering
			\includegraphics[width=\linewidth,trim=280 0 280 0,clip]{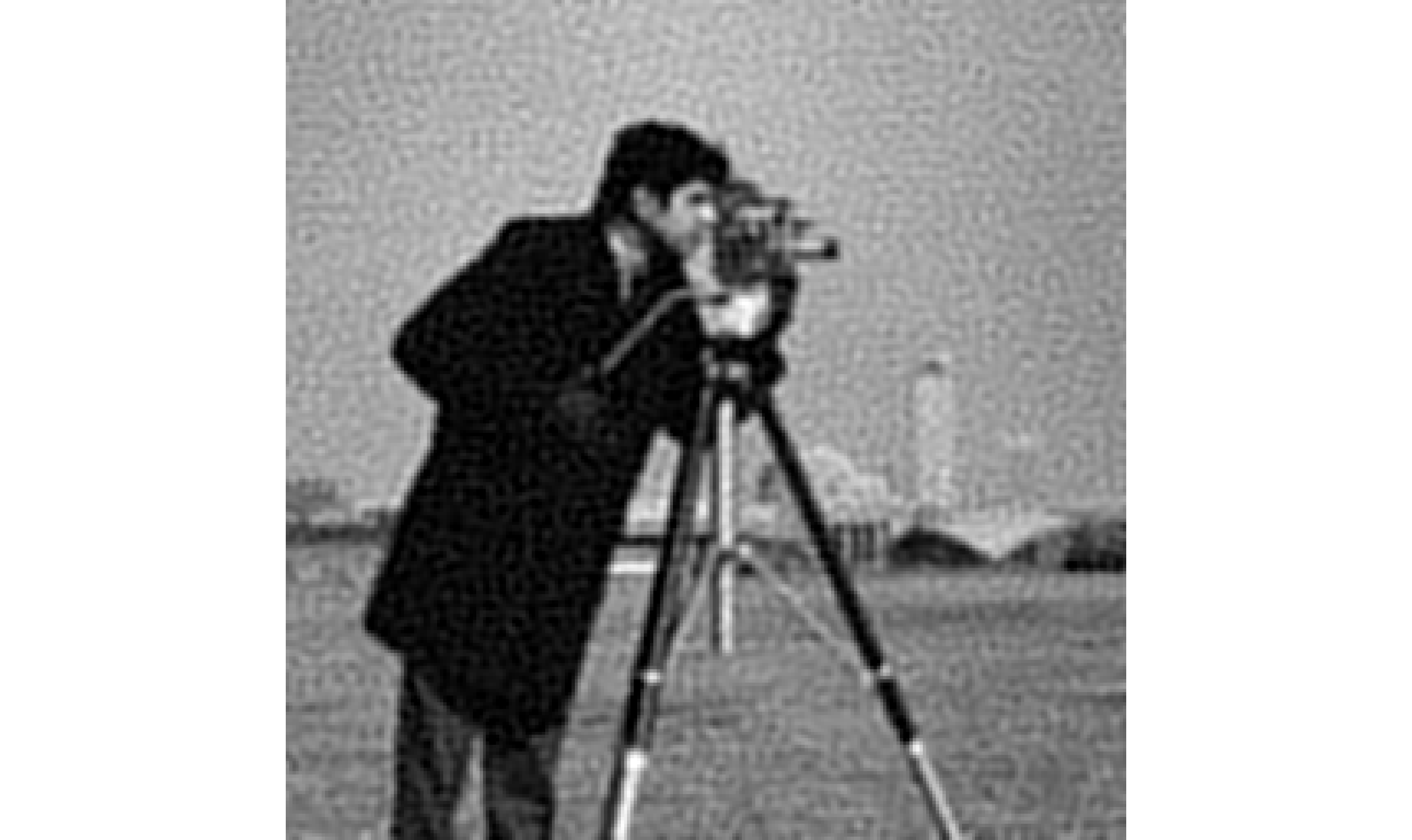}
		\end{minipage}
		\hspace{0.01\linewidth}
	}
	\subfigure
	{
		\begin{minipage}[b]{.3\linewidth}
			\centering
			\includegraphics[width=\linewidth,trim=280 0 280 0,clip]{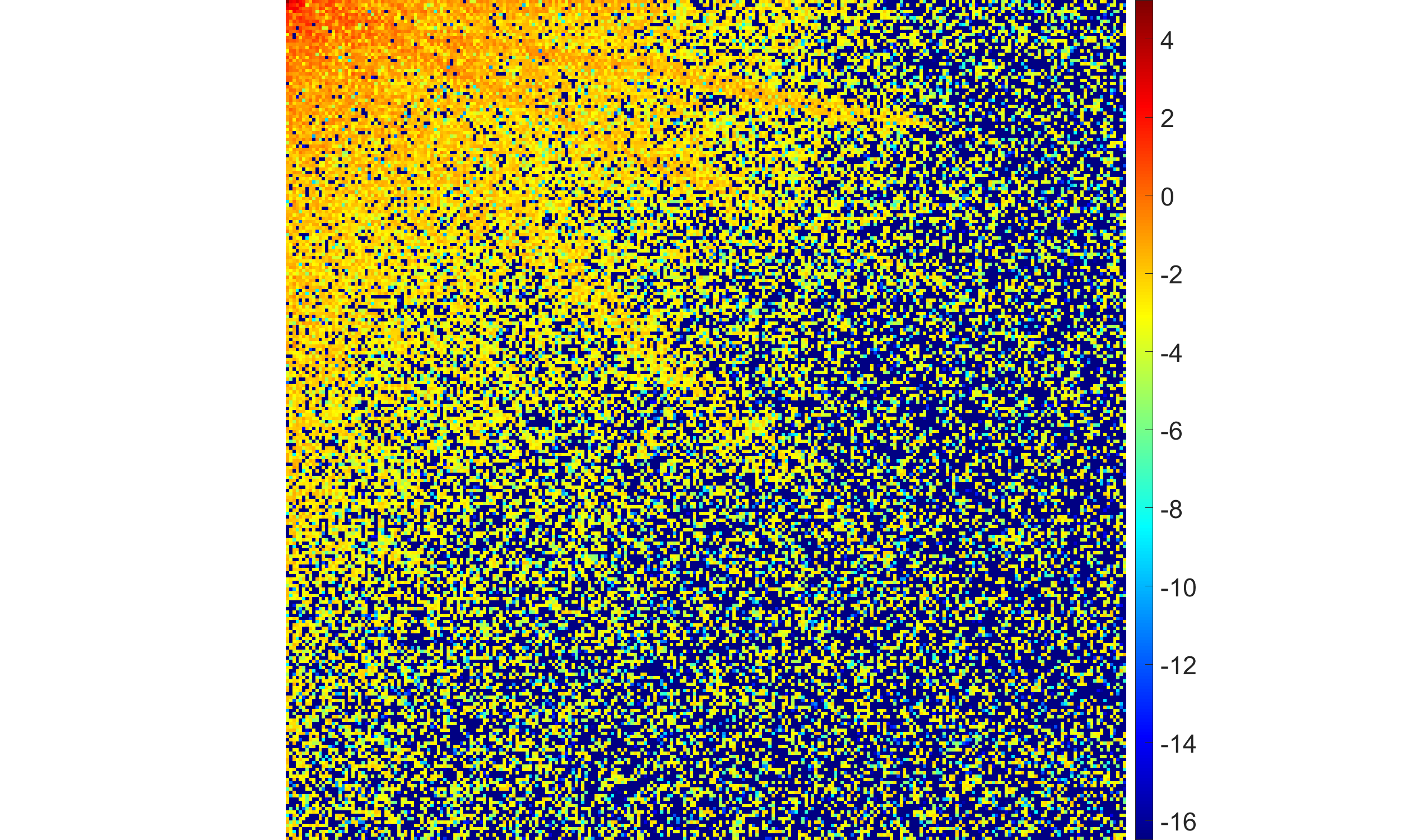}
		\end{minipage}
		\begin{minipage}[b]{.3\linewidth}
			\centering
			\includegraphics[width=\linewidth,trim=280 0 280 0,clip]{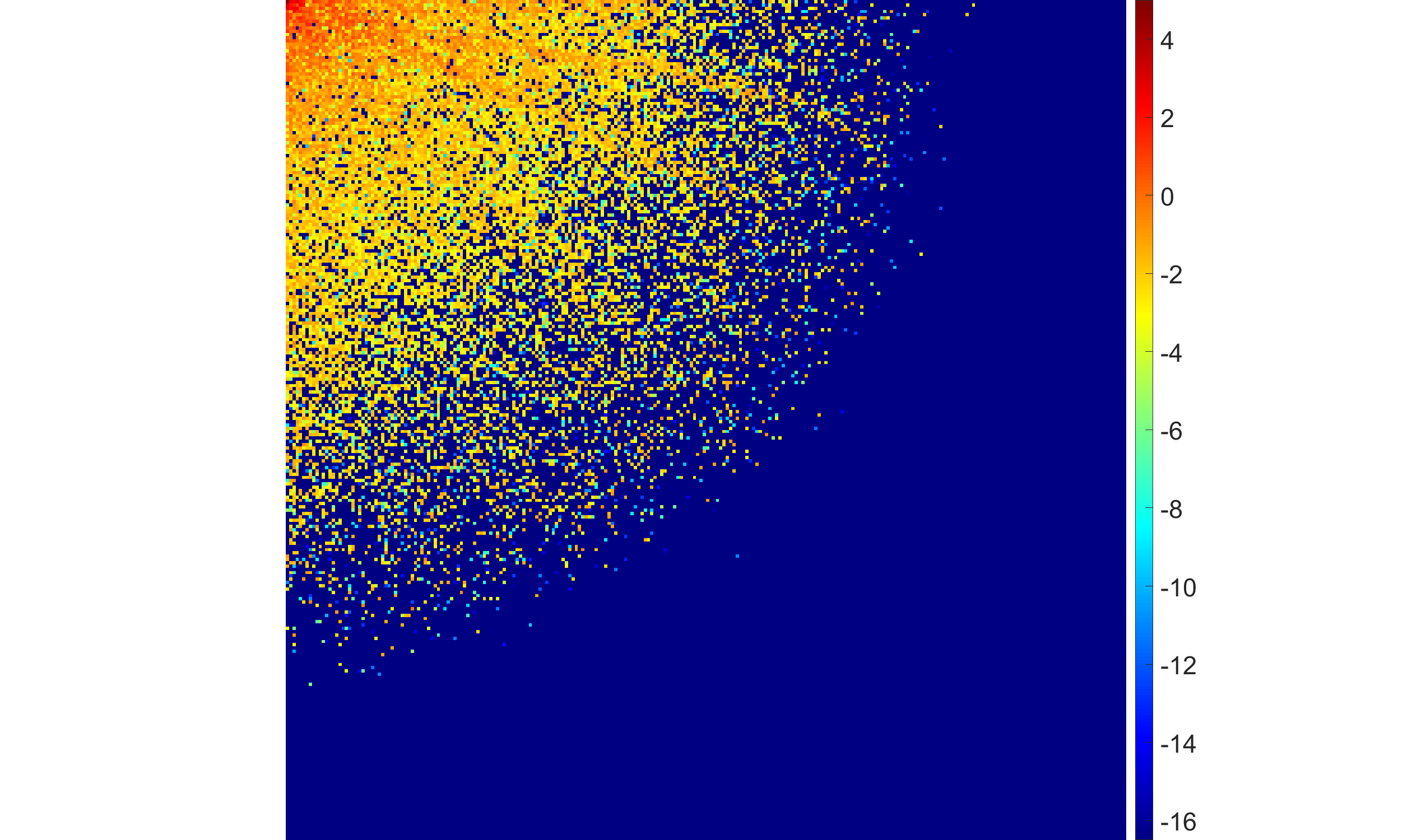}
		\end{minipage}
		\begin{minipage}[b]{.3\linewidth}
			\centering
			\includegraphics[width=\linewidth,trim=280 0 280 0,clip]{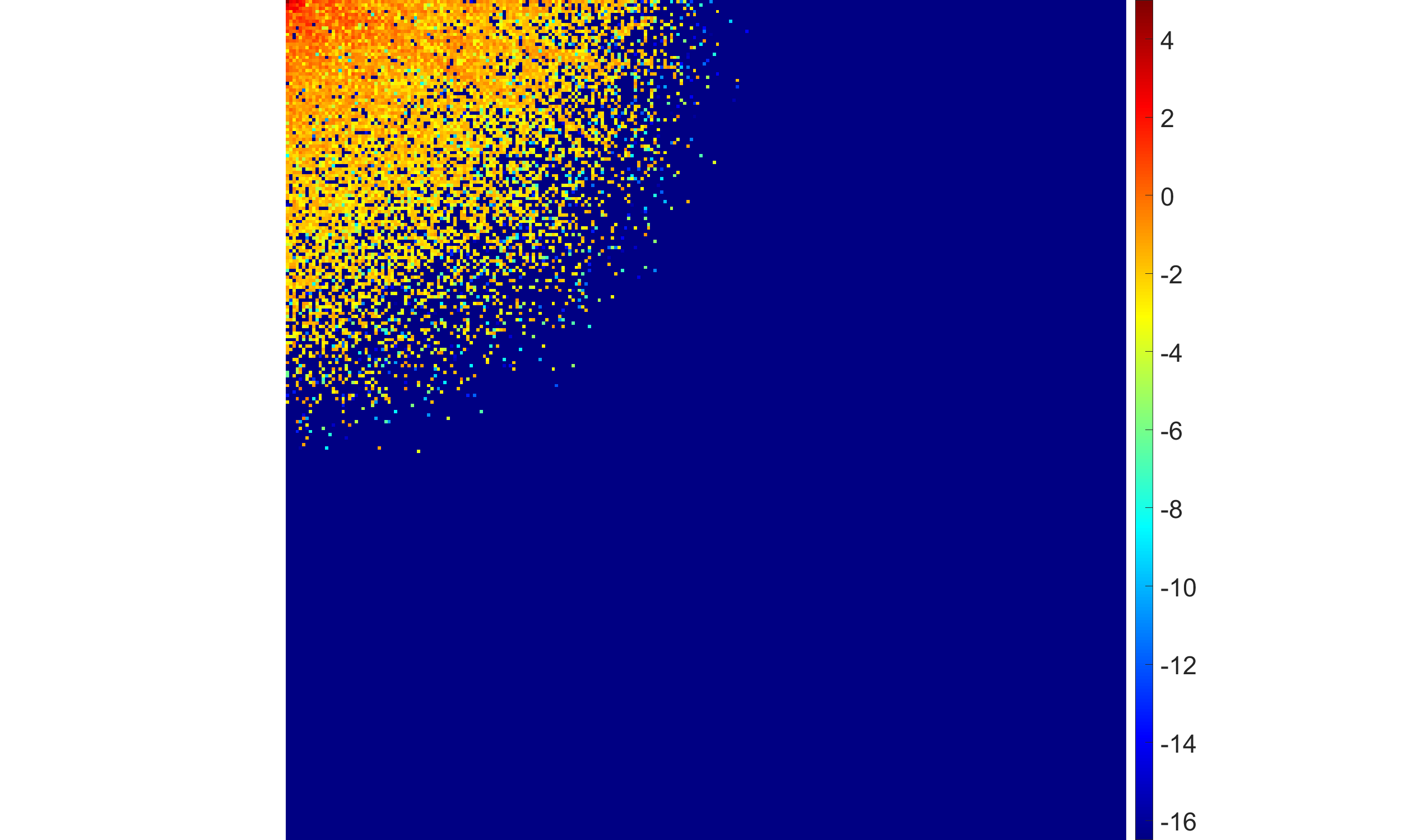}
		\end{minipage}
		\hspace{0.01\linewidth}
	}
	\subfigure
	{
		\begin{minipage}[b]{.3\linewidth}
			\centering
			\includegraphics[width=\linewidth,trim=280 0 280 0,clip]{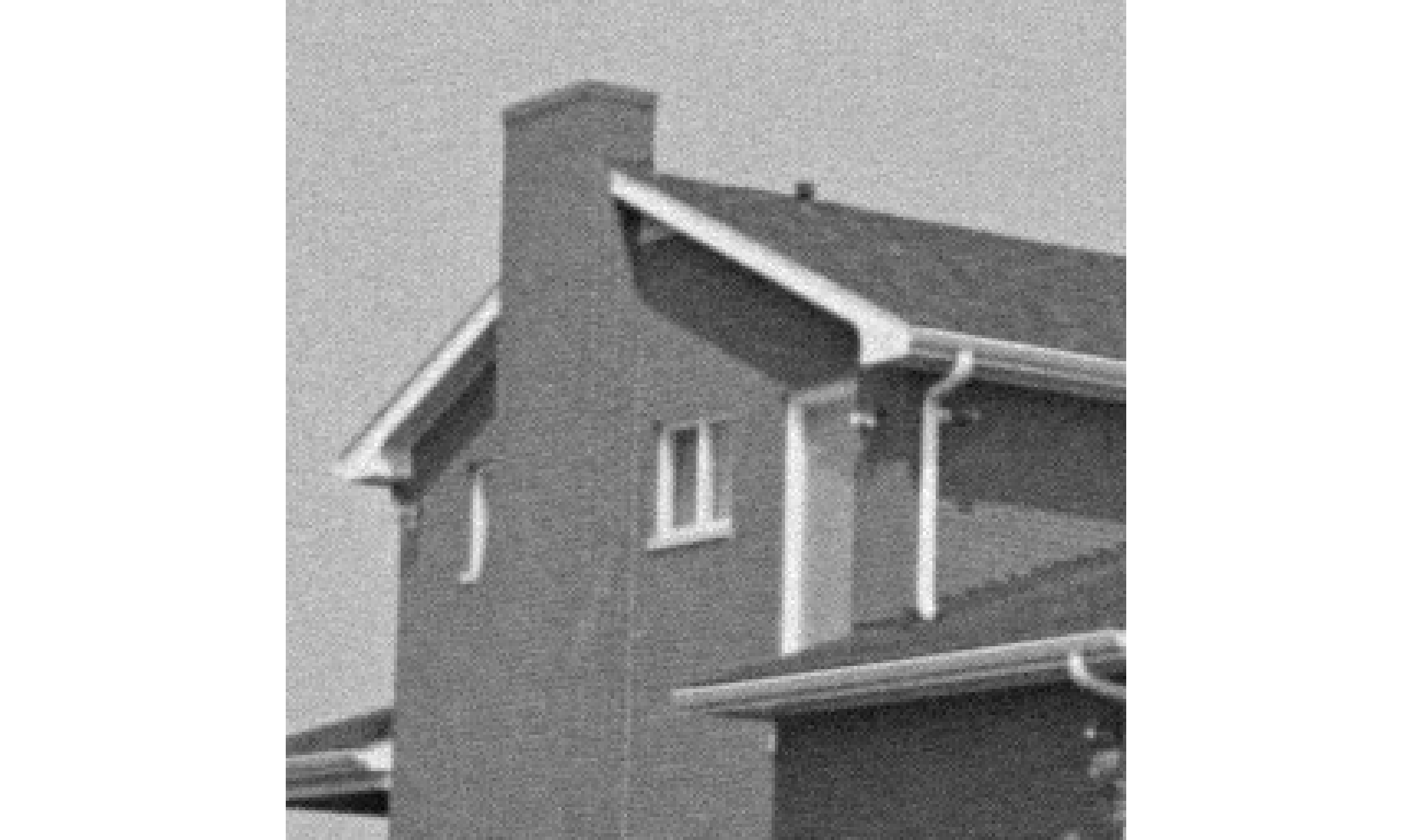}
		\end{minipage}
		\begin{minipage}[b]{.3\linewidth}
			\centering
			\includegraphics[width=\linewidth,trim=280 0 280 0,clip]{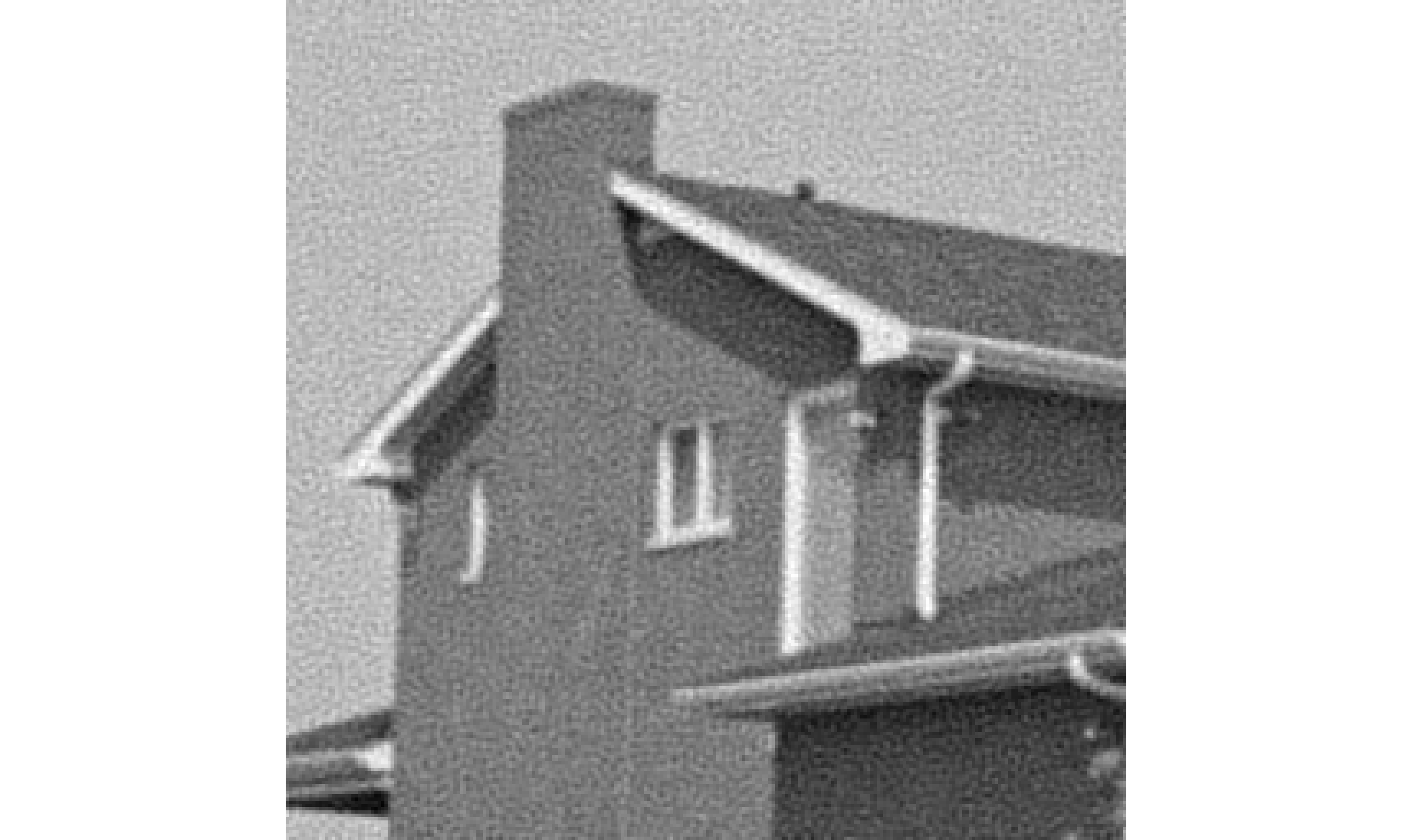}
		\end{minipage}
		\begin{minipage}[b]{.3\linewidth}
			\centering
			\includegraphics[width=\linewidth,trim=280 0 280 0,clip]{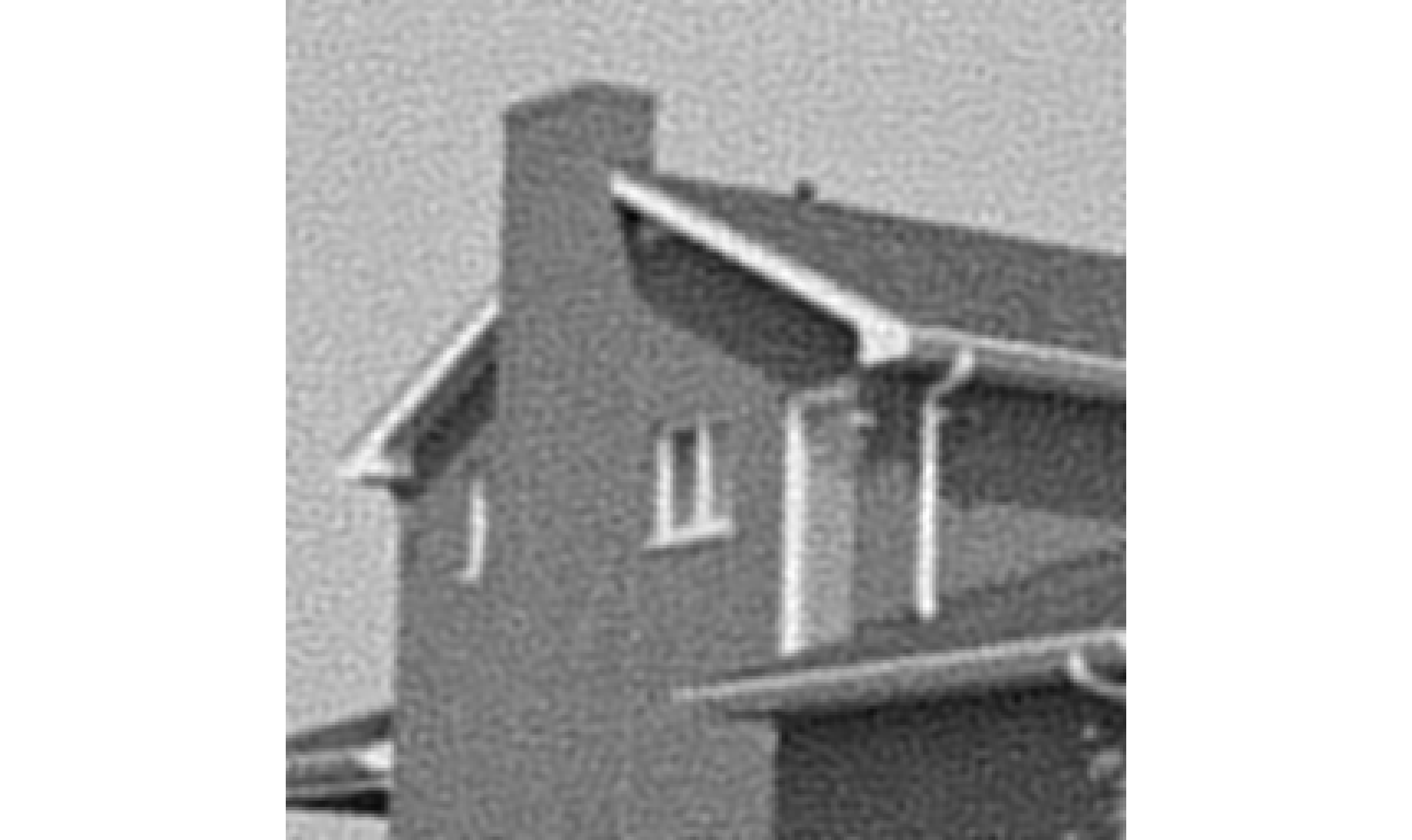}
		\end{minipage}
		\hspace{0.01\linewidth}
	}
	\subfigure
	{
		\begin{minipage}[b]{.3\linewidth}
			\centering
			\includegraphics[width=\linewidth,trim=280 0 280 0,clip]{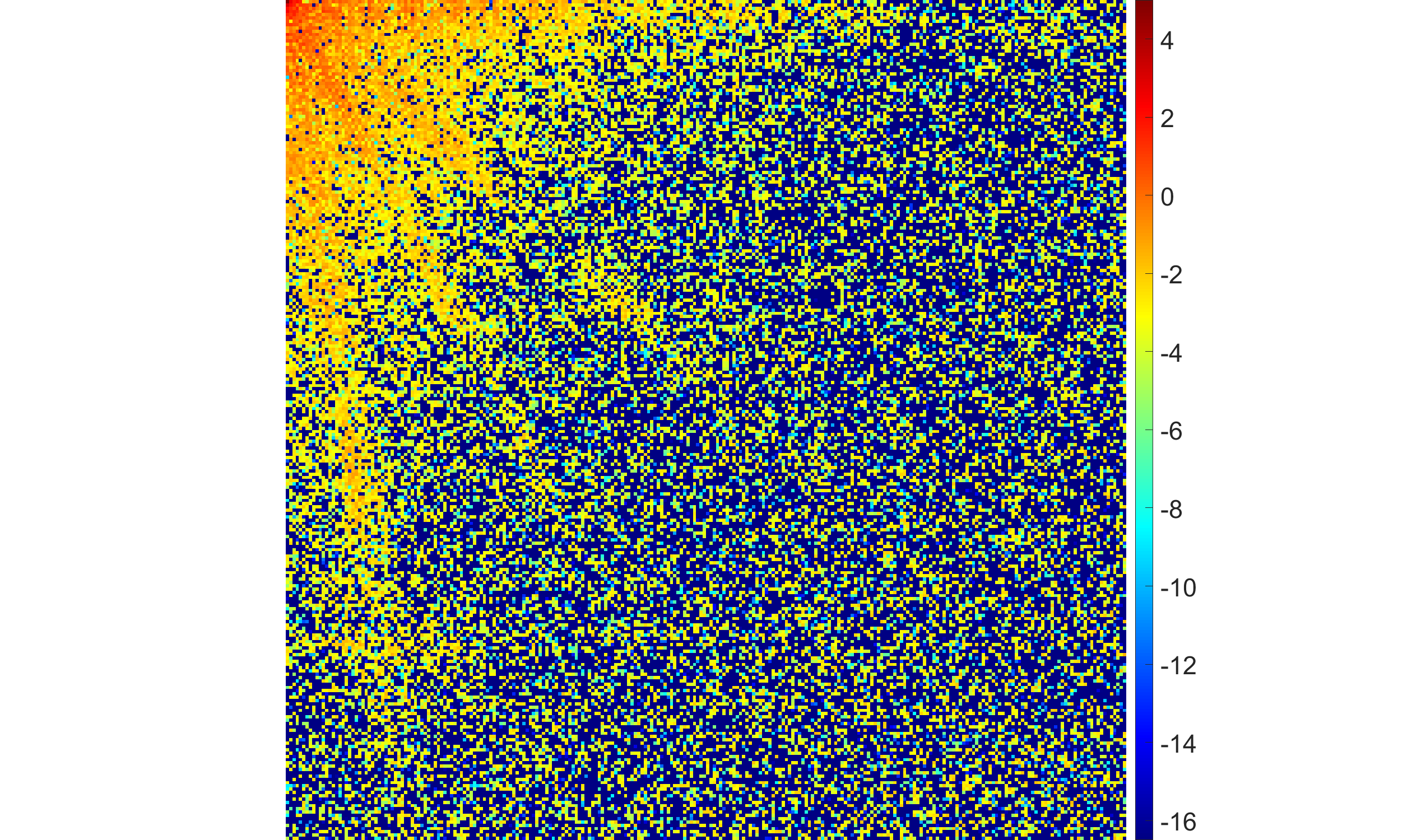}\\
			$\sigma_{ker}=0.5$
		\end{minipage}
		\begin{minipage}[b]{.3\linewidth}
			\centering
			\includegraphics[width=\linewidth,trim=280 0 280 0,clip]{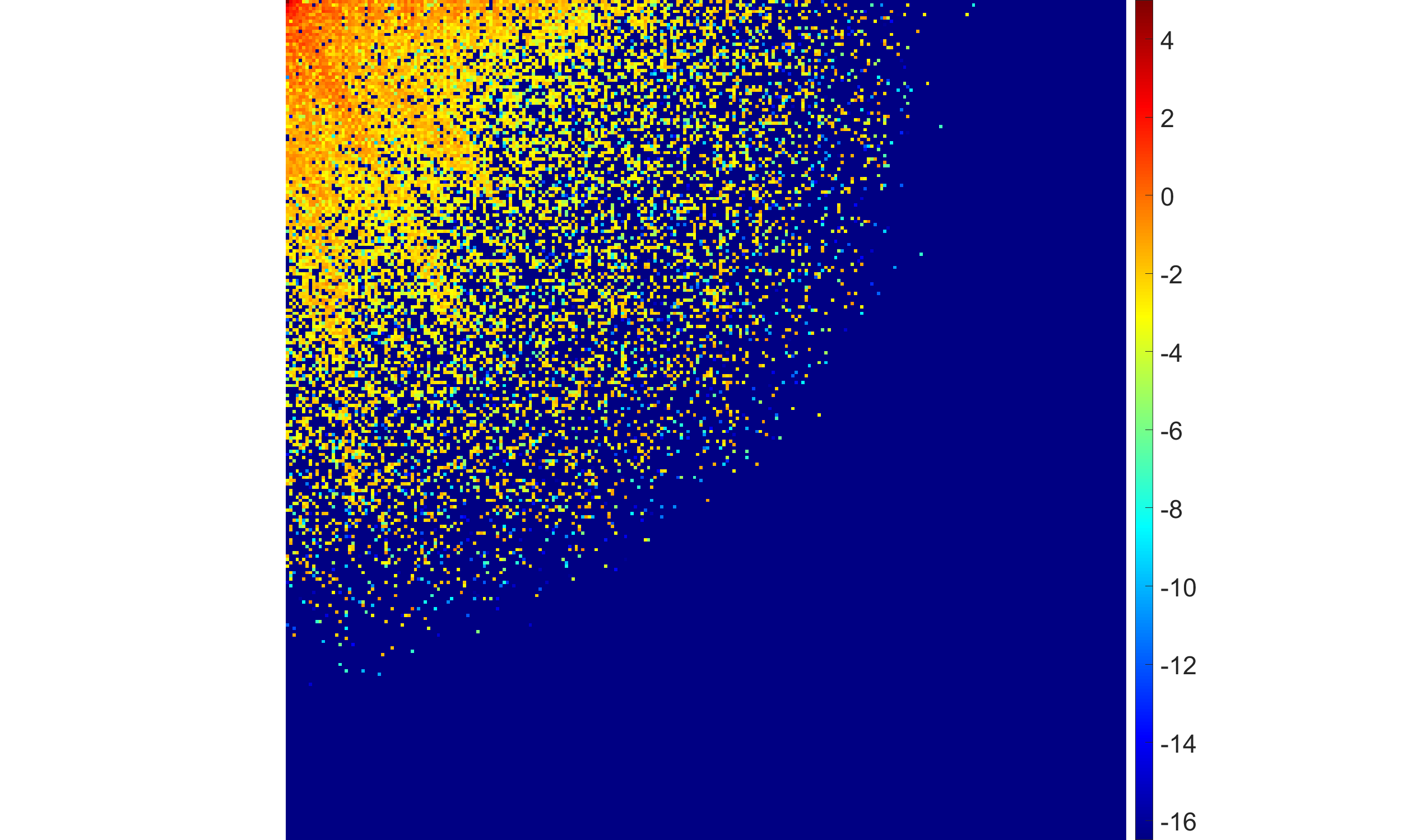}\\
			$\sigma_{ker}=1$
		\end{minipage}
		\begin{minipage}[b]{.3\linewidth}
			\centering
			\includegraphics[width=\linewidth,trim=280 0 280 0,clip]{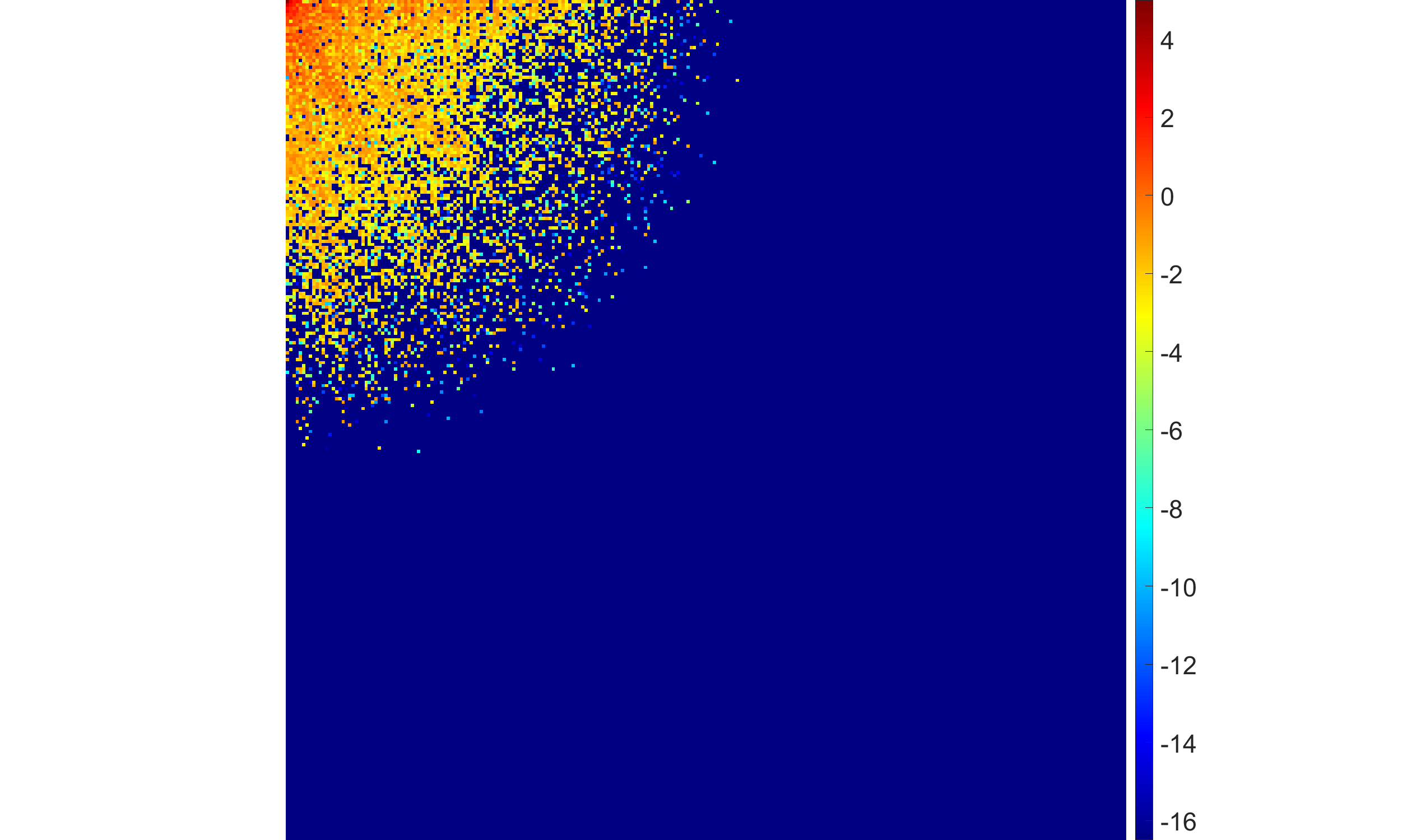}\\
			$\sigma_{ker}=1.5$
		\end{minipage}
		\hspace{0.01\linewidth}
	}
	\caption{The results of EBF with the Laplace hyperprior for restoring images degraded by different Gaussian blurring kernels. Top and third rows: restored images; Second and fourth rows: absolute values of DCT coefficients.}
\end{figure}

The restored results are shown in Figure \ref{fig4.restore}. We can see that in a rather well-posed case with $\sigma_{ker}=0.5$, EBF provides accurate restoration with coefficients concentrated in the low-frequency region forming a distinct fan-shaped pattern to preserve most details. As the level of ill-posedness increases, the sector-shaped areas of the DCT coefficient maps shrink.  That is because the stronger the ill-posedness of the problem, the more severely the high-frequency information is corrupted by noise. Therefore, the sparsity of the restored coefficients gradually increases. Furthermore, because more and more high-frequency coefficients are suppressed to zero, the restored image loses more details as $\sigma_{ker}$ increases. In \cref{tab:restoration}, we list the relative errors of the restored images and the sparsity rates of the DCT coefficients. It is clear that the sparsity rate of the restored result increases as the problem becomes more ill-posed due to amplified noise on the high-frequency. Moreover, more ill-posed problem is more challenging to solve, leading to larger relative error. We observe that for the \textit{House} test image the relative error in the case of $\sigma_{ker}=1.5$ is slightly smaller than the case of $\sigma_{ker}=1.0$. This is due to the trade-off between suppressing noise and preserving details. 


\subsubsection{Noise}
In this test, we fix $\sigma_{ker}=1$ and vary the noise level to assess the performance of EBF with a half-Laplace hyperprior $(\beta=0.1)$. Specifically, we test three levels of Gaussian noise: $5\%$, $10\%$, and $20\%$.

\begin{figure}[t]\label{fig5.noise}
	\centering
	\subfigure
	{
		\begin{minipage}[b]{.3\linewidth}
			\centering
			\includegraphics[width=\linewidth,trim=280 0 280 0,clip]{camera5lax.png}
		\end{minipage}
		\begin{minipage}[b]{.3\linewidth}
			\centering
			\includegraphics[width=\linewidth,trim=280 0 280 0,clip]{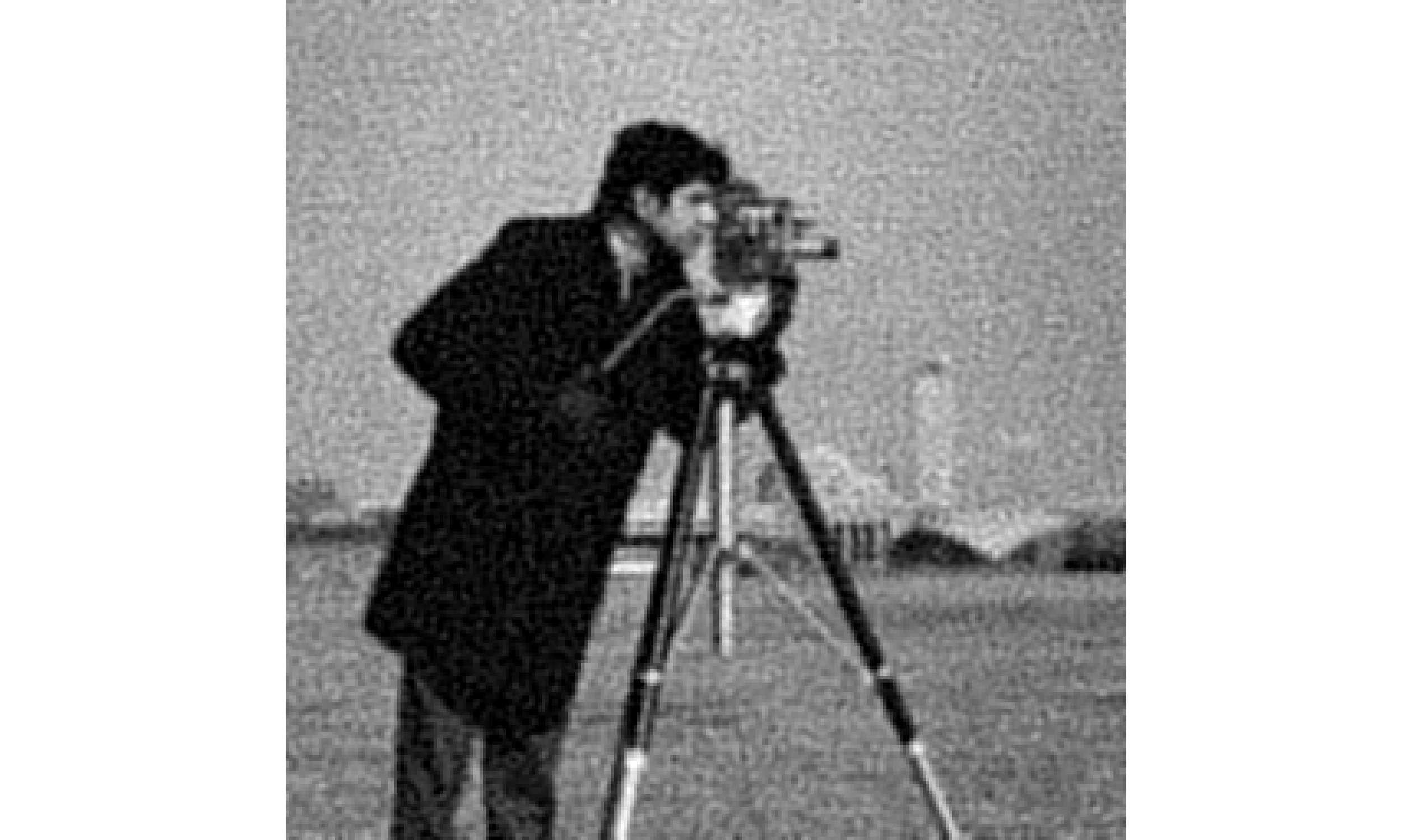}
		\end{minipage}
		\begin{minipage}[b]{.3\linewidth}
			\centering
			\includegraphics[width=\linewidth,trim=280 0 280 0,clip]{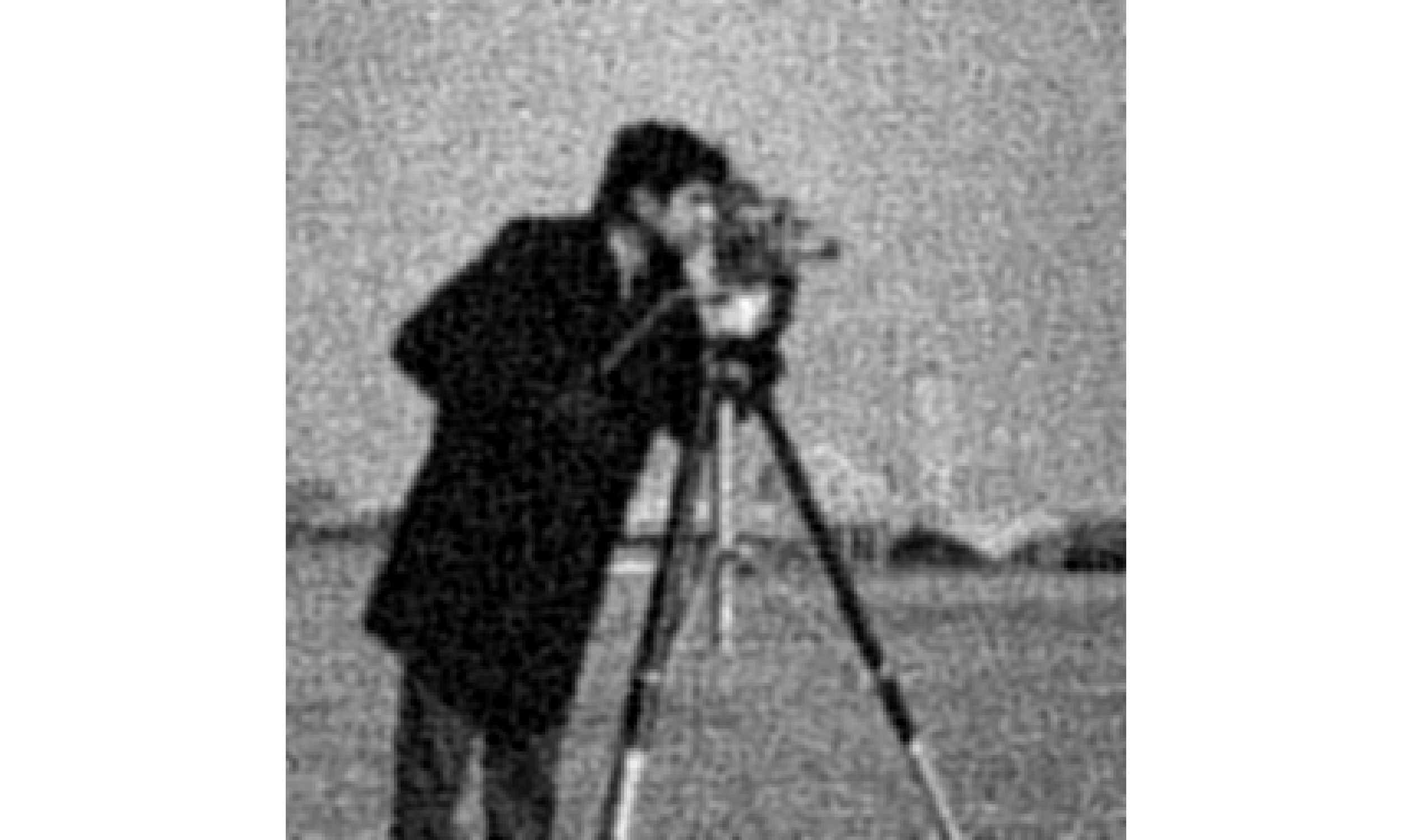}
		\end{minipage}
		\hspace{0.01\linewidth}
	}
	\subfigure
	{
		\begin{minipage}[b]{.3\linewidth}
			\centering
			\includegraphics[width=\linewidth,trim=280 0 280 0,clip]{camera5lac.png}
		\end{minipage}
		\begin{minipage}[b]{.3\linewidth}
			\centering
			\includegraphics[width=\linewidth,trim=280 0 280 0,clip]{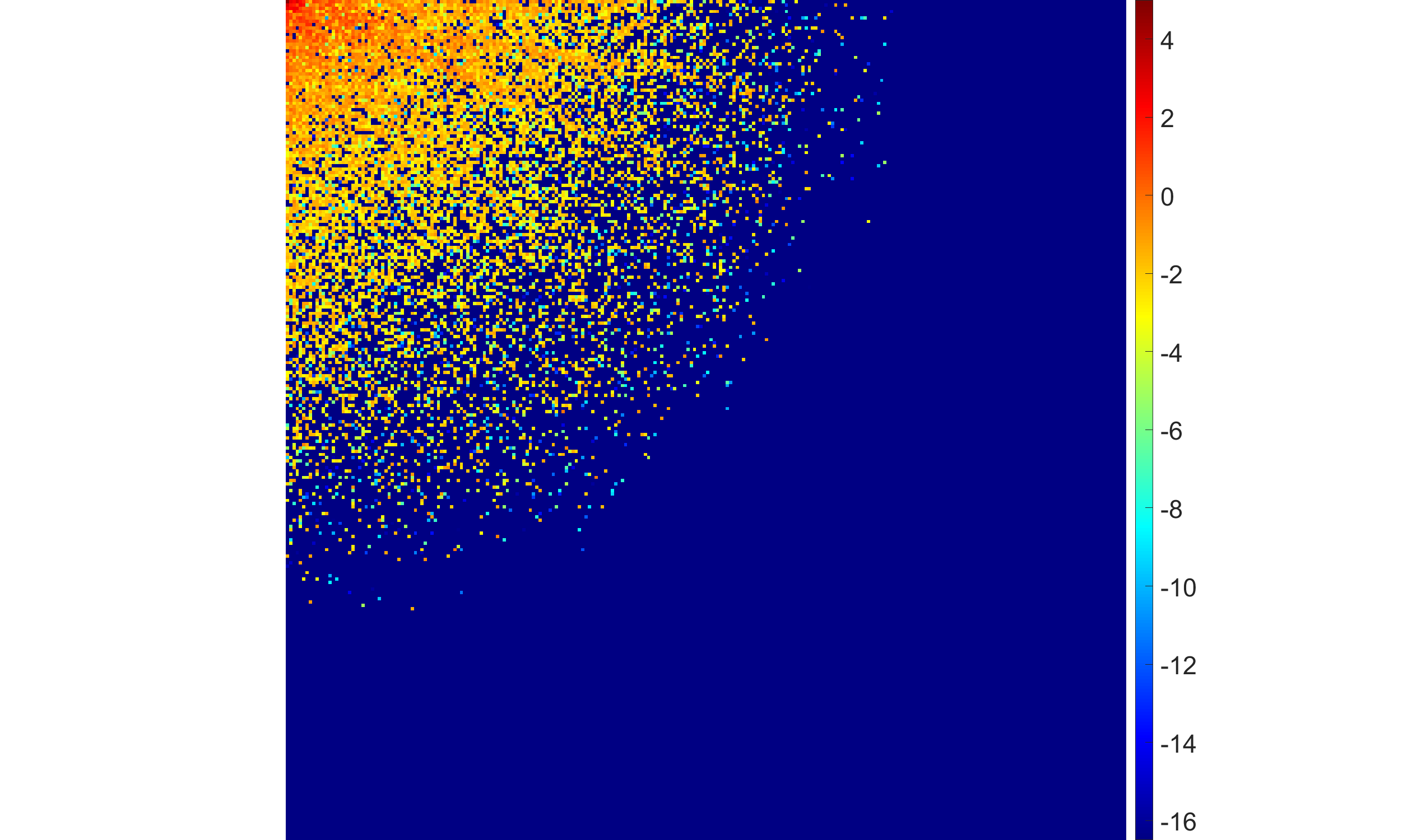}
		\end{minipage}
		\begin{minipage}[b]{.3\linewidth}
			\centering
			\includegraphics[width=\linewidth,trim=280 0 280 0,clip]{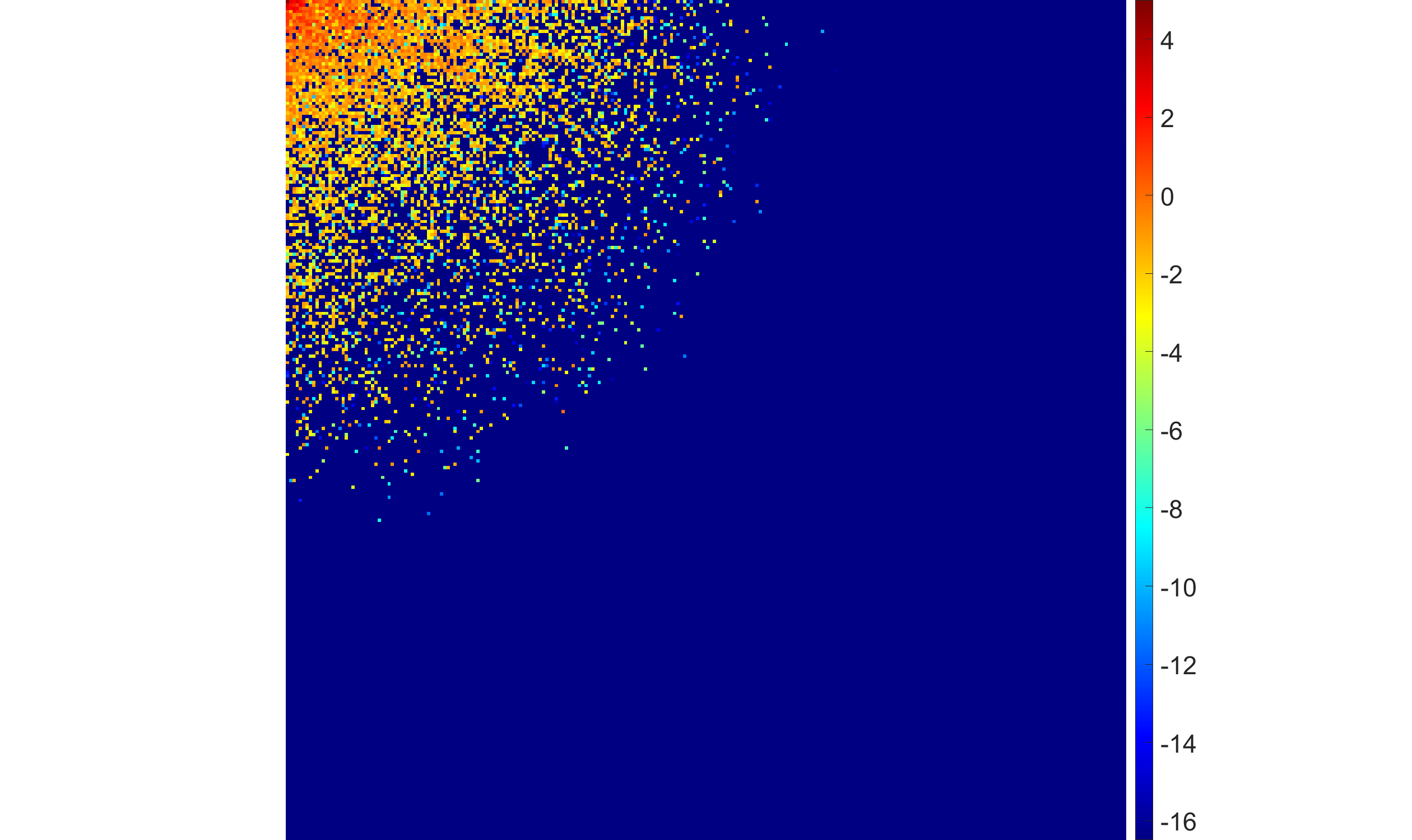}
		\end{minipage}
		\hspace{0.01\linewidth}
	}
	\subfigure
	{
		\begin{minipage}[b]{.3\linewidth}
			\centering
			\includegraphics[width=\linewidth,trim=280 0 280 0,clip]{house5lax.png}
		\end{minipage}
		\begin{minipage}[b]{.3\linewidth}
			\centering
			\includegraphics[width=\linewidth,trim=280 0 280 0,clip]{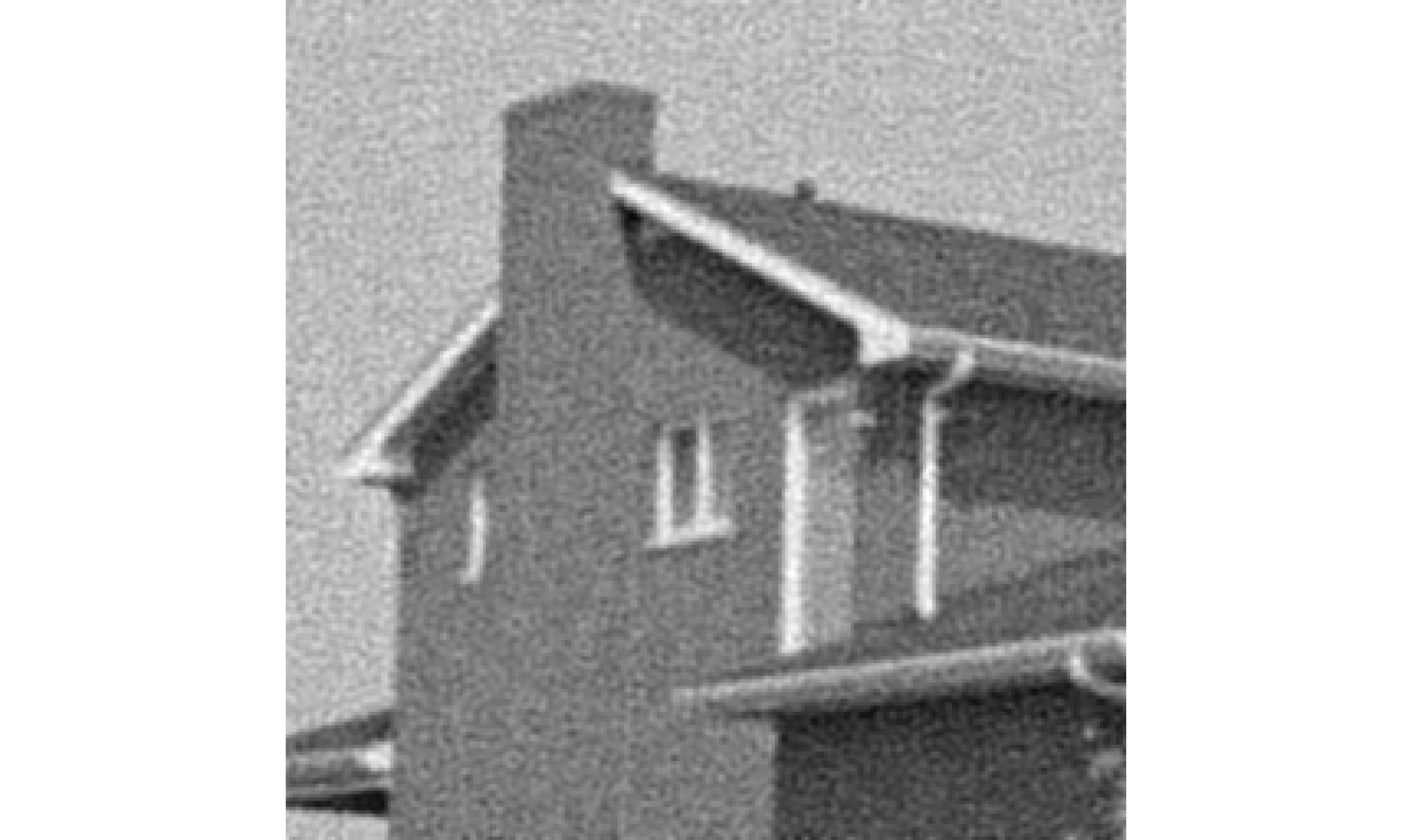}
		\end{minipage}
		\begin{minipage}[b]{.3\linewidth}
			\centering
			\includegraphics[width=\linewidth,trim=280 0 280 0,clip]{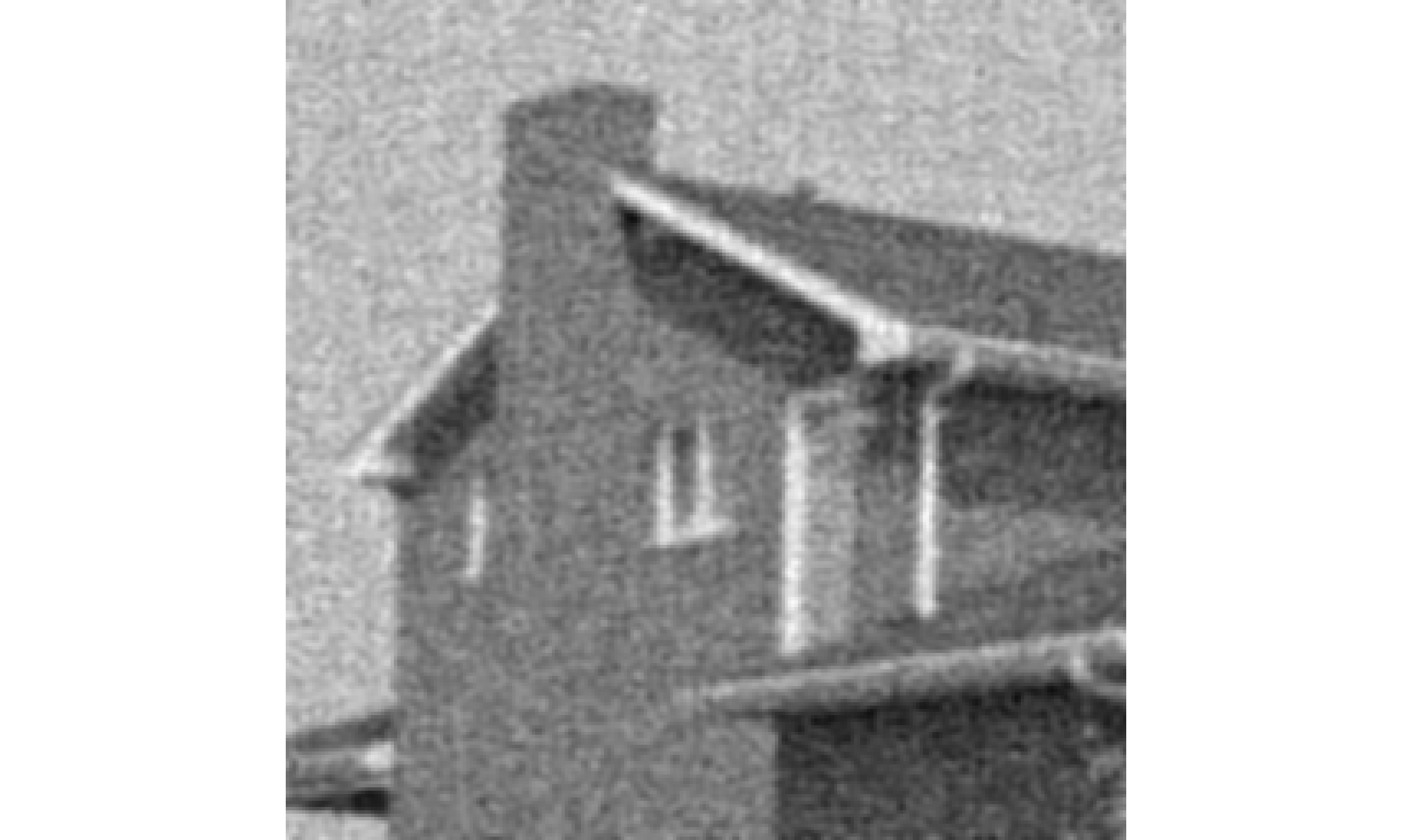}
		\end{minipage}
		\hspace{0.01\linewidth}
	}
	\subfigure
	{
		\begin{minipage}[b]{.3\linewidth}
			\centering
			\includegraphics[width=\linewidth,trim=280 0 280 0,clip]{house5lac.png}\\
			$5\%$
		\end{minipage}
		\begin{minipage}[b]{.3\linewidth}
			\centering
			\includegraphics[width=\linewidth,trim=280 0 280 0,clip]{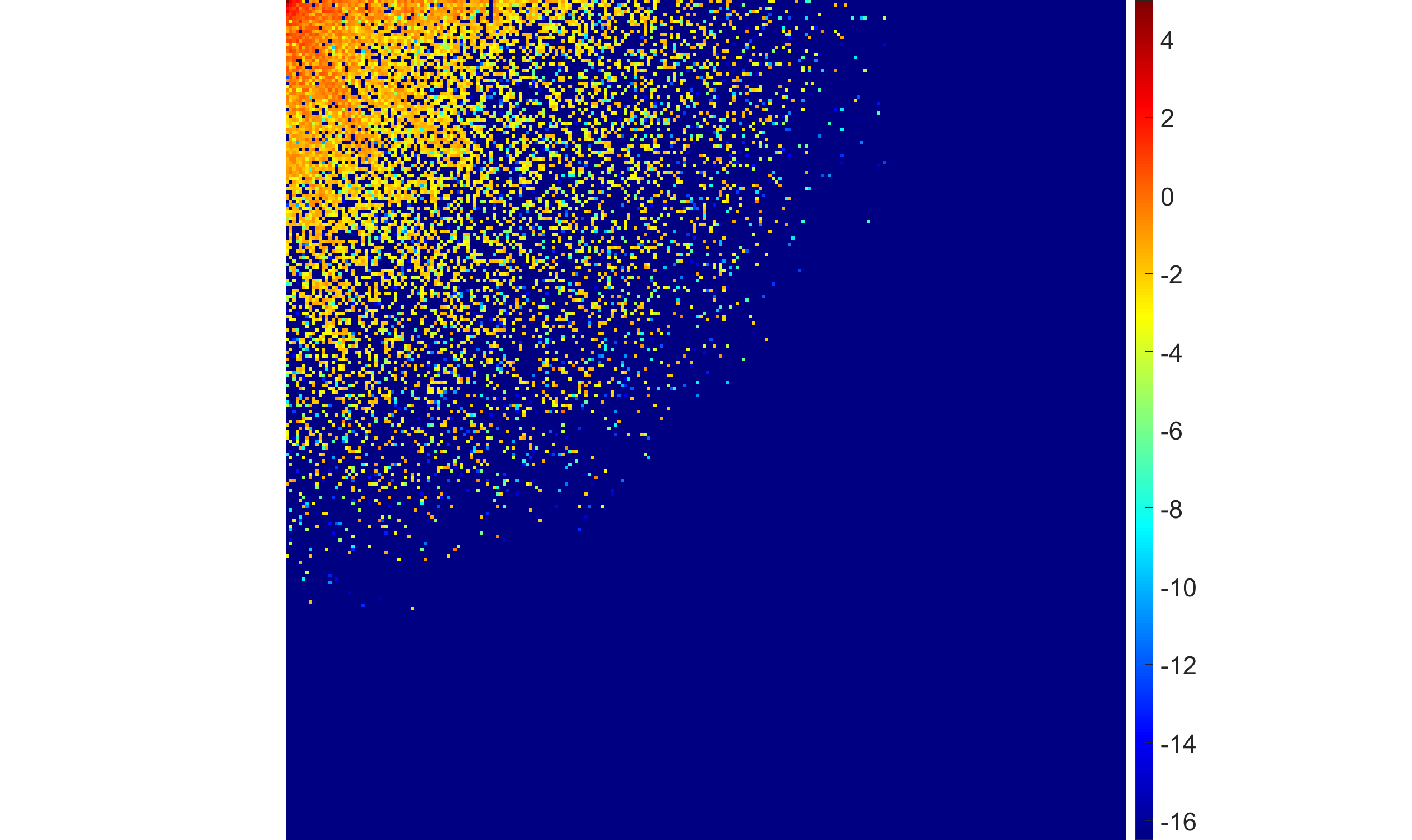}\\
			$10\%$
		\end{minipage}
		\begin{minipage}[b]{.3\linewidth}
			\centering
			\includegraphics[width=\linewidth,trim=280 0 280 0,clip]{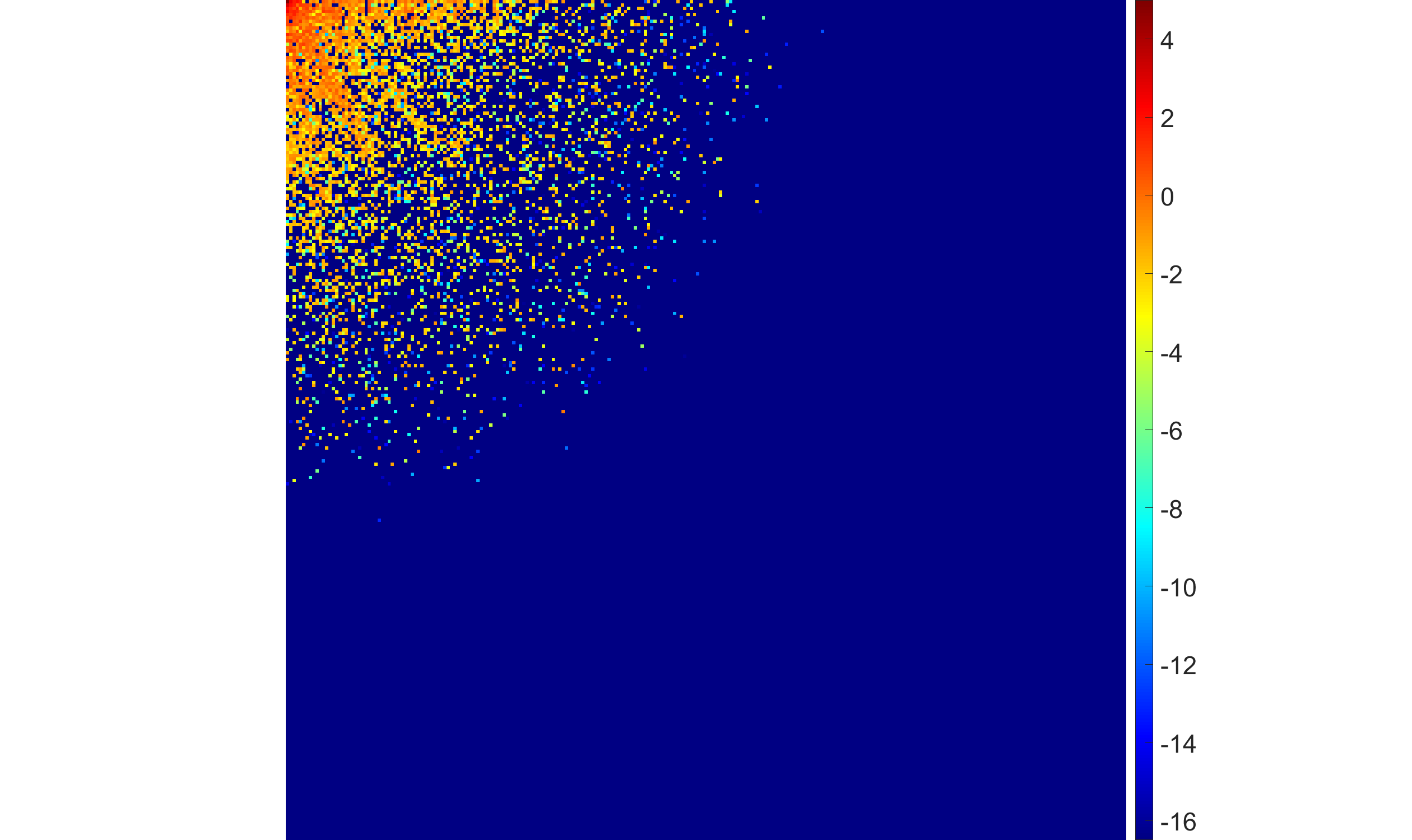}\\
			$20\%$
		\end{minipage}
		\hspace{0.01\linewidth}
	}
	\caption{The results of EBF with the half-Laplace hyperprior for restoring images degraded by different noise levels. Top and third rows: restored images; Second and fourth rows: absolute values of DCT coefficients.}
\end{figure}

The restored results are given in Figure \ref{fig5.noise}. 
It shows that as the noise level increases, the sparsity of the restored DCT coefficients gradually increases and more high-frequency coefficients are turned to zero. In low noise level cases, more the high-frequency DCT coefficients contribute to the restoration, therefore the restored images preserve most of the details. However, in high noise level cases, in order to suppress the influence of the noise most of high-frequence coefficients become zero, so some details in the restored image are lost and some compression artifacts appear. In \cref{tab:restoration} we list the relative errors of the restored images and the sparsity rates of the DCT coefficients. It is obvious that the relative error and the sparsity rate increase as the noise level increases. It confirms that EBF with a sparsity-promotting hyperprior can automatically adjust the sparsity with respect to the noise level to obtain a good balance between suppressing noise and preserving details.

\begin{table}[htbp]
	\centering
	\caption{Restoration performance under different levels of ill-posedness (with 5\% noise) and different noise levels (with $\sigma_{ker} = 0.1$).}
	\renewcommand{\arraystretch}{1.2}
	\resizebox{\textwidth}{!}{
	\begin{tabular}{lllccc}
		\toprule
		Test type & Image & Parameter&   & Relative error & Sparsity (\%) \\
		\midrule
		\multirow{6}{*}{Ill-posedness} 
		& \multirow{3}{*}{\textit{Cameraman}} & \multirow{3}{*}{$\sigma_{\mathrm{ker}}$} 
		& 0.5 & 0.0627 & 41.52 \\ 
		&  &  & 1.0 & 0.0909 & 77.62 \\
		&  &  & 1.5 & 0.1006  & 88.49   \\
		\cmidrule(lr){2-6}
		& \multirow{3}{*}{\textit{House}}     & \multirow{3}{*}{$\sigma_{\mathrm{ker}}$} & 0.5& 0.0550   & 49.32   \\
		&  &  & 1.0 & 0.0684   & 81.20   \\
		&  &  & 1.5 & 0.0627   & 90.02   \\
		\midrule
		\multirow{6}{*}{Noise level} 
		& \multirow{3}{*}{\textit{Cameraman}} & \multirow{3}{*}{Noise level (\%)} & 5  & 0.0909 & 77.62 \\
		&  & & 10 & 0.1055 & 84.73 \\
		&  & & 20 & 0.1273 & 90.99 \\
		\cmidrule(lr){2-6}
		& \multirow{3}{*}{\textit{House}}     & \multirow{3}{*}{Noise level (\%)} & 5  & 0.0684 & 81.20 \\
		&  &       & 10 & 0.0751 & 87.28 \\
		&  &       & 20 & 0.0870 & 92.58 \\
		\bottomrule
	\end{tabular}}
	\label{tab:restoration}
\end{table}

\subsection{Convergence of PALM}
In the last test, we study the convergence of \cref{alg} numerically. Here, we use EBF equippied with a half-Laplace hyperprior to solve the image deblurring problem with $10\%$ noise and $\sigma_{ker}=1$. We test our PALM algorithm with different choices of $\beta$ in the half-Laplace hyperprior. The convergence plots are shown in Figure \ref{fig.converge}, where we plot the values of the objective function defined in \eqref{eq:eqxgamma} in terms of iterations. It is obvious that the objective function values decrease monotonically, and the algorithm converges. Furthermore, the algorithm converges faster with smaller $\beta$, since in this case the half-Laplace hyperprior is closer to the $\delta$ distribution and dominates the posterior.         

\begin{figure}[t]\label{fig.converge}
	\centering
	\subfigure
	{
		\begin{minipage}[b]{.45\linewidth}
			\centering
			\includegraphics[width=\linewidth]{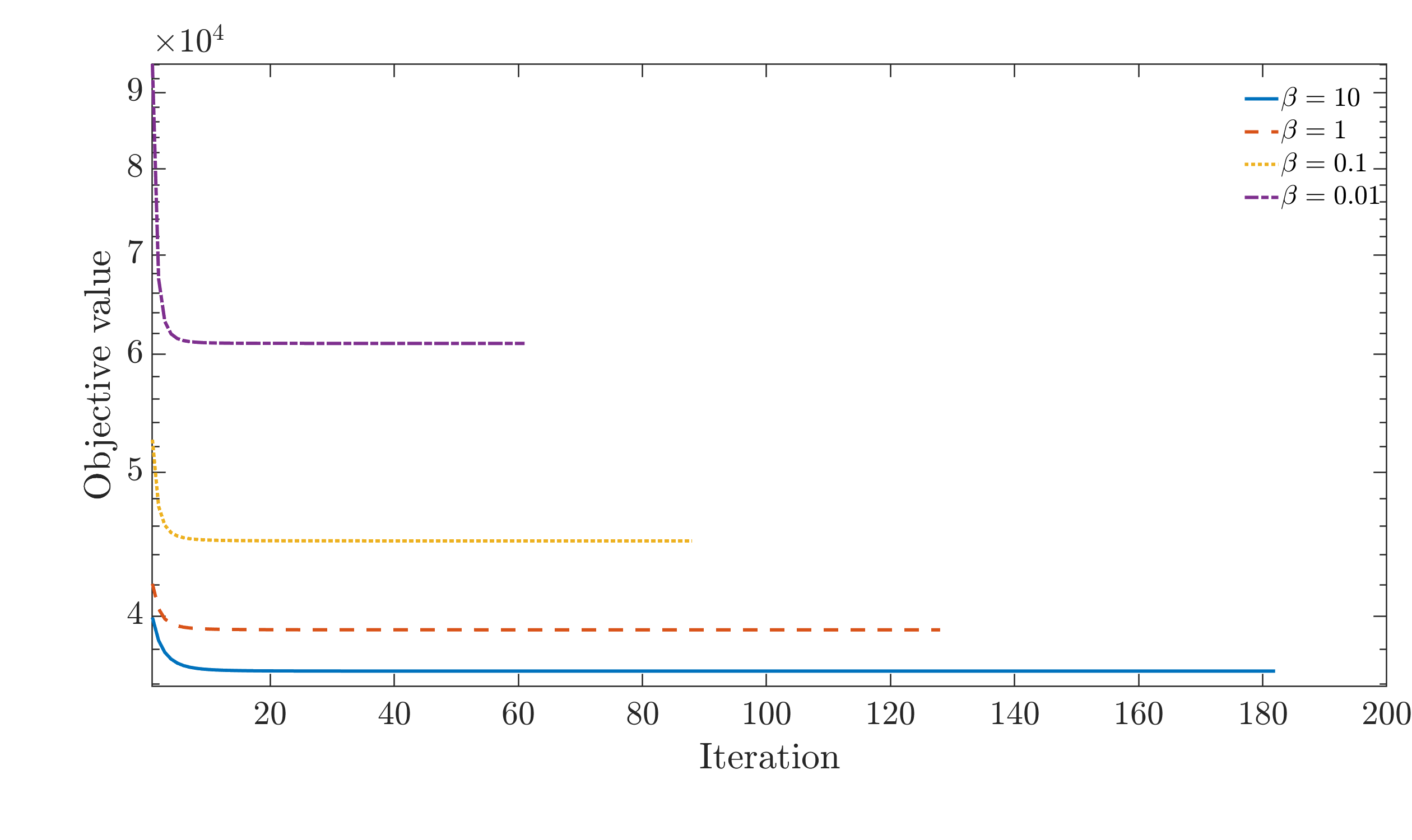}
		\end{minipage}
	}
	\subfigure
	{
		\begin{minipage}[b]{.45\linewidth}
			\centering
			\includegraphics[width=\linewidth]{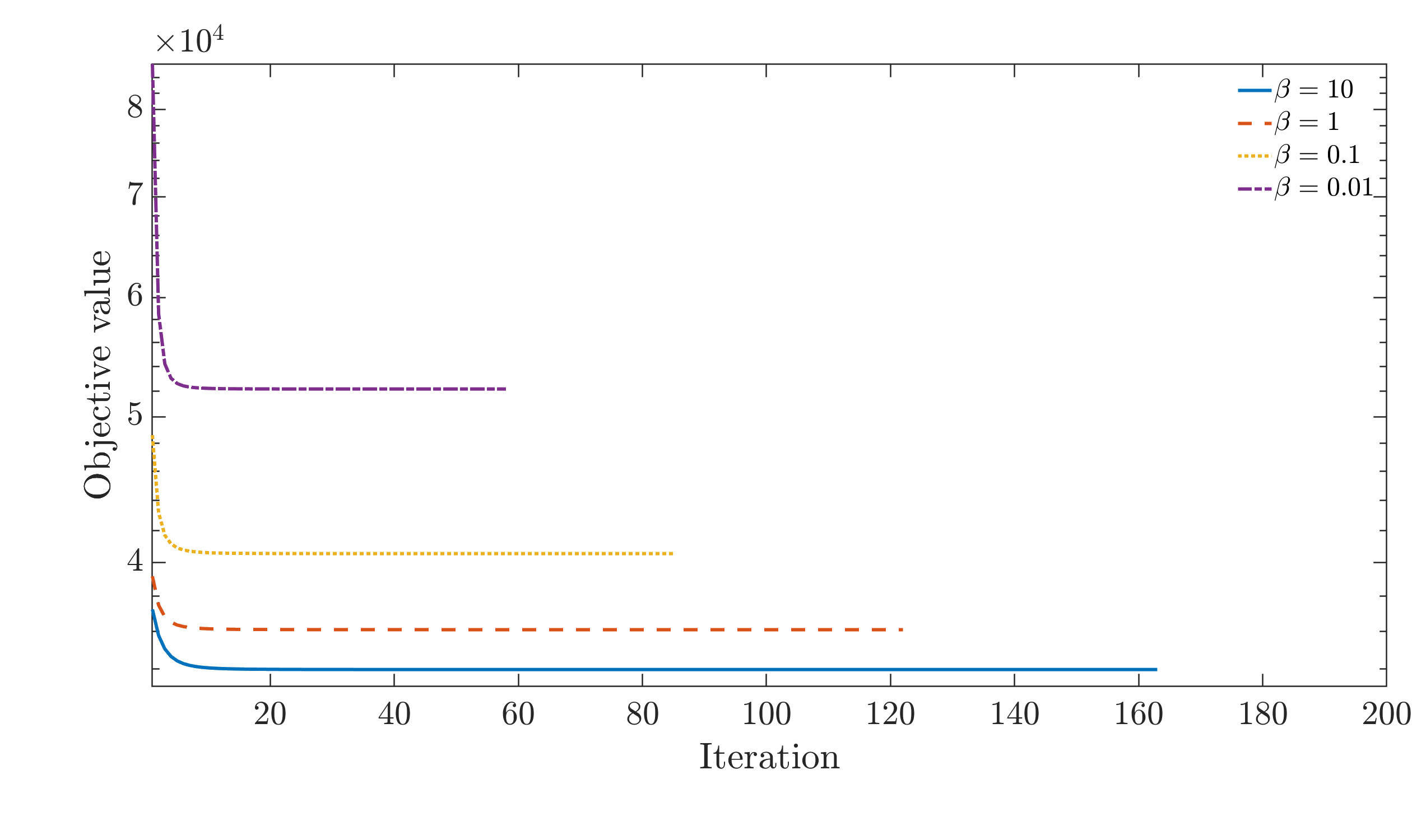}
		\end{minipage}
	}
	\caption{The convergence results of PALM in \cref{alg}. Left: Cameraman; Right: House.}
\end{figure}

\section{Conclusions}
\label{sec:conclusions}
This work investigated the fundamental question of how different hyperpriors influence sparsity and stability within the empirical Bayes framework. We provided a comprehensive theoretical analysis that study the choice of the hyperprior to both the sparsity and the local optimality of the resulting solutions. In particular, we demonstrated that certain hyperpriors, such as the half-Laplace prior and half-generalized Gaussian prior with shape parameter $0<p<1$, can not only promote sparsity in the estimated hyperparameters, but also stabilize the solution with respect to measurement noise. To address the resulting nonconvex optimization problem, we proposed a PALM algorithm and established its convergence under both convex and concave hyperpriors. Extensive numerical experiments on 2D image deblurring tasks were conducted to validate our theoretical findings. The results confirm that the choice of hyperprior plays a decisive role in balancing sparsity, stability, and restoration accuracy, especially under increasing levels of ill-posedness and noise.

Several directions remain open for future work. First, it is important to extend the convergence theory of PALM beyond the convex or concave setting, to accommodate more general nonconvex hyperpriors. Second, we plan to investigate the sparsity-promoting properties of broader classes of group or heavy-tailed hyperpriors, such as the Student’s t prior. Finally, to further improve image restoration quality, we will explore alternative sparsifying transforms beyond DCT, and develop fast solvers tailored to these bases within EBF.
\appendix

\bibliographystyle{siamplain}
\bibliography{references}
\end{document}